\documentclass[Afour,sageh,times]{sagej}

\usepackage{moreverb,url}
\usepackage{amsmath, amsthm}

\usepackage[colorlinks,bookmarksopen,bookmarksnumbered,citecolor=red,urlcolor=red]{hyperref}
\newtheorem{thm}{Theorem}
\newtheorem{proposition}{Proposition}

\newtheorem{define}{Definition}
\newtheorem{corollary}{Corollary}
\newtheorem{assumption}{Assumption}
\newtheorem{remark}{Remark}
\usepackage{subcaption}
\newcommand{\setdef}[2]{\left\{ \left. #1\right| #2 \right\} }
\newcommand{\setdefr}[2]{\left\{ #1 \left| #2 \right. \right\} }
\newcommand{\fundef}[2]{\mathcal{F}\left( #1, #2\right)}
\renewenvironment{proof}{{\bfseries Proof.}}{\qedsymbol \newline}
\renewcommand{\qedsymbol}{$\blacksquare$}
\usepackage{algorithm}
\usepackage{algpseudocode}

\newcommand{\citebr}[1]{[\cite{#1}]}

\newcommand\BibTeX{{\rmfamily B\kern-.05em \textsc{i\kern-.025em b}\kern-.08em
T\kern-.1667em\lower.7ex\hbox{E}\kern-.125emX}}

\def\volumeyear{2016}

\begin{document}

\runninghead{Operator STL}

\title{An Operator-based approach to STL}

\author{Panagiotis Rousseas\affilnum{1} and Dimos V. Dimarogonas\affilnum{1}}

\affiliation{\affilnum{1}Department of Decision and Control Systems, School of Electrical Engineering and Computer Science, Royal Institute of Technology KTH, Sweden.}

\corrauth{Panagiotis Rousseas, Department of Decision and Control Systems, School of Electrical Engineering and Computer Science, Royal Institute of Technology KTH, Sweden.}

\email{rousseas@kth.se}

\begin{abstract}
Signal Temporal Logic (STL), has recently seen extensive development, owing to its rich expressivenes for autonomous planning and control. Nevertheless, existing verification and control synthesis methods are limited {with respect to} the complexity and degree of nesting of the formulae. In this work, we propose a novel approach to STL based on an operator acting on reachability value functions. This constitutes a new theoretical framework for handling complex multi-nested formulae while at the same time providing tools for on-line control synthesis. In contrast to focusing on the design of STL-based reachability (or control barrier) functions, we develop operator-based nesting rules directly. Our method's expressiveness is demonstrated both theoretically, where necessary and sufficient conditions for STL formula satisfaction are extracted, as well as in simulations with complex fragments.  
\end{abstract}

\keywords{Integrated Planning and Control, Hybrid Logical/Dynamical Planning and Verification, Optimization and Optimal Control}

\maketitle

\section{Introduction}
\label{sec:introduction}
Increasing the autonomy of robots operating in the real world is a fundamental goal and challenge across research and industry; while a plethora of methods for safe, robust and successful control of autonomous systems exist, \textit{decision-making}, in the sense of long-term planning and/or scheduling, is usually left to human operators. Formally addressing autonomous decision-making is a crucial and necessary step towards increasing the capabilities and autonomy of modern systems.  
\par 
Autonomous decision-making can be viewed as a task planning problem; given a collection of tasks for an autonomous system to complete, task planning rests on deciding when and which tasks to complete. Formal languages such as Linear Temporal Logic (LTL) \citebr{ltl}, Signal Temporal Logic (STL) \citebr{stl_paper} and robust STL semantics \citebr{robsem_1,robsem_2} have been widely employed to model tasks as spatio-temporal constraints, owing to their expressiveness and flexibility. These languages are also useful for control synthesis over discrete/continuous systems; such problems can be posed as Mixed-Integer Linear Programs (MILPs) for linear systems \citebr{9769752}, however MILPs are NP-hard and scale poorly w.r.t. the number of decision variables \citebr{MONTANARI202248}. 
% Concerning LTLs, temporal logic trees have also been employed \citebr{9562161}. 
This combinatorial aspect is not limited to MILP; task planning is {in principle} a combinatorial problem.
\par 
Beyond MILPs, a plethora of control synthesis methods have been proposed. In \citebr{8404080}, Control Barrier Functions (CBFs) were used to guarantee satisfaction of simple STL formulae. CBFs have recently peaked the community's interest for on-line constrained control design and work by ensuring forward invariance of time-invariant \citebr{7782377} and even time-varying sets \citebr{10886042}. Nevertheless, finding proper CBFs remains an open challenge with few constructive approaches. To this end, Marchesini et. al. \citebr{marchesini2025samplingbasedplanningstlspecifications} focus on linear systems and linear CBFs; their parametrization guarantees STL satisfaction for a subset of STL formulae by leveraging linear programming and sampling-based planning which significantly improves the complexity of their method. Model Predictive Control (MPC) has also been employed for STL \citebr{9655231}.
\par 
In control theoretic terms, reachability analysis, i.e., formally characterizing the ability of a system to reach parts of its state space under input constraints, external disturbances and controllability issues, is at the core of task planning. Computing reachable sets and/or controllers involves the solution of the nonlinear Hamilton-Jacobi Bellman (HJB) Partial Differential Equation and remains a fundamental open problem. Recent approaches include approximations for linear systems \citebr{LIU2023126}, as well as learning-based methods \citebr{9561949,4177130}.
\par 
The connection between LTL/STL and reachability analysis has been explored. In \citebr{10.1007/978-3-030-44051-0_34}, the authors connect a reachability operator with the satisfaction of simple (non-nested) STL fragments. They identify that logical operations between STL operators are equivalent to set operations and propose a control synthesis approach with the property of minimally violating a formula if it is deemed unsatisfiable. Similarly, \citebr{9867813} leverages reachability for control synthesis on linear systems, with the main caveat of considering very simple STL fragments. The authors in \citebr{10886238} employ reachability to synthesize disturbance-robust controllers for STL satisfaction in non-affine systems. Their focus is limited {to} simple operators (always, until, eventually) and logical operations thereof. LTLs have also been explored under HJB reachability. In \citebr{10886233} the authors employ a similar approach to our method, where reachability functions are combined with temporal logic trees and identify similar connections between reachability and temporal logics with \citebr{10.1007/978-3-030-44051-0_34}. However, they also treat non-nested formulae. 
\par 
An important step towards nested formulae was made in \citebr{10388467}, combining a tree representation with backwards-reachable sets, achieving higher level of nesting than previous efforts. More recently, a hierarchical approach including backwards-reachable sets, MILP-based global planning and low-level MPC was proposed \citebr{choi2026signaltemporallogicverification}. 
\par 
The main limitation of the above methods rests on the complexity of the formulae; most are limited either to non-nested formulae with logical operations, or to single nestings of operators. \citebr{10388467} is a notable exception, with more complex nested fragments. However, this work {may yield} conservative solutions and limits the complexity of the formula to a specific form. The aforementioned methods also fail to express repeated formulae; it was identified in \citebr{marchesini2025samplingbasedplanningstlspecifications} that specific nestings necessarily yield repeated satisfactions. The authors proposed a solution for a single nesting, yielding a single level of repetition. 
\par
Handling arbitrary nestings is a challenging problem. However, restricting the form of considered formulae is {an important} limitation, as the power of STL relies on its expressiveness; by limiting the complexity of formulae, this expressiveness is fundamentally impaired. 
In light of the above, this work proposes a novel framework for reachability-based STL. Similar to existing methods, we combine temporal and logic trees to represent STL formulae. Our main contribution rests on defining a novel operator, which we term ``CBF-STL Operator''. In contrast to the existing body of literature, our operator acts on pre-computed reachability value functions for each simple predicate. Then, in order to express STL formulae we propose a series of \textbf{composition rules} on the operator itself. Importantly, we prove that under the proposed nesting rules, the operator-enhanced reachability analysis yields necessary and sufficient conditions for formula satisfaction. 
\par 
In essence, the proposed framework is an extension of previous efforts \citebr{8404080,10.1007/978-3-030-44051-0_34,marchesini2025samplingbasedplanningstlspecifications,10388467} towards bridging the gap between planning and control, by expressing STL satisfaction in {\emph{purely control-theoretic terms (i.e., set invariance)}}.  
The ability of the proposed framework to tackle complex formulae is demonstrated both theoretically, as well as through numerical simulations. To the best of our knowledge, no existing state-feedback control synthesis method is able to handle as complex nestings as the ones presented here. Open-loop STL control synthesis can be achieved via MILP, but this solution is computationally expensive and practical only for shallow levels of nesting.

% MILP is able to compute input signals for dynamical systems; however this is in an open-loop fashion and comes at the cost of high computational complexity.      

\subsubsection{Notation}
Let $\mathbb{R}$ denote the set of real numbers, $\mathbb{R}^n, \mathbb{R}^n_{\geq 0}$ denote the sets of $n$-dimensional real and non-negative vectors respectively. Given two metric spaces $\mathcal{X}, \mathcal{Y}$, let $\fundef{\mathcal{X}}{\mathcal{Y}}$ denote the set of functions that map elements from $\mathcal{X}$ to $\mathcal{Y}$, i.e., $\fundef{\mathcal{X}}{\mathcal{Y}} \triangleq \setdef{F}{ F:\mathcal{X}\rightarrow\mathcal{Y}}$. For $x\in\mathcal{X}, y\in\mathcal{Y}$ let $(x,y) \triangleq \left[ x^\top, y^\top \right]^\top \in \mathcal{X}\times\mathcal{Y}$. Given a set $\mathcal{S}, |\mathcal{S}|$ denotes the cardinality and $\textrm{Pow}(\mathcal{S})$ denotes the power set of $\mathcal{S}$.

\section{Problem Formulation}
Consider the dynamics:
\begin{equation}\label{eq:dynamics}
    \dot{x} = f(x,u),\ x_0 = x(t_0) \in \mathcal{X}, 
\end{equation}
where $\mathcal{X}\subseteq \mathbb{R}^n$ denotes the system's state space, $x,  x_0\in\mathcal{X}$ denote the system's state and initial state at time $t_0 \in \mathbb{R}$, respectively, while $u\in \mathcal{U}$ denotes the system's input. The function $f\in\fundef{\mathcal{X} \times \mathcal{U}}{\mathcal{X}}$ denotes the dynamics of the system and is assumed to be locally Lipschitz. The set $\mathcal{U} \subset \mathbb{R}^m$ is termed the set of admissible inputs and is assumed to be convex. Further denote as $x\in\fundef{\mathbb{R}}{\mathcal{X}}$ the solution to \eqref{eq:dynamics} (trajectory) from the initial condition $x_0$ at time $t_0$ with an overloading of notation (the distinction between point/trajectory will be clear in-context). Consider now an STL \citebr{stl_paper} predicate $\mu$ and its associated predicate function $h\in\fundef{\mathcal{X}}{\mathbb{R}}$ as:
\begin{equation}\label{eq:pred_def}
    \mu :=
    \begin{cases}
        \mathrm{True}, &\ h(x)\geq 0\\
        \mathrm{False}, &\ h(x)< 0
    \end{cases},
\end{equation}
and its associated zero super-level set:
\(
    \mathcal{B}_h=\setdef{x\in\mathcal{X}}{h(x)\geq0},
\)
with boundary $\partial \mathcal{B}_h  =\setdef{x\in\mathcal{X}}{h(x)=0}$. The STL syntax defines an STL formula $\phi$ and is given by 
\begin{equation}\label{eq:stl:formula}
\phi ::= \text{True} \mid \mu \mid \neg\phi \mid T_1 \land T_2 \mid T_1 \, U_{[a,b]} \, T_2 ,
\end{equation}
where $T_1, T_2$ are STL formulae and $a, b \in \mathbb{R}_{\ge 0}$ with $a \le b$.  
The satisfaction relation $(x,t) \models T$ denotes if the signal $x(t)$, satisfies $\phi$ starting from the time $t\in\mathbb{R}_{\geq 0}$. Below we provide the definitions for STL semantics:
\begin{define}\label{def:stl}
For a signal $x \in\fundef{\mathbb{R}}{\mathcal{X}} $, the STL semantics \citebr{stl_paper} are recursively given by:
\begin{equation}\notag 
\begin{split}
(x,t) \models \mu 
&\Leftrightarrow h(x(t)) \ge 0, \\
(x,t) \models \neg\phi
&\Leftrightarrow \neg((x,t)\models\phi), \\
(x,t) \models \phi_1 \land \phi_2 
&\Leftrightarrow (x,t)\models\phi_1 \land (x,t)\models\phi_2, \\
(x,t) \models \phi_1 U_{[a,b]} \phi_2 
&\Leftrightarrow \exists t_1 \in [t+a, t+b] \; \\
\qquad \text{s.t.}\; (x,t_1)&\models\phi_2 \land \;\forall t_2 \in [t,t_1], (x,t_2)\models\phi_1, \\
(x,t) \models F_{[a,b]}\phi 
\Leftrightarrow& \exists t_1 \in [t+a, t+b] \;\text{s.t.}\; (x,t_1)\models\phi, \\
(x,t) \models G_{[a,b]}\phi
\Leftrightarrow& \forall t_1 \in [t+a, t+b], (x,t_1)\models\phi.
\end{split}
\end{equation}
\end{define}
\section{CBF-STL Operator Definition}\label{sec:operator:definition}
\subsection{Preliminaries}
Consider a predicate function $h\in\fundef{\mathcal{X}}{\mathbb{R}}$ \eqref{eq:pred_def}. Given system \eqref{eq:dynamics}, we make the following assumption:
\begin{assumption}\label{assum:hjb}
    Let $x\in\fundef{\mathbb{R}}{\mathcal{X}}$ denote the solution to \eqref{eq:dynamics} from an initial condition $x(t)$ for $t<0$. Assuming $\mathcal{U}$ is bounded, then the value function $V_h\in\fundef{\mathcal{X}\times\mathbb{R}_{\leq0}}{ \mathbb{R}}$ satisfying:
    \begin{equation}\label{eq:vfun:def}
        V_h(x,t) = \underset{u \in \mathcal{U}}{\sup}
        \left\{
            \underset{s \in [t,0]}{\max}\left\{ h(x(s))\right\}
        \right\},
    \end{equation}
    is well-defined and can be computed through the (viscosity) solution to the following HJB equation:
    \begin{equation}\label{eq:hjb}
        \begin{split}
            \frac{\partial V_h}{\partial t} + 
            &\underset{u\in\mathcal{U}}{\max}
            \left\{
                    \left( \nabla_x V_h (x,t) \right)^\top f(x,u)
            \right\}
            = 0 
            \\
            &\textrm{s.t.: }
            V_h(x,0) = h(x), \forall x \in \mathcal{X}.
        \end{split}
    \end{equation}
\end{assumption}
This value function will be employed in the sequel in order to ensure satisfaction of predicates via forward time-invariance of its zero super-level set.
Below we provide a useful result for the value function.
\begin{proposition}\label{prop:valu_fun:monotone}
    The value function $V_h\in\fundef{\mathcal{X}\times\mathbb{R}_{\geq0}}{ \mathbb{R}}$ of Assum. \ref{assum:hjb} is non-increasing w.r.t. its time argument, i.e.: $\forall t_1,t_2 \in \mathbb{R}_{\leq 0}:t_1 \leq t_2 \Rightarrow V(x,t_1) \geq V(x,t_2), \forall x \in \mathcal{X}/\mathcal{B}_h$.
\end{proposition}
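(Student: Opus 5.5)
The plan is to argue directly from the variational definition \eqref{eq:vfun:def}, using that the maximization window $[t,0]$ grows as $t$ decreases, so that any control admissible on the shorter horizon can be extended to the longer one without lowering the maximum. The HJB characterization \eqref{eq:hjb} is not needed; Assum. \ref{assum:hjb} is used only for the well-posedness of $V_h$.

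Fix $x\in\mathcal{X}$ and $t_1\le t_2\le 0$. Read $V_h(x,t)$ as the supremum, over admissible input signals $u(\cdot)$ on $[t,0]$, of $\max_{s\in[t,0]} h(x(s))$ along the trajectory of \eqref{eq:dynamics} with $x(t)=x$. By time invariance of $f$, the trajectory starting from $x$ at time $t_2$ under $u(\cdot)$ equals the trajectory starting from $x$ at time $t_1$ under the shifted input $\tilde u(s)=u(s+t_2-t_1)$, for $s\in[t_1,t_1+(0-t_2)]$.

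Given any admissible $u$ on $[t_2,0]$ with value $J_2(u)=\max_{s\in[t_2,0]}h(x(s))$, I would construct an admissible input on $[t_1,0]$ in two pieces. On $[t_1,\,t_1+|t_2|]$ use the shifted $\tilde u$, which reproduces the same state path and hence attains the same maximum $J_2(u)$. On the remaining interval $[t_1+|t_2|,0]$, of length $t_2-t_1$, use any admissible constant $\bar u\in\mathcal{U}$. Since $f$ is locally Lipschitz and $\mathcal{U}$ is bounded, a solution exists on this interval (assuming no finite escape, as implicit in the well-definedness part of Assum. \ref{assum:hjb}).

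The maximum of $h$ over the longer window is therefore at least $J_2(u)$, so $V_h(x,t_1)\ge J_2(u)$. Taking the supremum over $u$ gives $V_h(x,t_1)\ge V_h(x,t_2)$. This holds for every $x$, in particular for $x\in\mathcal{X}/\mathcal{B}_h$; for $x\in\mathcal{B}_h$ both values are in any case $\ge h(x)\ge 0$.

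The main obstacle is a notational one: the definition anchors the horizon end at $0$ and the start at $t$, so the window's endpoint, not its start, is what must be aligned. The time-shift step therefore has to be stated carefully, using autonomy of \eqref{eq:dynamics}. An alternative that avoids the shift is to keep the same start and append time. Formally one can also note that the dynamic programming principle gives $V_h(x,t_1)=\sup_u \max\{\max_{s\in[t_1,t_2]}h(x(s)),\,V_h(x(t_2),t_2)\}$, but comparing $V_h(x(t_2),t_2)$ with $V_h(x,t_2)$ again needs the shift. So I would present the shift-and-extend argument as the core of the proof.
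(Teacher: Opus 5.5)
Your argument is correct and rests on the same core observation as the paper's sketch: enlarging the window $[t,0]$ can only increase $\max_s h(x(s))$, and this inequality survives the supremum. The difference is in how the two windows are compared. The paper notes $[t_1,0]\supset[t_2,0]$ and compares the two maxima ``over the same solution''. Read literally, along a single trajectory with $x(t_1)=x$, the maximum over $[t_2,0]$ is a candidate for $V_h(x(t_2),t_2)$, not for $V_h(x,t_2)$. So the passage from that inequality to $V_h(x,t_1)\ge V_h(x,t_2)$ at a common state $x$ is left implicit.

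Your shift-and-extend construction supplies exactly this alignment. Autonomy of the dynamics lets you replay any trajectory from $x$ on $[t_2,0]$ over $[t_1,t_1-t_2]$. An arbitrary admissible continuation on the remaining interval of length $t_2-t_1$ then only adds time instants to the maximum. The cost is that you must state two hypotheses the paper uses tacitly: time-invariance of $f$, and existence of the continued solution on the appended interval (no finite escape, remaining in $\mathcal{X}$). In return you get a complete proof, valid for every $x\in\mathcal{X}$. This also confirms your remark that the restriction to $x\notin\mathcal{B}_h$ in the statement is superfluous.
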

\begin{proof}[Sketch]
    Taking any solution to \eqref{eq:dynamics} under an admissible input, and considering two time instances $t_1,t_2 \in \mathbb{R}_{\leq 0}:t_1 \leq t_2 $, we have $[t_1,0]\supset [t_2,0]$. Owing to the supremum in \eqref{eq:vfun:def}, taken over the same solution, this directly implies that $V(x,t_1) \geq V(x,t_2)$. 
    % Consider system \eqref{eq:dynamics} and let $x:\mathbb{R}\times \mathbb{R} \times \mathcal{F}\left( \mathbb{R},\mathbb{R}^m \right)\rightarrow\mathcal{X}$ (evaluated $x(s,t_0,u)\in\mathcal{X}$ - with an overloading of notation for the purposes of this proposition) denote its solution from some $\mathcal{X} \ni x_0 = x(t_0,t_0,u), t_0 < 0$, under an admissible input $u\in\fundef{\mathbb{R}}{\mathcal{U}}$. Let now $\bar{h}:\mathbb{R}\times \mathcal{F}\left( \mathbb{R},\mathcal{U}\right)\rightarrow\mathbb{R}$ denote:
    % $ \bar{h}(t_0,u) = \underset{s \in [t_0,0]}{\max}\left\{ h(x(s,t_0,u))\right\}$.
    % Then, for some $t_1,t_2 \in \mathbb{R}_{\leq 0}:t_1 \leq t_2 $, we have $[t_1,0]\supset [t_2,0]$, and therefore, over the trajectory $x(s,t_0,u)$:
    % $\underset{s \in [t_1,0]}{\max}\left\{ h(x(s,t_1,u))\right\} \geq \underset{s \in [t_2,0]}{\max}\left\{ h(x(s,t_2,u))\right\} \Leftrightarrow \bar{h}(t_1,u) \geq \bar{h}(t_2,u).$
    % Taking the supremum over the admissible controls preserves the inequality: $\underset{u \in \mathcal{U}}{\sup}\left\{ \bar{h}(t_1,u) \right\}\geq\underset{u \in \mathcal{U}}{\sup}\left\{ \bar{h}(t_2,u) \right\} \Leftrightarrow V(x,t_1) \geq V(x,t_2)$,
    % concluding the proof. 
\end{proof}

\subsection{CBF-STL Operator Definition}
\begin{figure}
    \centering
    \includegraphics[trim={0.5cm 1.0cm 0.75cm 0.5cm},clip,width=0.45\textwidth]{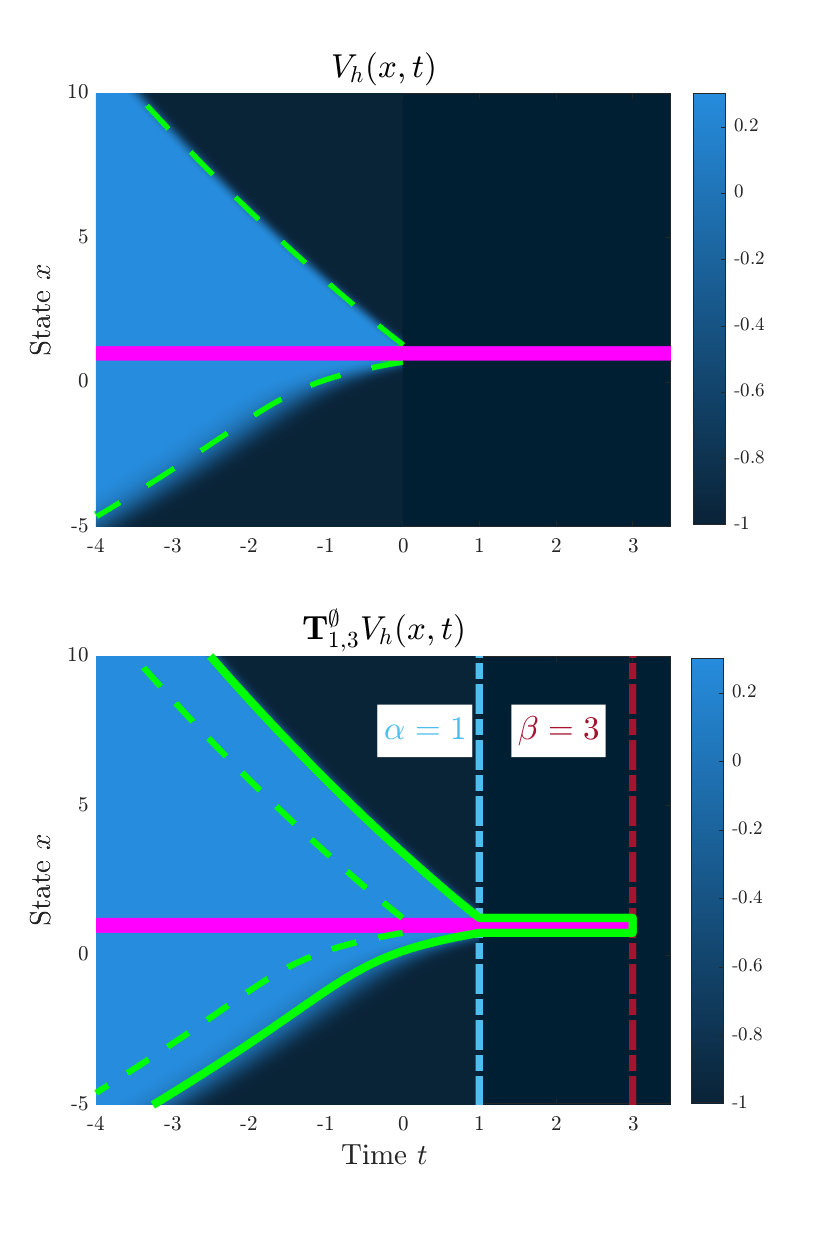}
    \caption{Scaled plot of a value function $V_h(x,t)$ (top) and $\mathbf{T}_{1,3}^{\emptyset}V_h(x,t)$ (bottom) for $t_0 = 0$. The zero super-level set boundary is depicted through green dashed lines for $V_h$ and continuous lines for $\mathbf{T}_{1,3}^{\emptyset}V_h$ . Any state within the light blue-shaded region can reach the zero super-level set of $h$ (magenta).}
    \label{fig:hjb_solution}
\end{figure}
Given a simple predicate $\mu$ and its predicate function $h$, positivity of the value function $V_h$ in \eqref{eq:vfun:def} implies positivity of the corresponding predicate function through the terminal condition in \eqref{eq:hjb} at time $t=0$. However, STL requires modeling the satisfaction time of STL fragments as \textbf{variable} time instances, e.g., for the fragment $F_{\left[\underline{t},\overline{t} \right]}\mu$, $h$ should become positive within the time window $\left[ t_0 + \underline{t},t_0 + \overline{t}\right]$. To model this aspect, we proceed by defining a set of linear functions:
\begin{define}
    Let $l\in\mathbb{N}$, then we define a set of linear functions as: $\mathcal{P}_{\Theta}^l \triangleq \setdef{\alpha_1^\top \tau + \alpha_0}{\alpha_0 \in \mathbb{R}_{\geq 0}, \alpha_1\in { \mathbb{R}^l_{\geq 0}}} \subset \fundef{\Theta}{\mathbb{R}_{\geq 0}}$, {and evaluated as $\alpha(\tau)$}, with $\tau \in \Theta \subset \mathbb{R}^l$ a free parameter.
\end{define}
\par 
Such functions will be designed in the sequel to ensure correspondence between nested operators and fragments. Briefly, $l$ denotes the number of free parameters of the functions {$\alpha, \beta  \in \mathcal{P}_{\Theta}^l$} and $\Theta$ denotes the domain of definition of the former. We can now define the CBF-STL operator through the set of linear functions $\mathcal{P}^l_\Theta$: 
\begin{define}[CBF-STL Operator]\label{def:operator}
    Given a predicate function $h\in\fundef{\mathcal{X}}{\mathbb{R}}$ and its corresponding value function $V_h \in \mathcal{F}\left(\mathcal{X}\times \mathbb{R}_{\leq 0},\mathbb{R}\right)$, the proposed CBF-STL operator\footnote{Operator in this context is interpreted as a mapping between functions.} $\mathbf{T}_{\alpha,\beta}^{\Theta} \in \mathcal{T}$ is defined as: 
    \begin{equation}\label{eq:operator:def}
        \mathbf{T}_{\alpha,\beta}^{\Theta} : 
        \mathcal{F}\left(\mathcal{X}\times \mathbb{R},\mathbb{R}\right) \rightarrow
        \mathcal{F}\left(\mathcal{X}\times \mathbb{R},\mathbb{R}\right),
    \end{equation}
    for $\alpha, \beta \in \mathcal{P}_{\Theta}^l,\ \Theta \subset \mathbb{R}^l\cup\emptyset $ and $\mathcal{T}$ denotes the set of such operators. The CBF-STL operator acts on value functions:
    \begin{equation}\label{eq:operator:action}
        \begin{gathered}
            \mathbf{T}_{\alpha,\beta}^{\Theta}V_h(x,t;t_0) \triangleq
            \\
            \begin{cases}
                V_h(x,t - \alpha(\tau)-t_0), & t  \in \left[t_0,\alpha(\tau)+t_0\right]
                \\
                h(x) , & t\in \left(\alpha(\tau)+t_0,\beta(\tau)+t_0\right]
            \end{cases},
        \end{gathered}
    \end{equation}
    for any $t_0 \in \mathbb{R},\tau \in \Theta$. Furthermore, we define the \textit{identity} operator $\mathcal{T} \ni \mathbf{T}_{0,0}^\emptyset : \mathbf{T}_{0,0}^\emptyset V_{h}(x,t;t_0) = V_h(x,t)$. 
\end{define}
\par
Intuitively, through its action \eqref{eq:operator:action}, the operator creates a ``window of satisfaction'' of the simple predicate $\mu$ through the value function $V_h$. 
More precisely, $\mathbf{T}_{\alpha,\beta}^{\Theta}V_h(x,t;t_0) \geq 0$ implies predicate satisfaction (through $h\geq 0$) during the time window $\left[ \alpha(\tau), \beta(\tau)\right]$ for any choice of $\tau\in\Theta$. Importantly, positivity of $\mathbf{T}_{\alpha,\beta}^{\Theta}V_h(x,t;t_0)$ also ensures that the zero super-level set of $h$ is reachable for all initial times. Hence, designing the functions $\alpha, \beta \in \mathcal{P}^l_\Theta$ can ensure correspondence between super-level set invariance and STL fragment satisfaction. 
\par 
The effect of an instance of the operator is visualized in Fig. \ref{fig:hjb_solution} for a one-dimensional system. Comparing the top and bottom subfigures, the operator $\mathbf{T}_{1,3}^{\emptyset}$ {(i.e., $\alpha = 1,\beta = 3,\Theta = \emptyset$)}, has two effects; 1) transposing $V_h$ in time such that the latter's boundary condition coincides with $t = 1$, and, 2) extending the function to the interval $[1,3]$. If the system stays within $\{V_h \geq 0\}$ then the predicate $\mu$ is always satisfied ($\{h\geq 0\}$) within $[1,3]$. We will demonstrate in the sequel that this corresponds exactly to $G_{\left[1,3\right]}\mu$. 
The fact that the time-window bounds $\alpha,\beta$ are parametrized through belonging to $\mathcal{P}^l_\Theta$ is crucial, since the time of satisfaction of fragments, e.g., $F_{\left[\underline{t},\overline{t} \right]}\mu$, may be a ``free'' (decision) variable. 
% Additionally, by virtue of the terminal condition $V_h(x,0) = h(x)$ in Assum. \ref{assum:hjb}, the function $\mathbf{T}_{\alpha,\beta}^{\Theta}V_h(x,t;t_0) $ inherits (non-)smoothness of $h$ for $t=\alpha(\tau)$.
\par 
Consider a formula $\psi$ consistent with Def. \ref{def:stl} as a result of nesting STL operators over the simple predicate $\mu$ which is associated with the predicate function $h\in\fundef{\mathcal{X}}{\mathbb{R}}$. Further consider the value function $V_h \in \fundef{\mathcal{X}\times\mathbb{R}}{\mathbb{R}}$ associated with $h$, i.e., $V_h$ is given by \eqref{eq:vfun:def}.
Throughout the sequel, for $\alpha,\beta \in \mathcal{P}_{\Theta}^l, \Theta \subset \mathbb{R}^l$, we use the terminology ``the operator $\mathbf{T}_{\alpha,\beta}^\Theta$ corresponds to the fragment $\psi$'', which we define as equivalent to $(\Leftrightarrow)$\footnote{In our definition of the equivalence, we have employed the time interval $\left[ t_0,t_0 +\beta(\tau)\right]$. The omission of $\alpha(\tau)$ is a crucial technical point, as the state should belong to the superlevel set of $\mathbf{T}_{\alpha(\tau),\beta(\tau)}^\Theta V_h\left(x\left(t\right),t;t_0\right)$ for any initial time $t_0$; otherwise the set $\{ h \geq 0\}$ is necessarily not reachable from the initial time $t_0$ at $t_0 + \alpha(\tau)$. {It will become clear in the sequel that, through their design, $\alpha,\beta \geq 0$.}}:
\begin{equation}
    \notag 
    \begin{gathered}
        \forall t_0 \in \mathbb{R}: \exists \tau(t_0) \in \Theta, \forall t \in \left[ t_0,t_0 +\beta(\tau(t_0))\right]:
        \\
        \mathbf{T}_{\alpha(\tau),\beta(\tau)}^\Theta V_h\left(x\left(t\right),t\right) \geq 0 \Leftrightarrow  (x,t_0)\models \psi,
    \end{gathered}
\end{equation}
% which is interpreted as follows:
% \begin{equation}
%     \begin{split}
%         &\mathrm{\textbf{the operator }} \mathbf{T}_{\alpha,\beta}^\Theta \mathrm{\textbf{ corresponds to the fragment }} \psi 
%         \\
%         &\qquad \qquad \qquad \qquad \qquad \qquad \Updownarrow
%         \\
%         &\forall \tau \in \Theta, \forall t \in \left[ t_0,t_0 +\beta(\tau)\right]:
%         \mathbf{T}_{\alpha(\tau),\beta(\tau)}^\Theta V_h\left(x\left(t\right),t\right) \geq 0
%         \\
%         & \qquad  \qquad \qquad \qquad \Leftrightarrow  (x,t_0)\models \psi,
%     \end{split}
%     \notag 
% \end{equation}
with $x\in\fundef{\mathbb{R}}{\mathcal{X}}$ denoting the trajectory of system \eqref{eq:dynamics} from $x(t_0) \in \mathcal{X}$.
This implies that:
\begin{equation}\label{eq:corr:toshow}
    \begin{gathered}
        \forall t_0 \in \mathbb{R}:\exists \tau(t_0) \in \Theta, \forall t \in \left[t_0  + \alpha(\tau(t_0)), t_0 + \beta(\tau(t_0))\right]:
        \\
        h(x(t))\geq 0  \Leftrightarrow (x,t_0) \models \psi.
    \end{gathered}
\end{equation}
According to Eq. \eqref{eq:corr:toshow}, if the operator $\mathbf{T}_{\alpha,\beta}^\Theta$ \textbf{corresponds} to the fragment $\psi$, this means that the simple predicate $\mu$ (the one that produces $\psi$ via nesting STL operators) is satisfied over the time window $[\alpha,\beta]$, since by definition $h(x(t))\geq 0 \Leftrightarrow (x,t) \models \mu$.
Concurrently, if $\psi$ comes from a single STL operator, i.e., $\psi = T_{\left[\underline{t},\overline{t}\right]}\mu,\ T \in \{G,F\}$, and $\mathbf{T}_{\alpha,\beta}^\Theta$ corresponds to $\psi$, then we \textbf{associate} the CBF-STL operator $\mathbf{T}_{\alpha,\beta}^\Theta$ with the STL operator $T_{\left[\underline{t},\overline{t}\right]}$ (denoted as $T_{\left[\underline{t},\overline{t}\right]} : \mathbf{T}_{\alpha,\beta}^\Theta$), where $\alpha,\beta \in \mathcal{P}_{\Theta}^l, \Theta \subset \mathbb{R}^l$ depend on $\underline{t},\overline{t}$ and will be chosen appropriately in the sequel to ensure consistency between STL operators and the proposed CBF-STL operator. We now prove the following useful properties:
\begin{proposition}\label{prop:operator:sequence}
    Given the CBF-STL operators \eqref{eq:operator:def} $\mathbf{T}_{\alpha_i ,\beta_i}^{\Theta_i}\in\mathcal{T},i\in\{1,2\}$, if for the time windows: $\alpha_1 (\tau_1)\leq \alpha_2(\tau_2),\ \forall \tau_1\in\Theta_1,\tau_2\in\Theta_2$, then the following holds:
    $\mathbf{T}_{\alpha_1 ,\beta_1}^{\Theta_1}V_h(x,t) \geq 0 \Rightarrow \mathbf{T}_{\alpha_{2} ,\beta_{2}}^{\Theta_{2}}V_h(x,t)\geq 0$,
    for any $x\in\mathcal{X}, t\in\left[ 0, \min \left\{ \alpha_2(\tau_2) , \beta_1(\tau_1) \right\}\right],\forall \tau_1 \in \Theta_1,\tau_2\in\Theta_2$.
\end{proposition}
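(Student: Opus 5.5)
We take $t_0=0$, as the statement implicitly does. The argument rests on two things: the piecewise definition \eqref{eq:operator:action} and the time-monotonicity of the value function in Proposition \ref{prop:valu_fun:monotone}.

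Fix $\tau_1\in\Theta_1$, $\tau_2\in\Theta_2$ and write $a_i=\alpha_i(\tau_i)$, $b_i=\beta_i(\tau_i)$. By hypothesis $a_1\le a_2$. Take $t\in[0,\min\{a_2,b_1\}]$. Since $t\le a_2$, we are in the first branch of \eqref{eq:operator:action} for the second operator, so
\[
\mathbf{T}_{\alpha_2,\beta_2}^{\Theta_2}V_h(x,t)=V_h(x,t-a_2).
\]
For the first operator we split into two cases according to where $t$ falls relative to $a_1$. Both are admissible, because $t\le b_1$ guarantees that $t$ lies in the domain of the first operator.

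\emph{Case $t\in[0,a_1]$.} Here $\mathbf{T}_{\alpha_1,\beta_1}^{\Theta_1}V_h(x,t)=V_h(x,t-a_1)\ge 0$. Since $a_1\le a_2$, we have $t-a_2\le t-a_1\le 0$. Proposition \ref{prop:valu_fun:monotone} then gives
\[
V_h(x,t-a_2)\ge V_h(x,t-a_1)\ge 0.
\]

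\emph{Case $t\in(a_1,\min\{a_2,b_1\}]$.} Here the hypothesis reads $h(x)\ge 0$. We use the terminal condition $V_h(x,0)=h(x)$ from \eqref{eq:hjb}, or equivalently the definition \eqref{eq:vfun:def} with $s=t$ in the maximum, where $s\in[t-a_2,0]$ includes the endpoint $0$ evaluated at the current state. Together with monotonicity this gives
\[
V_h(x,t-a_2)\ge V_h(x,0)=h(x)\ge 0.
\]

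The main obstacle is that Proposition \ref{prop:valu_fun:monotone} is stated only for $x\notin\mathcal{B}_h$. So we first prove the general inequality $V_h(x,s_1)\ge V_h(x,s_2)$ for $s_1\le s_2\le 0$ and all $x$, directly from \eqref{eq:vfun:def}. For each admissible input, the maximum of $h$ over $[s_1,0]$ dominates the maximum over the sub-interval $[s_2,0]$, and taking the supremum over inputs preserves this. The restriction $\mathcal{X}/\mathcal{B}_h$ is therefore unnecessary.

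One more point needs care. The identity $V_h(x,t-a_2)$ at the point $x$ must be read as the value function evaluated at state $x$ with horizon $a_2-t$, which is how \eqref{eq:vfun:def} defines it. Under that reading the bound $V_h(x,s)\ge h(x)$ for all $s\le0$ is immediate: a trajectory started at $x$ attains $h(x)$ at its initial time, which belongs to the maximisation window. This closes the second case, and since $\tau_1,\tau_2$ were arbitrary the implication holds for all parameters.
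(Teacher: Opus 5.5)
Your proposal is correct and is essentially the paper's proof. It uses the same split of $[0,\min\{\alpha_2,\beta_1\}]$ at $\alpha_1$, time-monotonicity of $V_h$ on the first piece (your direct comparison $t-a_2\le t-a_1\le 0$ is exactly the inequality the paper reaches via the auxiliary $t'=t+\alpha_2-\alpha_1$), and $V_h(x,t-\alpha_2)\ge V_h(x,0)=h(x)$ on the second.

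Your extension of Proposition~\ref{prop:valu_fun:monotone} to all $x$ is a worthwhile addition. The paper's second case invokes that proposition precisely for $x$ with $h(x)\ge 0$, i.e.\ $x\in\mathcal{B}_h$, which is outside its stated scope. One small fix: the ``sub-interval'' comparison only holds literally after shifting time so both trajectories start at $x$, and then the shorter window is an initial segment rather than the terminal one.
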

\begin{proof}
    See Appendix.
\end{proof}
{
\begin{proposition}\label{prop:operator:reachable}
    Given the CBF-STL operators \eqref{eq:operator:def} $\mathbf{T}_{\alpha ,\beta}^{\Theta}\in\mathcal{T}$ and the function $h\in\fundef{\mathcal{X}}{\mathbb{R}}$ and $x\in\fundef{\mathbb{R}}{\mathcal{X}}$, then the following holds:
    \begin{equation}
        \begin{gathered}
            \forall t \in\left[ t_0 + \alpha,t_0 + \beta\right]:
           \mathbf{T}_{\alpha,\beta}^{\Theta}V_h(x,t;t_0)\geq 0
            \Leftrightarrow
            \\
            \forall t \in\left[ t_0,t_0 + \beta\right]:
            \mathbf{T}_{\alpha,\beta}^{\Theta}V_h(x,t;t_0) \geq 0.
        \end{gathered}
    \end{equation}
\end{proposition}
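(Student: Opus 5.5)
The direction ``$\Leftarrow$'' is immediate: $[t_0+\alpha,t_0+\beta]\subseteq[t_0,t_0+\beta]$ (recall $\alpha,\beta\geq 0$), so nonnegativity on the larger interval restricts to the smaller one. All the work is in ``$\Rightarrow$''. I will read $x$ as a trajectory of \eqref{eq:dynamics} generated by some admissible input signal. This is the setting in which the correspondence definition is used, and without it the claim cannot hold, since $V_h$ only encodes what is reachable under the dynamics.

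For ``$\Rightarrow$'', I first pin down what the hypothesis gives at the switching instant $t=t_0+\alpha$. By \eqref{eq:operator:action}, at $t=t_0+\alpha$ the operator takes the first branch, so $\mathbf{T}_{\alpha,\beta}^{\Theta}V_h(x,t_0+\alpha;t_0)=V_h(x(t_0+\alpha),0)$. By the terminal condition of \eqref{eq:hjb} in Assum.~\ref{assum:hjb} this equals $h(x(t_0+\alpha))$. The hypothesis therefore yields $h(x(t_0+\alpha))\geq 0$: the trajectory sits in $\mathcal{B}_h$ at time $t_0+\alpha$.

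Next I fix $t\in[t_0,t_0+\alpha)$. There the operator equals $V_h(x(t),\sigma)$ with $\sigma:=t-\alpha-t_0\leq 0$. By \eqref{eq:vfun:def}, $V_h(x(t),\sigma)$ is a supremum over admissible inputs of $\max_{s\in[\sigma,0]}h(\xi(s))$, where $\xi$ is the solution started from $x(t)$ at time $\sigma$. As a candidate I take the time-shifted input that actually generated $x$, namely $\tilde u(s)=u(s+\alpha+t_0)$. Because $f$ does not depend on time and is locally Lipschitz, uniqueness of solutions gives $\xi(s)=x(s+\alpha+t_0)$ for $s\in[\sigma,0]$. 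In particular $\xi(0)=x(t_0+\alpha)$. Hence
\[
V_h(x(t),\sigma)\;\geq\;\max_{s\in[\sigma,0]}h(\xi(s))\;\geq\;h(\xi(0))=h(x(t_0+\alpha))\;\geq\;0 .
\]
Combined with the hypothesis on $[t_0+\alpha,t_0+\beta]$, this gives nonnegativity on all of $[t_0,t_0+\beta]$. I could also use Proposition~\ref{prop:valu_fun:monotone}, but the direct witness argument above already suffices and avoids its restriction to states outside $\mathcal{B}_h$.

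The main obstacle is the modelling step in the middle paragraph: justifying that the shifted input $\tilde u$ is admissible in the supremum of \eqref{eq:vfun:def} and that the shifted trajectory is exactly the solution from $(x(t),\sigma)$. Both rest on time-invariance of \eqref{eq:dynamics} and uniqueness of solutions, and I will state them explicitly. I will also note that $\alpha,\beta$ are evaluated at a fixed $\tau\in\Theta$ throughout, so the argument holds pointwise in $\tau$.
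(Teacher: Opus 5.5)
Your proposal is correct and follows the same idea as the paper's proof. The hypothesis at $t=t_0+\alpha$ gives $h(x(t_0+\alpha))\geq 0$, so $\mathcal{B}_h$ is reached along the trajectory itself, which forces $V_h(x(t),t-\alpha-t_0)\geq 0$ on $[t_0,t_0+\alpha]$; you make explicit what the paper states in one line, namely the time-shifted input as a witness in the supremum of \eqref{eq:vfun:def} and the implicit requirement that $x$ be a trajectory of \eqref{eq:dynamics}.
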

\begin{proof}
    We only need to consider the interval $t \in\left[ t_0,t_0 + \alpha\right]$.
    The equation $\forall t \in\left[ t_0 + \alpha,t_0 + \beta\right]:h(x(t))\geq 0$ implies that the set $\setdef{x\in\mathcal{X}}{h(x)\geq 0}$ is reachable from time $t_0$ at time $t_0 + \alpha$, i.e., $V(x(t),t - \alpha - t_0)\geq 0 \overset{\eqref{eq:operator:action}}{\Rightarrow} \forall t \in\left[ t_0,t_0 + \alpha\right]:
            \mathbf{T}_{\alpha,\beta}^{\Theta}V_h(x,t;t_0) \geq 0$, concluding the proof. 
\end{proof}
}

\section{CBF-STL Operator Composition}\label{sec:operator:application}
In this section we develop the theoretical connection between the CBF-STL operator and STL formulae satisfaction. Overall, this requires designing the functions $\alpha,\beta \in \mathcal{P}^l_{\Theta}$ and the corresponding domain of the parameters $\tau \in \Theta$.

\subsection{Simple STL Operators}\label{sec:operator:application:subsec:simple}
% We begin by demonstrating the CBF-STL operator on simple STL operators. 
Consider the simple predicate $\mu$ with its predicate function $h\in\fundef{\mathcal{X}}{\mathbb{R}}$ and the value function $V_h \in \fundef{\mathcal{X}\times\mathbb{R}}{\mathbb{R}}$. In this subsection the CBF-STL operators associated with the always and eventually STL operators are developed. {To distinguish between the two cases, we will employ the notation $\mathbf{\Gamma},\mathbf{\Phi}$ for the CBF-STL operators corresponding to always and eventually STL operators respectively.}
% \subsubsection{Always STL Operator}
% Consider the always operator $G_{\left[\underline{t},\overline{t}\right]}, \ 0 \leq \underline{t} \leq \overline{t}$ defined in Def. \ref{def:stl}. 
% Then, the CBF-STL operator $\mathbf{T}_{\underline{t} ,\overline{t}}^{\emptyset} \in \mathcal{T}$ corresponds to the fragment $G_{\left[\underline{t},\overline{t}\right]}\mu $:
\begin{proposition}\label{prop:always}
    Given the always operator $G_{\left[\underline{t},\overline{t}\right]}, \ 0 \leq \underline{t} \leq \overline{t}$ as per Def. \ref{def:stl} and the function $\mathbf{\Gamma}_{\underline{t} ,\overline{t}}^{\emptyset}V_h(x,t)$ given by \eqref{eq:operator:action}, then the operator {$\mathbf{\Gamma}_{\underline{t} ,\overline{t}}^{\emptyset} \in \mathcal{T}$} corresponds to the fragment $G_{\left[\underline{t},\overline{t}\right]}\mu $. 
\end{proposition}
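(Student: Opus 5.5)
We must show that, with $\Theta=\emptyset$ (so $\alpha\equiv\underline{t}$ and $\beta\equiv\overline{t}$ are constants and no parameter $\tau$ has to be chosen), the following equivalence holds for every $t_0$:
\[
\Bigl(\forall t\in[t_0,t_0+\overline{t}]:\ \mathbf{\Gamma}_{\underline{t},\overline{t}}^{\emptyset}V_h(x(t),t;t_0)\ge 0\Bigr)\ \Leftrightarrow\ (x,t_0)\models G_{[\underline{t},\overline{t}]}\mu .
\]
The plan is to split the interval $[t_0,t_0+\overline{t}]$ at $t_0+\underline{t}$. We then use the definition \eqref{eq:operator:action} on each piece and reduce the equivalence to Proposition \ref{prop:operator:reachable}.

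\textbf{Step 1: the second branch is exactly the STL semantics.} For $t\in(t_0+\underline{t},t_0+\overline{t}]$, definition \eqref{eq:operator:action} gives $\mathbf{\Gamma}V_h(x(t),t;t_0)=h(x(t))$. At the junction $t=t_0+\underline{t}$ the first branch gives $V_h(x(t),0)$, which equals $h(x(t))$ by the terminal condition in \eqref{eq:hjb}. Hence, on the closed interval $[t_0+\underline{t},t_0+\overline{t}]$, nonnegativity of the operator is literally $h(x(t))\ge0$. By Def. \ref{def:stl}, this is $\forall t_1\in[t_0+\underline{t},t_0+\overline{t}]:(x,t_1)\models\mu$, i.e. $(x,t_0)\models G_{[\underline{t},\overline{t}]}\mu$.

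\textbf{Step 2: direction ($\Rightarrow$).} Restrict the left-hand condition to the subinterval $[t_0+\underline{t},t_0+\overline{t}]$ and apply Step 1. This direction is immediate.

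\textbf{Step 3: direction ($\Leftarrow$).} Satisfaction of $G_{[\underline{t},\overline{t}]}\mu$ gives, by Step 1, nonnegativity of the operator on $[t_0+\underline{t},t_0+\overline{t}]$. We then invoke Proposition \ref{prop:operator:reachable} to extend nonnegativity to all of $[t_0,t_0+\overline{t}]$.

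To make this extension self-contained, the argument on $[t_0,t_0+\underline{t}]$ runs as follows. Since $x$ is an admissible trajectory with $h(x(t_0+\underline{t}))\ge0$, its restriction to $[t,t_0+\underline{t}]$, with time shifted so that the endpoint is $0$, is a feasible candidate in the supremum \eqref{eq:vfun:def}. This gives
\[
V_h(x(t),t-\underline{t}-t_0)\ \ge\ \max_{s\in[t,t_0+\underline{t}]}h(x(s))\ \ge\ h(x(t_0+\underline{t}))\ \ge 0 .
\]
Time invariance of \eqref{eq:dynamics} is what justifies the shift.

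\textbf{Main obstacle.} The delicate point is Step 3. It relies on the value function evaluated along the actual closed-loop trajectory dominating that trajectory's own running maximum of $h$, i.e. on treating the realized input signal as admissible in \eqref{eq:vfun:def}. This requires care with the time-shift and with the sup-over-inputs interpretation of Assumption \ref{assum:hjb}. I would make this explicit, noting that any $\mathcal{U}$-valued measurable input is a competitor in the supremum, rather than only citing Proposition \ref{prop:operator:reachable}.
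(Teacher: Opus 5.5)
Your proposal is correct and follows essentially the same route as the paper: sufficiency by restricting to $[t_0+\underline{t},t_0+\overline{t}]$, where the operator equals $h$, and necessity by bounding the first branch below by $h(x(t_0+\underline{t}))\ge 0$. The only difference is the justification of $V_h(x(t),t-\underline{t})\ge h(x(\underline{t}))$: the paper cites the time-monotonicity of Prop.~\ref{prop:valu_fun:monotone}, which strictly concerns a fixed state, whereas your argument, which uses the realized input as a competitor in the supremum \eqref{eq:vfun:def}, justifies that step more cleanly.
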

\begin{proof}
    See Appendix.
\end{proof}
% \subsubsection{Eventually STL Operator}
% Consider the eventually operator $F_{\left[\underline{t},\overline{t}\right]}, \ 0 \leq \underline{t} \leq \overline{t}$ defined in Def. \ref{def:stl}, then the CBF-STL operator $\mathbf{T}_{\underline{t} + \tau ,\underline{t} + \tau}^{\left[ 0, \overline{t} - \underline{t}\right]} \in \mathcal{T}$ corresponds to the fragment $F_{\left[\underline{t},\overline{t}\right]}\mu$:
\begin{proposition}\label{prop:eventually}
    Given the eventually operator $F_{\left[\underline{t},\overline{t}\right]}, \ 0 \leq \underline{t} \leq \overline{t}$ as per Def. \ref{def:stl} and the function $ \mathbf{\Phi}_{\underline{t}+\tau ,\underline{t}+\tau }^{\left[ 0, \overline{t} - \underline{t}\right]}V_h(x,t)$ given by \eqref{eq:operator:action}, then the operator {$\mathbf{\Phi}_{\underline{t}+\tau ,\underline{t}+\tau }^{\left[ 0, \overline{t} - \underline{t}\right]}\in \mathcal{T}$} where {$\mathcal{P}_{\left[ 0, \overline{t} - \underline{t}\right]}^1\ni\alpha(\tau) = \beta(\tau) = \underline{t}+\tau $}, corresponds to the fragment $F_{\left[\underline{t},\overline{t}\right]}\mu $. 
\end{proposition}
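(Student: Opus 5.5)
We need to verify the correspondence definition, which for this operator reduces to showing
\[
\forall t_0:\ \exists \tau(t_0)\in\left[0,\overline{t}-\underline{t}\right],\ \forall t\in\left[t_0,t_0+\underline{t}+\tau\right]:\ \mathbf{\Phi}^{\left[0,\overline{t}-\underline{t}\right]}_{\underline{t}+\tau,\underline{t}+\tau}V_h(x(t),t;t_0)\geq 0 \;\Leftrightarrow\; (x,t_0)\models F_{\left[\underline{t},\overline{t}\right]}\mu .
\]
With $\alpha(\tau)=\beta(\tau)=\underline{t}+\tau$, the second branch of \eqref{eq:operator:action} acts on an empty interval. The operator therefore reduces to $V_h(x,t-\underline{t}-\tau-t_0)$ on $[t_0,t_0+\underline{t}+\tau]$. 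At the endpoint $t=t_0+\underline{t}+\tau$ the terminal condition of \eqref{eq:hjb} gives $V_h(x,0)=h(x)$. Note also that $\alpha,\beta\in\mathcal{P}^1_{[0,\overline{t}-\underline{t}]}$ is immediate, since $\alpha_0=\underline{t}\geq 0$ and $\alpha_1=1\geq 0$. The plan is to prove the two directions separately, using the reduced form \eqref{eq:corr:toshow}, in which the window $[t_0+\alpha,t_0+\beta]$ collapses to the single instant $t_1=t_0+\underline{t}+\tau$.

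\emph{($\Rightarrow$).} Suppose that for some $\tau\in[0,\overline{t}-\underline{t}]$ the operator is nonnegative on $[t_0,t_0+\underline{t}+\tau]$. Evaluating at the right endpoint gives $h(x(t_1))\geq 0$ with $t_1=t_0+\underline{t}+\tau\in[t_0+\underline{t},t_0+\overline{t}]$. By Def.~\ref{def:stl}, this is exactly $(x,t_0)\models F_{[\underline{t},\overline{t}]}\mu$.

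\emph{($\Leftarrow$).} Given satisfaction, pick a witness $t_1\in[t_0+\underline{t},t_0+\overline{t}]$ with $h(x(t_1))\geq 0$, and set $\tau(t_0)=t_1-t_0-\underline{t}\in[0,\overline{t}-\underline{t}]$. Nonnegativity at $t=t_1$ is then immediate. It remains to cover the interval $[t_0,t_1)$, and this is done by Prop.~\ref{prop:operator:reachable}: nonnegativity on $[t_0+\alpha,t_0+\beta]=\{t_1\}$ extends to $[t_0,t_0+\beta]$. The underlying argument is the one from that proof. The trajectory $x(\cdot)$ itself, generated by an admissible input, reaches $\mathcal{B}_h$ at $t_1$. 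Hence for each $t\in[t_0,t_1]$ the supremum in \eqref{eq:vfun:def}, evaluated at time argument $t-t_1\leq 0$, is at least $h(x(t_1))\geq 0$, so $V_h(x(t),t-t_1)\geq 0$.

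The main obstacle is the $(\Leftarrow)$ step, specifically the time-shift bookkeeping. One must check that the argument $t-\alpha(\tau)-t_0$ equals $t-t_1$ and lies in $\mathbb{R}_{\leq 0}$, so that $V_h$ is evaluated within its domain. One must also check that the reachability lower bound is justified by applying \eqref{eq:vfun:def} along the actual closed-loop trajectory: by time invariance of \eqref{eq:dynamics}, the shifted trajectory starting at $x(t)$ is feasible for the supremum. If Prop.~\ref{prop:operator:reachable} is taken as given, this step is a direct citation. Otherwise, I would write out this one-line comparison of the supremum against the particular admissible input.
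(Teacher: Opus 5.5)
Your proposal is correct and follows essentially the same route as the paper: the forward direction evaluates the operator at $t_0+\underline{t}+\tau$, where it reduces to $h$, and the converse fixes $\tau$ from a satisfaction witness $t_1$ and bounds $V_h(x(t),t-t_1)\geq h(x(t_1))\geq 0$ on $[t_0,t_1]$. Your justification of that inequality, by comparing the supremum in \eqref{eq:vfun:def} against the trajectory's own admissible input and using time invariance, is more careful than the paper's. The paper instead cites Prop.~\ref{prop:valu_fun:monotone}, which only gives monotonicity at a fixed state, and writes ``$\forall\tau$'' where your specific witness-dependent choice of $\tau$ is what is needed.
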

\begin{proof}
    See Appendix.
\end{proof}
% \begin{remark}\label{rem:eventually:logic}
%     By our design of the functions $\alpha,\beta $ in Prop. \ref{prop:eventually}, the lower and upper bounds $\alpha(\tau), \beta(\tau)$ coincide for the eventually STL operator. Therefore, in the definition of eventually (see Def. \ref{def:stl}), the ``$\exists$'' logical condition can be replaced with ``$\forall$'', as the parametrization of the interval $\left[\underline{t},\overline{t}\right]$ through the variable $\tau \in \left[ 0, \overline{t} - \underline{t}\right]$ suffices to model the variable instance of satisfaction of the corresponding predicate (which is essentially the role of the ``$\exists$'' logical condition). This enables providing proofs in the sequel without distinguishing between nestings of eventually/always STL operators.     
% \end{remark}
% \subsubsection{Until STL Operator}\label{subsubsec:until:op}
Consider the simple predicates $\mu_1,\mu_2$  with the associated predicate functions $h_1,h_2 \in \mathcal{F}(\mathcal{X},\mathbb{R})$ \eqref{eq:pred_def} and the until operator $U_{\left[\underline{t},\overline{t}\right]}, \ 0 \leq \underline{t} \leq \overline{t}$ defined in Def. \ref{def:stl}. Then the STL fragment $\mu_1 U_{\left[\underline{t},\overline{t}\right]}\mu_2$ can be recast as:
\begin{equation}\label{eq:until:decomp}
    \mu_1 U_{\left[\underline{t},\overline{t}\right]}\mu_2 = 
    \left( 
        G_{\left[  0, \underline{t} + \tau\right]}\mu_1
    \right)
    \wedge
    \left( 
        F_{\left[ \underline{t} + \tau, \underline{t} + \tau \right]}\mu_2
    \right),
\end{equation}
for $\tau \in \left[ 0, \overline{t} - \underline{t}\right]$. Then, similar to Props. \ref{prop:always}, \ref{prop:eventually}, the following CBF-STL operators are constructed: 
\(
G_{\left[ 0, \underline{t} + \tau\right]}\mu_1: \mathbf{\Gamma}_{0, \underline{t} + \tau}^{\left[ 0, \overline{t} - \underline{t}\right]}V_{h_1}(x,t;t_0) \in \fundef{\mathcal{X}\times\mathbb{R}}{\mathbb{R}}
\)
and
\(
F_{\left[  \underline{t} + \tau, \underline{t} + \tau\right]}\mu_2: \mathbf{\Phi}_{\underline{t} + \tau,\underline{t} + \tau}^{\left[ 0, \overline{t} - \underline{t}\right]}V_{h_2}(x,t;t_0) \in \fundef{\mathcal{X}\times\mathbb{R}}{\mathbb{R}}.
\)

\begin{proposition}\label{prop:until}
        Given the simple predicates $\mu_2,\mu_2$ with the associated predicate functions $h_1,h_2 \in \mathcal{F}(\mathcal{X},\mathbb{R})$ \eqref{eq:pred_def} and the until operator $U_{\left[\underline{t},\overline{t}\right]}, \ 0 \leq \underline{t} \leq \overline{t}$ Def. \ref{def:stl}, the following holds: $\left( \mathbf{\Gamma}_{0, \underline{t} + \tau}^{\left[ 0, \overline{t} - \underline{t}\right]}V_{h_1}(x,t;t_0) \geq 0 \right) \wedge \left( \mathbf{\Phi}_{\underline{t} + \tau,\underline{t} + \tau}^{\left[ 0, \overline{t} - \underline{t}\right]}V_{h_2}(x,t;t_0) \geq 0 \right) \Rightarrow$ $(x,t) \models \mu_1 U_{\left[\underline{t},\overline{t}\right]}\mu_2$.
\end{proposition}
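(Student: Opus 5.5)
The plan is to read the hypothesis of Prop.~\ref{prop:until} in the ``corresponds'' sense fixed earlier in the paper. That is, we assume there is a common parameter $\tau=\tau(t_0)\in\left[0,\overline{t}-\underline{t}\right]$ such that both operator functions are non-negative along the trajectory. For $\mathbf{\Gamma}$ this holds for all $t\in[t_0,t_0+\underline{t}+\tau]$, and for $\mathbf{\Phi}$ it holds on the same interval, whose right endpoint is $t_0+\beta(\tau)=t_0+\underline{t}+\tau$. Using the same $\tau$ in both conjuncts is essential: it couples the end of the ``always'' window for $\mu_1$ to the satisfaction instant of $\mu_2$. I will state this explicitly, since the formula in the proposition leaves the quantification over $\tau$ implicit. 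With this reading, the strategy is to unpack each conjunct through the definition \eqref{eq:operator:action}, using the already-established correspondences of Props.~\ref{prop:always} and \ref{prop:eventually}, and then reassemble them via the decomposition \eqref{eq:until:decomp}.

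\textbf{Step 1.} For the $\mathbf{\Gamma}$ conjunct we have $\alpha\equiv0$ and $\beta(\tau)=\underline{t}+\tau$. By \eqref{eq:operator:action}, the first branch only covers $t=t_0$, where it gives $V_{h_1}(x(t_0),0)=h_1(x(t_0))$ by the terminal condition in \eqref{eq:hjb}. On $(t_0,t_0+\underline{t}+\tau]$ the function equals $h_1(x(t))$ directly. Hence $h_1(x(t))\geq0$ for all $t\in[t_0,t_0+\underline{t}+\tau]$, i.e.\ $(x,t_0)\models G_{[0,\underline{t}+\tau]}\mu_1$; this is exactly the argument of Prop.~\ref{prop:always} with $\underline{t}=0$ and upper bound $\underline{t}+\tau$.

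\textbf{Step 2.} For the $\mathbf{\Phi}$ conjunct we have $\alpha=\beta=\underline{t}+\tau$, so the second branch of \eqref{eq:operator:action} is empty. Non-negativity at $t=t_0+\underline{t}+\tau$ gives $V_{h_2}(x(t),0)=h_2(x(t_0+\underline{t}+\tau))\geq0$. This yields $(x,t_0)\models F_{[\underline{t}+\tau,\underline{t}+\tau]}\mu_2$, as in Prop.~\ref{prop:eventually}. The earlier times only encode reachability, in the sense of Prop.~\ref{prop:operator:reachable}, and are not needed for this direction.

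\textbf{Step 3.} Set $t_1:=t_0+\underline{t}+\tau$. Since $\tau\in[0,\overline{t}-\underline{t}]$, we get $t_1\in[t_0+\underline{t},t_0+\overline{t}]$. Step 2 gives $(x,t_1)\models\mu_2$, and Step 1 gives $(x,t_2)\models\mu_1$ for every $t_2\in[t_0,t_1]$. This is verbatim the until clause of Def.~\ref{def:stl}, so $(x,t_0)\models\mu_1U_{[\underline{t},\overline{t}]}\mu_2$.

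The main obstacle is bookkeeping rather than analysis. First, one must justify that the same $\tau$ appears in both conjuncts. Second, the closed endpoint $t_1$ must lie in both windows. This works because $\beta$ for $\mathbf{\Gamma}$ equals $\alpha$ for $\mathbf{\Phi}$, and the $\mathbf{\Gamma}$ interval in \eqref{eq:operator:action} is right-closed. Only the sufficiency direction is claimed, so no converse construction of $\tau$ is required.
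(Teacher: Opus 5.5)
Your proposal is correct and follows the paper's route. The paper's proof simply invokes the sufficiency arguments of Props.~\ref{prop:always} and \ref{prop:eventually} on the two conjuncts of the decomposition \eqref{eq:until:decomp} and recombines them into the until clause, exactly as your Steps 1--3 do; your explicit common parameter $\tau$ for both conjuncts, and your reading of the conclusion at $t_0$, make precise what the paper leaves implicit.
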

\begin{proof}
    The proof is identical to Props. \ref{prop:always}, \ref{prop:eventually}. 
\end{proof}
{
In Props \ref{prop:always}, \ref{prop:until}, there are two distinct operator cases for the \emph{always} operator, namely $G_{[\underline{t},\overline{t}]}, \underline{t},\overline{t}\in\mathbb{R}$ and $G_{[\alpha(\tau),\beta(\tau)]}, \alpha,\beta\in\mathcal{P}_l^\mathbb{R}$. The first case is a degenerate version of the second one, hence in the sequel we focus on the second case. 
}

\subsection{Nested STL Operators}
We now establish the CBF-STL operator composition rules for nested formulae. Throughout the following proofs, we begin with simpler nestings and leverage the association between the always/eventually/until operators and the CBF-STL operators of the previous subsection to inductively map nested STL formulae to reachability-based functions. 
{We begin by providing two existing nesting results \cite{marchesini2025samplingbasedplanningstlspecifications}) in a form consistent to our operator framework}. 
{
\begin{proposition}[(Always-Always)]\label{thm:nesting:always}
    Consider the fragment $\psi$ and, then, for the nested operator:
   \begin{equation}
      \begin{gathered}
        \forall \tau \in \fundef{\mathbb{R}}{\Theta}, \tau' \in \fundef{\mathbb{R}}{\Theta'}:
        \\
          (x,t_0) \models G_{[ \alpha'(\tau'),\beta'(\tau')]}G_{[ \alpha(\tau), \beta(\tau)]}\psi
          \Leftrightarrow
          \\
          (x,t_0) \models G_{[ \alpha'(\tau')+\alpha(\tau),\beta'(\tau')+ \beta(\tau)]}\psi,
      \end{gathered}
   \end{equation}
   with $\alpha,\beta \in \mathcal{P}_l^\Theta : \alpha(\tau)\leq\beta(\tau), \forall \tau \in \Theta,\alpha',\beta' \in \mathcal{P}_{l'}^{\Theta'}:\alpha'(\tau') \leq \beta'(\tau') , \forall \tau' \in \Theta'$.
\end{proposition}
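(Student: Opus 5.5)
We prove the equivalence by unfolding both sides with the semantics of Def.~\ref{def:stl}, for fixed values of the parameters. Fix $t_0$ and the parameter values $\tau,\tau'$ (the quantifier over $\tau\in\fundef{\mathbb{R}}{\Theta}$, $\tau'\in\fundef{\mathbb{R}}{\Theta'}$ is outermost, so they stay fixed throughout). Abbreviate $a=\alpha(\tau)$, $b=\beta(\tau)$, $a'=\alpha'(\tau')$, $b'=\beta'(\tau')$. By hypothesis $a\le b$ and $a'\le b'$, and since $\alpha,\beta,\alpha',\beta'$ are nonnegative linear functions, all four numbers are in $\mathbb{R}_{\geq0}$.

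Applying the always clause twice, the left-hand side holds if and only if
\[
\forall t_1\in[t_0+a',t_0+b'],\ \forall t_2\in[t_1+a,t_1+b]:\ (x,t_2)\models\psi .
\]
The right-hand side holds if and only if $(x,s)\models\psi$ for all $s\in[t_0+a'+a,\,t_0+b'+b]$. The statement therefore reduces to a set identity, namely that the union of the inner windows equals the outer window:
\[
\bigcup_{t_1\in[t_0+a',\,t_0+b']}[t_1+a,\,t_1+b]=[t_0+a'+a,\,t_0+b'+b].
\]
Once this identity holds, both sides say exactly that $\psi$ is satisfied at every time in the same set, so they are equivalent.

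For the inclusion "$\subseteq$", take $t_1\in[t_0+a',t_0+b']$ and $t_2\in[t_1+a,t_1+b]$. Adding the bounds gives $t_0+a'+a\le t_2\le t_0+b'+b$. This inclusion proves "$\Leftarrow$": if $\psi$ holds on the whole outer window, it holds at every inner time $t_2$.

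For "$\supseteq$", which proves "$\Rightarrow$", take any $s\in[t_0+a'+a,\,t_0+b'+b]$. We must find $t_1\in[t_0+a',t_0+b']$ with $s-b\le t_1\le s-a$. Equivalently, the intervals $[t_0+a',t_0+b']$ and $[s-b,s-a]$ must intersect. Both are nonempty because $a'\le b'$ and $a\le b$. Two nonempty closed intervals intersect exactly when each left endpoint is at most the other's right endpoint. Here that means $t_0+a'\le s-a$ and $s-b\le t_0+b'$, and both follow directly from the bounds on $s$. An explicit choice is $t_1=\min\{t_0+b',\,s-a\}$, which one checks lies in both intervals.

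The only delicate step is this covering argument for "$\Rightarrow$". It is where the hypotheses $\alpha\le\beta$ and $\alpha'\le\beta'$ are genuinely needed: if an inner window were empty, the union would have gaps and the right-hand side could fail while the left-hand side holds. The parameterization adds no extra difficulty, because $\tau,\tau'$ are fixed before either formula is evaluated. The nesting of always operators therefore acts pointwise in $\tau,\tau'$ and does not interact with the time-varying choice of parameters.
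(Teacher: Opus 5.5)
Your proof is correct and follows the paper's route: unfold the always semantics twice, then collapse the nested quantifiers into the single window $[t_0+\alpha'(\tau')+\alpha(\tau),\,t_0+\beta'(\tau')+\beta(\tau)]$, treating the parameter values as fixed exactly as the paper does. The only difference is that you explicitly prove the interval-covering identity, which is where $\alpha\le\beta$ and $\alpha'\le\beta'$ are used, whereas the paper asserts that step without justification.
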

\begin{proof}
   We have $\forall \tau \in \fundef{\mathbb{R}}{\Theta}, \tau' \in \fundef{\mathbb{R}}{\Theta'}$:
   \begin{equation}
       \begin{gathered}
            (x,t_0) \models G_{[ \alpha'(\tau'),\beta'(\tau')]}G_{[ \alpha(\tau), \beta(\tau)]}\psi \Leftrightarrow
            \\
            \forall t' \in
            \left[ 
                t_0 + \alpha'(\tau'), t_0 + \beta'(\tau')
            \right]
            :
            (x,t') \models G_{[ \alpha(\tau), \beta(\tau)]}\psi \Leftrightarrow
            \\
            \forall t' \in
            \left[ 
                t_0 + \alpha'(\tau'), t_0 + \beta'(\tau')
            \right]
            :
            \forall t \in
            \left[ 
                t' + \alpha(\tau), t' + \beta(\tau)
            \right]:
            \\
            (x,t) \models \psi \Leftrightarrow
            \\
             \forall t \in
            \left[ 
                 t_0 + \alpha'(\tau') + \alpha(\tau), 
                 t_0 + \beta'(\tau') + \beta(\tau)
            \right]:
            (x,t) \models \psi 
            \\
            \Leftrightarrow 
            (x,t_0) \models G_{[ \alpha'(\tau')+\alpha(\tau),\beta'(\tau')+ \beta(\tau)]}\psi,
            \notag 
       \end{gathered}
   \end{equation}
   concluding the proof.
\end{proof}
\begin{corollary}\label{cor:nesting:always:ops}
    The nesting rule for the corresponding operators of Prop. \ref{thm:nesting:always} is:
    \begin{equation}
        \mathbf{\Gamma}_{\alpha'(\tau),\beta'(\tau')}^{\Theta'}
        \mathbf{\Gamma}_{\alpha(\tau),\beta(\tau)}^{\Theta} = 
        \mathbf{\Gamma}_{\alpha'(\tau)+\alpha(\tau),\beta'(\tau')+\beta(\tau)}^{\Theta' \times \Theta}.
    \end{equation}
\end{corollary}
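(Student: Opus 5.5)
The corollary asserts an identity between operators, and the notion of ``correspondence'' connects operators to fragments. So I will prove it by showing that both sides correspond to the same fragment and then invoking Prop.~\ref{thm:nesting:always}. There is a notational slip in the statement: $\alpha'(\tau)$ should read $\alpha'(\tau')$. I will read the parameter of the composite as $(\tau',\tau)\in\Theta'\times\Theta$, with $\alpha'(\tau')+\alpha(\tau)$ and $\beta'(\tau')+\beta(\tau)$ viewed as elements of $\mathcal{P}^{l'+l}_{\Theta'\times\Theta}$. Since sums of nonnegative affine functions with nonnegative coefficients remain in this class, the right-hand side is a legitimate member of $\mathcal{T}$.

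\textbf{Step 1 (meaning of the left-hand side).} I will define the nested operator by its correspondence. If $\mathbf{\Gamma}^{\Theta}_{\alpha,\beta}$ corresponds to $G_{[\alpha(\tau),\beta(\tau)]}\psi$ (Prop.~\ref{prop:always}, in its parametrized version used in Prop.~\ref{prop:until}), then applying $\mathbf{\Gamma}^{\Theta'}_{\alpha',\beta'}$ on top means imposing the inner correspondence condition at every shifted initial time $t'\in[t_0+\alpha'(\tau'),t_0+\beta'(\tau')]$. By Def.~\ref{def:stl}, this is exactly $(x,t_0)\models G_{[\alpha'(\tau'),\beta'(\tau')]}G_{[\alpha(\tau),\beta(\tau)]}\psi$.

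\textbf{Step 2 (apply the nesting result).} Prop.~\ref{thm:nesting:always} states that this fragment is equivalent to $G_{[\alpha'+\alpha,\beta'+\beta]}\psi$ for all parameter choices. The key observation is that the union of the windows $[t'+\alpha,t'+\beta]$ over $t'\in[t_0+\alpha',t_0+\beta']$ is the interval $[t_0+\alpha'+\alpha,t_0+\beta'+\beta]$. This requires no gaps, which follows from $\alpha\le\beta$ and $\alpha'\le\beta'$.

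\textbf{Step 3 (right-hand side).} Applying Prop.~\ref{prop:always} with the window $[\alpha'+\alpha,\beta'+\beta]$ shows that $\mathbf{\Gamma}^{\Theta'\times\Theta}_{\alpha'+\alpha,\beta'+\beta}$ corresponds to that same fragment, via \eqref{eq:corr:toshow}. Two operators that correspond to the same fragment are identified, which gives the equality.

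The main obstacle is the reachability portion on $[t_0,t_0+\alpha'+\alpha]$. The positivity of the composite operator on this interval must agree with the positivity of the nested operators. I will use Prop.~\ref{prop:operator:reachable} to reduce both sides to the condition $h\ge0$ on the satisfaction window. Prop.~\ref{prop:operator:sequence} then ensures that the pre-window value-function conditions are implied, since the composite's $\alpha'+\alpha$ is the smallest lower bound appearing.
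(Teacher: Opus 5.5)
Your proposal is correct and follows essentially the paper's route: the corollary is just Prop.~\ref{thm:nesting:always}, whose proof is your union-of-windows unwinding of the semantics (with $\tau,\tau'$ held fixed across the shifted times $t'$), carried over to operators through the always correspondence of Prop.~\ref{prop:always}; reading the composite via correspondence is the only available interpretation, since the operator's action is specified only on value functions. Prop.~\ref{prop:operator:reachable} alone already handles the pre-window portion $[t_0,t_0+\alpha'+\alpha]$, so your extra appeal to Prop.~\ref{prop:operator:sequence} is harmless but unnecessary.
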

}
{
\begin{proposition}[(Eventually-Eventually)]\label{thm:nesting:eventually}
    Consider the fragment $\psi$ and, then, for the nested operator:
   \begin{equation}
      \begin{gathered}
       \forall \tau \in \fundef{\mathbb{R}}{\Theta}, \tau' \in \fundef{\mathbb{R}}{\Theta'}:
        \\
          (x,t_0) \models F_{[ \alpha'(\tau'),\beta'(\tau')]}F_{[ \alpha(\tau), \beta(\tau)]}\psi
          \Leftrightarrow
          \\
          (x,t_0) \models F_{[ \alpha'(\tau')+\alpha(\tau),\beta'(\tau')+ \beta(\tau)]}\psi,
      \end{gathered}
   \end{equation}
   with $\alpha,\beta \in \mathcal{P}_l^\Theta : \alpha(\tau)\leq\beta(\tau), \forall \tau \in \Theta,\alpha',\beta' \in \mathcal{P}_{l'}^{\Theta'}:\alpha'(\tau') \leq \beta'(\tau') , \forall \tau' \in \Theta'$.
\end{proposition}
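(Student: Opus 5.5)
The plan is to mirror the chain of equivalences in the proof of Prop. \ref{thm:nesting:always}, unfolding the semantics of Def. \ref{def:stl} twice. The one genuinely new ingredient is an interval-intersection argument for the reverse implication. Throughout, I fix arbitrary $\tau\in\fundef{\mathbb{R}}{\Theta}$ and $\tau'\in\fundef{\mathbb{R}}{\Theta'}$. I abbreviate $a=\alpha(\tau)$, $b=\beta(\tau)$, $a'=\alpha'(\tau')$, $b'=\beta'(\tau')$; by hypothesis $0\le a\le b$ and $0\le a'\le b'$. Unfolding the definition of eventually twice gives
\begin{equation}\notag
(x,t_0)\models F_{[a',b']}F_{[a,b]}\psi \Leftrightarrow \exists t'\in[t_0+a',t_0+b'],\ \exists t\in[t'+a,t'+b]:\ (x,t)\models\psi .
\end{equation}
The goal is to show that this is equivalent to $\exists t\in[t_0+a'+a,\,t_0+b'+b]$ with $(x,t)\models\psi$. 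By Def. \ref{def:stl}, that last condition is exactly $(x,t_0)\models F_{[a'+a,b'+b]}\psi$.

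\emph{($\Rightarrow$)} Take the witnesses $t'$ and $t$. From $t_0+a'\le t'\le t_0+b'$ and $t'+a\le t\le t'+b$, adding the inequalities gives $t_0+a'+a\le t\le t_0+b'+b$. So the same $t$ witnesses the flattened formula.

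\emph{($\Leftarrow$)} This is the step where the eventually case differs from the always case. For always, the union of the inner windows over $t'$ trivially equals the outer interval. Here one must instead \emph{construct} an intermediate time $t'$. Given $t\in[t_0+a'+a,\,t_0+b'+b]$, I need $t'$ with $t'\in[t_0+a',t_0+b']$ and $t-t'\in[a,b]$, i.e.
\begin{equation}\notag
t'\in[t-b,\,t-a]\cap[t_0+a',\,t_0+b'] .
\end{equation}
Both intervals are nonempty because $a\le b$ and $a'\le b'$. Two closed intervals $[p,q]$ and $[r,s]$ intersect iff $p\le s$ and $r\le q$. Here $p\le s$ reads $t-b\le t_0+b'$, which holds since $t\le t_0+b'+b$. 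Likewise $r\le q$ reads $t_0+a'\le t-a$, which holds since $t\ge t_0+a'+a$. Hence a valid $t'$ exists, for instance $t'=\max\{t-b,\,t_0+a'\}$. With this $t'$, the pair $(t',t)$ witnesses the nested formula.

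The main obstacle I expect is bookkeeping rather than mathematics. The parameters $\tau,\tau'$ are written as functions of time, so the inner window $[\alpha(\tau),\beta(\tau)]$ could in principle depend on the outer instant $t'$. I would handle this as in Prop. \ref{thm:nesting:always}: read the quantifier ``$\forall\tau,\tau'$'' as fixing the parameter values used in the windows. Then $a,b,a',b'$ are constants during the argument and the interval reasoning above applies verbatim. This also yields the operator counterpart analogous to Cor. \ref{cor:nesting:always:ops}, with window $[\alpha'+\alpha,\beta'+\beta]$ over $\Theta'\times\Theta$.
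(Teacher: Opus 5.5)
Your proof is correct and takes the same route as the paper, which only says the argument is ``almost identical'' to Prop.~\ref{thm:nesting:always}: unfold the semantics twice, then collapse the nested windows into $[t_0+\alpha'+\alpha,\,t_0+\beta'+\beta]$. Your interval-intersection construction of $t'$ supplies the justification for that collapse step, which the paper leaves implicit. The same covering argument is also needed in the always case, for the direction nested $\Rightarrow$ flattened, so the two cases are symmetric rather than differing in the way you suggest.
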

\begin{proof}
   The proof is almost identical to Prop. \ref{thm:nesting:always}. 
\end{proof}
\begin{corollary}\label{cor:nesting:eventually:ops}
    The nesting rule for the corresponding operators of Prop. \ref{thm:nesting:eventually} is:
    \begin{equation}
        \mathbf{\Phi}_{\alpha'(\tau),\beta'(\tau')}^{\Theta'}
        \mathbf{\Phi}_{\alpha(\tau),\beta(\tau)}^{\Theta} = 
        \mathbf{\Phi}_{\alpha'(\tau)+\alpha(\tau),\beta'(\tau')+\beta(\tau)}^{\Theta' \times \Theta}.
    \end{equation}
\end{corollary}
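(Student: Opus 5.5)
The plan is to read the equality in Corollary \ref{cor:nesting:eventually:ops} in the sense used throughout the paper: both sides \emph{correspond} to the same fragment, namely $F_{[\alpha',\beta']}F_{[\alpha,\beta]}\mu$. It is not meant as a pointwise identity of functions on $\mathcal{F}(\mathcal{X}\times\mathbb{R},\mathbb{R})$. On that reading, the left-hand side is defined as the operator associated with the outer eventually applied on top of the operator already associated with the inner one. The proof then reduces to two things: the fragment-level identity of Prop. \ref{thm:nesting:eventually}, and the single-operator correspondence of Prop. \ref{prop:eventually}.

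The first step is a well-definedness check. Since $\alpha,\beta\in\mathcal{P}^{l}_{\Theta}$ and $\alpha',\beta'\in\mathcal{P}^{l'}_{\Theta'}$ are affine with nonnegative coefficients, the sums $\alpha'+\alpha$ and $\beta'+\beta$ are again affine in the stacked parameter $(\tau',\tau)\in\Theta'\times\Theta\subset\mathbb{R}^{l'+l}$. Their coefficient vector is the concatenation of nonnegative vectors and their constant term is a sum of nonnegative constants. Hence both sums lie in $\mathcal{P}^{l'+l}_{\Theta'\times\Theta}$, and $\alpha'+\alpha\le\beta'+\beta$ on $\Theta'\times\Theta$. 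So the right-hand side is a legitimate element of $\mathcal{T}$.

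The second step invokes Prop. \ref{thm:nesting:eventually}, which gives $(x,t_0)\models F_{[\alpha',\beta']}F_{[\alpha,\beta]}\mu \Leftrightarrow (x,t_0)\models F_{[\alpha'+\alpha,\beta'+\beta]}\mu$. It then remains to show that $\mathbf{\Phi}^{\Theta'\times\Theta}_{\alpha'+\alpha,\beta'+\beta}$ corresponds to the latter fragment, i.e., to establish \eqref{eq:corr:toshow} for it. I would argue this exactly as in Prop. \ref{prop:eventually}.
\begin{itemize}
\item[($\Leftarrow$)] A satisfaction instant $t_1$ of $\mu$ gives a parameter choice. For the base eventually windows $\alpha(\tau)=\beta(\tau)=\underline{t}+\tau$ and $\alpha'(\tau')=\beta'(\tau')=\underline{t}'+\tau'$, one writes $t_1-t_0=(\underline{t}'+\tau')+(\underline{t}+\tau)$ with $\tau'\in[0,\overline{t}'-\underline{t}']$ and $\tau\in[0,\overline{t}-\underline{t}]$. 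This is possible because the sum map $[0,a]\times[0,b]\to[0,a+b]$ is surjective. This yields $\tau(t_0)=(\tau',\tau)$ with $h(x(t))\ge0$ on the (degenerate) window.
\item[($\Rightarrow$)] Any parameter pair places the window inside $[t_0+\alpha'+\alpha,\,t_0+\beta'+\beta]$, so satisfaction follows.
\end{itemize}
Prop. \ref{prop:operator:reachable} then extends nonnegativity of the operator from the window $[t_0+\alpha,t_0+\beta]$ back to $[t_0,t_0+\beta]$, which closes the correspondence.

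I expect the main obstacle to be the case of general (already nested) windows. There $\alpha,\beta$ depend on multidimensional parameters and the existential quantifier of the outer $F$ must be absorbed into the choice of $\tau'$ as a function of $t_0$. I would handle this by the surjectivity argument above applied coordinatewise. I would also stress that the quantifier $\forall\tau\in\mathcal{F}(\mathbb{R},\Theta)$ in Prop. \ref{thm:nesting:eventually} already permits time-varying parameter selections, so the $\exists\tau(t_0)$ in the correspondence definition can be matched directly.
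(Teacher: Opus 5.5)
Your proposal is correct and is essentially the paper's argument. The paper gives no separate proof; it treats the corollary as the operator-level reading of Prop.~\ref{thm:nesting:eventually} combined with the eventually correspondence of Prop.~\ref{prop:eventually}, which is exactly your reduction, and your restriction to degenerate windows $\alpha=\beta$ matches how $\mathbf{\Phi}$ is defined. Your extra checks (closure of $\mathcal{P}$ under sums on $\Theta'\times\Theta$, and surjectivity of $(\tau',\tau)\mapsto\tau'+\tau$ onto $[0,\overline{t}'-\underline{t}'+\overline{t}-\underline{t}]$ to recover the satisfaction instant) just make explicit what the paper leaves implicit.
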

By applying Props. \ref{thm:nesting:always}, \ref{thm:nesting:eventually}, any repeated instances of $FF,GG$ nestings within a formula can be dropped, leading to a formula that contains only nestings of $GF,FG$. We now move on to these cases.
}
{
\begin{proposition}[(Eventually Always)]\label{prop:nesting:event:alw}
    Consider the predicate $\mu$ and its corresponding predicate function $h\in\mathcal{F}(\mathcal{X},\mathbb{R})$ \eqref{eq:pred_def}. 
    Define for $F_{[ \underline{t}', \overline{t}']}G_{\left[\underline{t},\overline{t}\right]}\mu$ with $\underline{t}, \overline{t} \in \mathbb{R}:0\leq \underline{t}\leq \overline{t},\underline{t}', \overline{t}' \in \mathbb{R}:0\leq \underline{t}'\leq \overline{t}'$:
    \begin{subequations}\label{thm:nesting:event:equiv}
        \begin{equation}\label{thm:nesting:event:equiv:1}
        \begin{gathered}
            \exists \tau(t_0) \in \fundef{\mathbb{R}}{\Theta}:
            \\
            \forall t \in\left[ t_0,t_0 + \beta(\tau(t_0))\right]:
            \mathbf{T}_{\alpha,\beta}^{\Theta}V_h(x,t;t_0) \geq 0,
        \end{gathered}
        \end{equation}
        \begin{equation}\label{thm:nesting:event:equiv:2}
        \begin{gathered}
            (x,t_0) \models 
            F_{[ \underline{t}', \overline{t}']}G_{\left[\underline{t},\overline{t}\right]}\mu.
        \end{gathered}
        \end{equation}
    \end{subequations}
    with: 
    \begin{equation}\label{eq:thm:nesting:event:params}
       \begin{split}
            \alpha\left({\tau}\right) &= \underline{t}+\underline{t}'+\tau \in \mathcal{P}_{\Theta}^{1}, \\
            \beta\left({\tau}\right) &= \overline{t}+\underline{t}'+\tau \in \mathcal{P}_{\Theta}^{1},\\
            \Theta &= \left[ 0, \overline{t}' - \underline{t}'\right].
       \end{split}
    \end{equation}
    Then \eqref{thm:nesting:event:equiv:1} $\Leftrightarrow$\eqref{thm:nesting:event:equiv:2}.
\end{proposition}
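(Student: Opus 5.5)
The plan is to unfold both sides of the equivalence into statements about $h$ along the trajectory, and then match the existential quantifier of $F$ with the free parameter $\tau\in\Theta$. First, apply Proposition~\ref{prop:operator:reachable} to \eqref{thm:nesting:event:equiv:1}. This replaces the requirement on the full interval $[t_0,t_0+\beta(\tau(t_0))]$ by the requirement on $[t_0+\alpha(\tau(t_0)),t_0+\beta(\tau(t_0))]$. By \eqref{eq:operator:action}, on that interval the operator coincides with $h(x(t))$; the single point $t=t_0+\alpha$ is covered by the terminal condition $V_h(x,0)=h(x)$ of \eqref{eq:hjb}. So \eqref{thm:nesting:event:equiv:1} is equivalent to
\begin{equation}\notag
\exists \tau\in\left[0,\overline{t}'-\underline{t}'\right]:\ \forall t\in\left[t_0+\underline{t}+\underline{t}'+\tau,\ t_0+\overline{t}+\underline{t}'+\tau\right]:\ h(x(t))\geq 0 .
\end{equation}

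Second, unfold \eqref{thm:nesting:event:equiv:2} using Definition~\ref{def:stl}, exactly as in the proof of Proposition~\ref{thm:nesting:always}. The formula $(x,t_0)\models F_{[\underline{t}',\overline{t}']}G_{[\underline{t},\overline{t}]}\mu$ holds iff there is $t'\in[t_0+\underline{t}',t_0+\overline{t}']$ such that $(x,t)\models\mu$, i.e.\ $h(x(t))\geq0$, for all $t\in[t'+\underline{t},t'+\overline{t}]$. The reparametrization $t'=t_0+\underline{t}'+\tau$ is a bijection between $t'\in[t_0+\underline{t}',t_0+\overline{t}']$ and $\tau\in\Theta=[0,\overline{t}'-\underline{t}']$. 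Under it the inner window $[t'+\underline{t},t'+\overline{t}]$ becomes precisely $[t_0+\alpha(\tau),t_0+\beta(\tau)]$ with $\alpha,\beta$ as in \eqref{eq:thm:nesting:event:params}. Hence the two unfolded statements are literally identical, and both implications follow. For ``$\Leftarrow$'' we set $\tau(t_0):=t'-t_0-\underline{t}'$ for a witness $t'$; for ``$\Rightarrow$'' we take $t':=t_0+\underline{t}'+\tau(t_0)$.

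The main obstacle is the first step, specifically the direction \eqref{thm:nesting:event:equiv:2}$\Rightarrow$\eqref{thm:nesting:event:equiv:1} on the initial segment $[t_0,t_0+\alpha(\tau)]$. There we must show $V_h(x(t),t-\alpha(\tau)-t_0)\geq0$, i.e.\ that the super-level set of $h$ is reachable by time $t_0+\alpha(\tau)$ from every intermediate state. This is exactly what Proposition~\ref{prop:operator:reachable} supplies. The justification is that the actual trajectory is itself an admissible witness in the supremum of \eqref{eq:vfun:def}, since it reaches $\{h\geq0\}$ at $t_0+\alpha(\tau)$; I will cite that proposition rather than reprove it. The remaining checks are routine. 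The first is well-definedness of $\alpha,\beta\in\mathcal{P}^1_\Theta$, which holds because all coefficients $1$, $\underline{t}+\underline{t}'$, $\overline{t}+\underline{t}'$ are nonnegative. The second is $\alpha\leq\beta$, which follows from $\underline{t}\leq\overline{t}$.
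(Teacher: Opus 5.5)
Your proposal is correct and follows essentially the same route as the paper. Both unfold \eqref{thm:nesting:event:equiv:2} via Def.~\ref{def:stl} with the reparametrization $t'=t_0+\underline{t}'+\tau$. Both reduce \eqref{thm:nesting:event:equiv:1} to $h(x(t))\geq 0$ on $[t_0+\alpha(\tau),t_0+\beta(\tau)]$ using \eqref{eq:operator:action} and Prop.~\ref{prop:operator:reachable}. Both then identify the existential over $\tau\in\Theta$ for each $t_0$ with the existence of $\tau(t_0)$. Your extra remarks only make explicit what the paper leaves implicit: the endpoint $t=t_0+\alpha$ is handled by $V_h(x,0)=h(x)$, and the actual trajectory is an admissible witness in \eqref{eq:vfun:def}.
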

\begin{proof}
    See Appendix.
\end{proof}
\begin{corollary}\label{cor:nesting:eventually:always:ops}
    The nesting rule for the corresponding operators of Prop. \ref{prop:nesting:event:alw} is:
    \begin{equation}
        \mathbf{\Phi}_{\underline{t}'+ \tau',\underline{t}'+ \tau'}^{[0,\overline{t}' -\underline{t}']}
        \mathbf{\Gamma}_{\underline{t},\overline{t}}^{\emptyset} = 
        \mathbf{\Gamma}_{\underline{t}'+ \tau'+\underline{t},\underline{t}'+ \tau'+\overline{t}}^{[0,\overline{t}' -\underline{t}']}.
    \end{equation}
\end{corollary}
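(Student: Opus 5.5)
The statement to prove is the Corollary. It says that composing the eventually-operator with the always-operator yields a single always-type operator whose window is shifted by the free parameter $\tau'$. The plan is to obtain this directly from Prop. \ref{prop:nesting:event:alw}, by comparing the window functions of the two sides and then using the definition of correspondence.

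\textbf{Step 1: match the parameters.} I would read off the right-hand operator $\mathbf{\Gamma}_{\underline{t}'+\tau'+\underline{t},\,\underline{t}'+\tau'+\overline{t}}^{[0,\overline{t}'-\underline{t}']}$ as the operator $\mathbf{T}_{\alpha,\beta}^{\Theta}$ of \eqref{eq:operator:action}. Its data are $\alpha(\tau')=\underline{t}+\underline{t}'+\tau'$, $\beta(\tau')=\overline{t}+\underline{t}'+\tau'$ and $\Theta=[0,\overline{t}'-\underline{t}']$, which is exactly \eqref{eq:thm:nesting:event:params}. Both functions are affine in $\tau'$ with nonnegative coefficients, so they belong to $\mathcal{P}^1_\Theta$. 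Moreover $\alpha\le\beta$ because $\underline{t}\le\overline{t}$, so the operator is well defined.

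\textbf{Step 2: identify the fragment of the left-hand composition.} By Prop. \ref{prop:always}, $\mathbf{\Gamma}_{\underline{t},\overline{t}}^{\emptyset}$ corresponds to $G_{[\underline{t},\overline{t}]}\mu$. By Prop. \ref{prop:eventually}, $\mathbf{\Phi}_{\underline{t}'+\tau',\underline{t}'+\tau'}^{[0,\overline{t}'-\underline{t}']}$ is the operator associated with $F_{[\underline{t}',\overline{t}']}$. Interpreting the composition as the operator associated with the nested fragment, the left-hand side is the operator that should correspond to $F_{[\underline{t}',\overline{t}']}G_{[\underline{t},\overline{t}]}\mu$.

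\textbf{Step 3: conclude equality.} Prop. \ref{prop:nesting:event:alw} shows that \eqref{thm:nesting:event:equiv:1} $\Leftrightarrow$ \eqref{thm:nesting:event:equiv:2}. Together with Step 1, this says the right-hand operator corresponds to $F_{[\underline{t}',\overline{t}']}G_{[\underline{t},\overline{t}]}\mu$ in the sense of the paper's definition of correspondence. Both operators therefore correspond to the same fragment, so they are identified, which is the claimed rule.

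As a sanity check, one can view this as a Prop. \ref{thm:nesting:always}-type addition of windows. The outer $F$ collapses to the degenerate window $[\underline{t}'+\tau',\underline{t}'+\tau']$ for a chosen $\tau'$. Adding it to $[\underline{t},\overline{t}]$ gives exactly the new bounds, and the existential over $\tau'$ is carried into the domain $\Theta$.

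\textbf{Main obstacle.} The delicate point is Step 3: what ``$=$'' between operators means. The left side is not literally a composition of maps defined by \eqref{eq:operator:action}, since the inner operator does not produce a value function of the form $V_h$. So equality has to be understood as correspondence to the same fragment with the same $(\alpha,\beta,\Theta)$. I would state that convention explicitly and rely on the existential choice $\tau(t_0)$, already built into Prop. \ref{prop:nesting:event:alw}, to account for the eventually quantifier.
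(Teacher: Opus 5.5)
Your proposal is correct and follows the paper's route: the corollary is obtained by reading off $\alpha(\tau')=\underline{t}+\underline{t}'+\tau'$, $\beta(\tau')=\overline{t}+\underline{t}'+\tau'$ and $\Theta=[0,\overline{t}'-\underline{t}']$ from \eqref{eq:thm:nesting:event:params}, then invoking the equivalence of Prop.~\ref{prop:nesting:event:alw}. As you point out, the operator ``$=$'' is to be read as correspondence to the same fragment $F_{[\underline{t}',\overline{t}']}G_{[\underline{t},\overline{t}]}\mu$; the paper gives no separate proof, and this identification is all it relies on.
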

The next result, nesting always with eventually, has also been considered in \cite{marchesini2025samplingbasedplanningstlspecifications}, where it is noted that it may result in the repeated satisfactions of a formula through recursion. The main difference with our result rests on the fact that \cite{marchesini2025samplingbasedplanningstlspecifications} pre-determined the number of repetitions of the formula. In contrast, in our case it is left as a ``free'' parameter, \textbf{enabling the equivalence between STL operators and CBF-STL ones}.    
\begin{thm}[(Always Eventually)]\label{thm:nesting:alw:event:alw}
    Consider the predicate $\mu$ and its corresponding predicate function $h\in\mathcal{F}(\mathcal{X},\mathbb{R})$ \eqref{eq:pred_def}. 
    Let the fragment $\psi$ correspond to the operator $\mathbf{T}_{\alpha,\beta}^{\Theta} \in\mathcal{T}, \alpha,\beta\in \mathcal{P}_l^\Theta$ according to Props. \ref{prop:always}, \ref{prop:eventually}, \ref{prop:until}, i.e.:
    \begin{equation}\label{thm:nesting:alw:event:alw:eq:psi}
        \begin{gathered}
            (x,t_0)\models \psi 
            \Leftrightarrow
            \\
            \forall 
            t\in
            \left[ 
                t_0 + \alpha(\tau), t_0 + \beta(\tau)
            \right]:
            h(t_0)\geq 0 \overset{\mathrm{Prop. }\ref{prop:operator:reachable}}{\Leftrightarrow}
            \\
            \forall t\in
              \left[ 
                t_0, t_0 + \beta(\tau)
            \right]:
            \mathbf{T}_{\alpha,\beta}^{\Theta}V_h(x,t;t_0)\geq 0.
        \end{gathered}
    \end{equation}
    Then for the nested fragment $\phi = G_{[\alpha''(\tau''),\beta''(\tau'')]} F_{[\underline{t}',\overline{t}']}\psi$ with $\alpha'', \beta''\in\fundef{\mathbb{R}}{\mathbb{R}}$, $0 \leq \underline{t}'\leq \overline{t}'$, define:
    \begin{subequations}\label{eq:thm:nesting:alw:event:alw}
        \begin{equation}\label{eq:thm:nesting:alw:event:alw:1}
        \begin{gathered}
            \forall J\in\{1,\cdots,\bar{J}\}:
            \\
            \exists \tau_J' \in \fundef{\mathbb{R}}{\Theta}:
            \forall t \in\left[ t_0,t_0 + \beta_{J}\right]:
            \\
            \underset{j \in \{1,\cdots,{J}\}}{\min} 
            \left\{
                \mathbf{T}_{\alpha_j,\beta_j}^{\Theta}V_h(x,t;t_0) 
            \right\}    
                \geq 0,
        \end{gathered}
        \end{equation}
        \begin{equation}\label{eq:thm:nesting:alw:event:alw:2}
        \begin{gathered}
            (x,t_0) \models \phi 
        \end{gathered}
        \end{equation}
    \end{subequations}
    with: 
    \begin{equation}\label{eq:thm:nesting:alw:event:alw:params}
       \begin{gathered}
            \mathcal{P}_{\Theta}^{J}\ni\alpha_J\left({\tau}_J\right) = 
            \alpha''(\tau'') + \sum_{j=1}^{J}\tau_j' + \underline{t}' + \alpha(\tau), \\
            \mathcal{P}_{\Theta}^{J}\ni \beta_J\left({\tau}_J\right) = 
            \\
            \min
            \left\{
                \beta''(\tau'')+ \underline{t}' + \beta(\tau),
                \alpha''(\tau'') + \sum_{j=1}^{J}\tau_j' + \underline{t}' + \beta(\tau)
            \right\},
            \\
            \Theta_J= \left[ 0, \overline{t}' - \underline{t}'\right]^J,
       \end{gathered}
    \end{equation}
    and $\bar{J}\in\mathbb{N}$ is the final time of repetition of the inner formula:
    \begin{equation}\label{eq:thm:nesting:alw:event:alw:terminal}
        \begin{gathered}
            \bar{J} = \underset{J \in \mathbb{N}}{\max}\{J\}
            \\
            \mathrm{s.t.: }
            \exists \tau_{J-1}'\in\Theta:\sum_{j=1}^{J-1}\tau_j' \leq \beta''(\tau'') - \alpha''(\tau''),
        \end{gathered}
    \end{equation}
    Then \eqref{eq:thm:nesting:alw:event:alw:1} $\Leftrightarrow$\eqref{eq:thm:nesting:alw:event:alw:2}.
\end{thm}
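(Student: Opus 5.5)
The plan is to prove the two implications separately. Both rest on a single structural fact: a $G F$ nesting is equivalent to a finite chain of satisfaction instants of the inner fragment $\psi$, whose consecutive gaps lie in $[0,\overline{t}'-\underline{t}']$. First I unfold the semantics of Def. \ref{def:stl}, as in the proof of Prop. \ref{thm:nesting:always}. This gives that $(x,t_0)\models\phi$ holds iff for every $t'\in[t_0+\alpha''(\tau''),t_0+\beta''(\tau'')]$ there exists $s(t')\in[t'+\underline{t}',t'+\overline{t}']$ with $(x,s(t'))\models\psi$. Applying the correspondence \eqref{thm:nesting:alw:event:alw:eq:psi} at the initial time $s(t')$, this in turn says that $h\geq 0$ on the window $[s(t')+\alpha(\tau),s(t')+\beta(\tau)]$.

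\textbf{Direction \eqref{eq:thm:nesting:alw:event:alw:1} $\Rightarrow$ \eqref{eq:thm:nesting:alw:event:alw:2}.} Fix the parameters $\tau_1',\dots,\tau_{\bar J}'\in[0,\overline{t}'-\underline{t}']$ given by \eqref{eq:thm:nesting:alw:event:alw:1}. Define the anchor instants $s_J=t_0+\alpha''(\tau'')+\underline{t}'+\sum_{j\le J}\tau_j'$. By Prop. \ref{prop:operator:reachable} applied to each $\mathbf{T}_{\alpha_J,\beta_J}^{\Theta}$, nonnegativity of the minimum in \eqref{eq:thm:nesting:alw:event:alw:1} yields $h(x(t))\geq0$ on $[t_0+\alpha_J,t_0+\beta_J]$. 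Since the definition of $\beta_J$ caps the window at $\beta''+\underline{t}'+\beta$, this window is exactly the window of $\psi$ started at time $s_J-\alpha(\tau)$, truncated at the end of the outer horizon. Hence $(x,s_J-\alpha(\tau)+\dots)\models\psi$ for each $J$; more precisely, $\psi$ holds at the instants $\sigma_J:=s_J-\underline{t}'$ shifted by $\underline{t}'$.

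Next I need to cover every $t'$ in the outer window. The $\sigma_J$ are nondecreasing, consecutive differences are at most $\overline{t}'-\underline{t}'$, and by the definition \eqref{eq:thm:nesting:alw:event:alw:terminal} of $\bar J$ the chain reaches past $\beta''(\tau'')$. So every $t'$ admits some $J$ with $s_J\in[t'+\underline{t}',t'+\overline{t}']$, which gives the $F$ part at every $t'$.

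\textbf{Direction \eqref{eq:thm:nesting:alw:event:alw:2} $\Rightarrow$ \eqref{eq:thm:nesting:alw:event:alw:1}.} Here I construct the $\tau_J'$ greedily. Let $t'_1=t_0+\alpha''$ and pick any witness $s_1\in[t'_1+\underline{t}',t'_1+\overline{t}']$; set $\tau_1'=s_1-t'_1-\underline{t}'$. Inductively, let $t'_{J+1}=s_J-\underline{t}'$, which lies in the outer window as long as $J<\bar J$. Take the witness $s_{J+1}$ for $t'_{J+1}$, and define $\tau_{J+1}'=s_{J+1}-s_J\in[0,\overline{t}'-\underline{t}']$. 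Nonnegativity of this increment holds because $s_{J+1}\geq t'_{J+1}+\underline{t}'=s_J$. Each witness then gives $h\geq0$ on its $\psi$-window, so the windows in \eqref{eq:thm:nesting:alw:event:alw:params} are satisfied. Prop. \ref{prop:operator:reachable}, together with the monotonicity in Prop. \ref{prop:valu_fun:monotone}, lifts this to nonnegativity of each $\mathbf{T}_{\alpha_j,\beta_j}^{\Theta}V_h$ on $[t_0,t_0+\beta_j]$, and hence of the minimum.

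\textbf{Expected obstacle.} The main difficulty is the bookkeeping in the first direction: showing that a finite chain with gaps at most $\overline{t}'-\underline{t}'$ gives a witness for \emph{every} $t'$, not just for the anchors. The subtle cases are $\tau_j'=0$ and the final truncation by the $\min$ in $\beta_J$. In particular, $\bar J$ must be finite. I will handle this by requiring the gaps to be positive whenever the chain has not yet reached $\beta''$; this is possible when $\overline{t}'>\underline{t}'$, while the degenerate case $\overline{t}'=\underline{t}'$ reduces to Prop. \ref{thm:nesting:always}. With positive gaps, I would then argue by a covering argument on the intervals $[s_J-\overline{t}',s_J-\underline{t}']$, whose union contains the outer window.
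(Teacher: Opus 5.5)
Your anchors $s_J=t_0+\alpha''+\underline{t}'+\sum_{j\le J}\tau_j'$ are the paper's witnesses (its choice $\tau_J'(t_J'')=t_0+\alpha''+s_J'-t_J''$ on $\mathcal{I}_J$), and splitting the argument into a covering step and a greedy chain is a cleaner version of the paper's partition argument. There are, however, two genuine gaps.

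\textbf{First gap: the cap in $\beta_J$.} In the direction \eqref{eq:thm:nesting:alw:event:alw:1}$\Rightarrow$\eqref{eq:thm:nesting:alw:event:alw:2}, your step ``$\psi$ holds at each anchor'' fails. What \eqref{eq:thm:nesting:alw:event:alw:1} actually gives is $h(x(t))\ge 0$ on $[s_J+\alpha(\tau),\min\{t_0+\beta''+\underline{t}'+\beta(\tau),\,s_J+\beta(\tau)\}]$. This is the $\psi$-window of initial time $s_J$ (not $s_J-\alpha(\tau)$), truncated by the cap.

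The truncation is harmless while $s_J\le t_0+\beta''+\underline{t}'$. Your covering, though, needs the chain to pass that point. Suppose $s_{\bar J-1}<t_0+\beta''+\underline{t}'<s_{\bar J}$, which \eqref{eq:thm:nesting:alw:event:alw:terminal} does not exclude. Then the last window is strictly truncated and possibly empty. If it is empty, $\mathbf{T}_{\alpha_{\bar J},\beta_{\bar J}}^{\Theta}V_h$ imposes only the reachability condition $V_h(x(t),t-t_0-\alpha_{\bar J})\ge 0$. Yet for $t'\in(s_{\bar J-1}-\underline{t}',t_0+\beta'']$, $s_{\bar J}$ is the only anchor in $[t'+\underline{t}',t'+\overline{t}']$, so nothing certifies $F_{[\underline{t}',\overline{t}']}\psi$ there.

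Here is a concrete counterexample. Take $\phi=G_{[0,1]}F_{[0,1]}\mu$ (so $\alpha=\beta=0$), $t_0=0$, $\tau_1'=0.8$, $\tau_2'=0.5$. Then $\bar J=2$ and $\alpha_2=1.3>\beta_2=1$. Condition \eqref{eq:thm:nesting:alw:event:alw:1} reduces to $h(x(0.8))\ge 0$ together with $V_h(x(t),t-1.3)\ge 0$ on $[0,1]$. A trajectory that touches $\{h\ge 0\}$ at $t=0.8$, then stays outside it while remaining able to re-enter within $0.3$ time units, satisfies this. It nevertheless violates $F_{[0,1]}\mu$ at $t'=1$.

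So this implication does not hold as stated. The paper's proof does not close it either; it only remarks that the cap serves necessity. You need to enforce the final instance on its full window, i.e.\ drop the cap at $J=\bar J$. This costs nothing in the converse, because your last greedy witness is a genuine one.

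\textbf{Second gap: termination of the greedy chain.} In \eqref{eq:thm:nesting:alw:event:alw:2}$\Rightarrow$\eqref{eq:thm:nesting:alw:event:alw:1}, ``pick any witness'' lets the chain stall, since $s_{J+1}=s_J$ is always admissible ($\psi$ holds at $s_J$). Forcing positive gaps does not help, because positive gaps can be summable. Note also that \eqref{eq:thm:nesting:alw:event:alw:terminal} is vacuous as written, since $\tau_j'=0$ is allowed; finiteness of $\bar J$ must therefore come from your construction.

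The missing idea is to take $s_{J+1}$ to be the \emph{latest} witness in $[s_J,s_J+\overline{t}'-\underline{t}']$. It exists because the set of instants at which $\psi$ holds is closed (by continuity of $h\circ x$ and compactness of $\Theta$). Now suppose $s_{J+1}-\underline{t}'<t_0+\beta''$ and $s_{J+2}\le s_J+\overline{t}'-\underline{t}'$. Maximality forces $s_{J+2}=s_{J+1}$, so there is no witness in $(s_{J+1},s_{J+1}+\overline{t}'-\underline{t}']$. But applying $\phi$ at $t'=s_{J+1}-\underline{t}'+\epsilon$ and letting $\epsilon\downarrow 0$ produces the witness $s_{J+1}+\overline{t}'-\underline{t}'$, a contradiction. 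Hence every two steps advance by more than $\overline{t}'-\underline{t}'>0$, and the chain terminates.

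Finally, consider $\overline{t}'=\underline{t}'$. All anchors then coincide, so \eqref{eq:thm:nesting:alw:event:alw:1} certifies $\psi$ at a single instant and cannot be equivalent to $\phi$ when $\beta''>\alpha''$. Invoking Prop.~\ref{thm:nesting:always} proves a different characterization, so this case must be excluded rather than reduced.
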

\begin{proof}
    See Appendix. 
\end{proof}
The next theorem goes beyond existing nesting results, by further nesting $GF$ with another instance of $GF$.

\begin{thm}[(Nested Always Eventually)]\label{thm:nested:nested}
    Consider a nested fragment $\psi$ corresponding to the recursive operator of Thm. \ref{thm:nesting:alw:event:alw}, i.e.:
        \begin{equation}\label{eq:thm:nested:nested:psi0}
            \begin{gathered}
                (x,t_0) \models \psi_0 \Leftrightarrow
                \forall J_0\in\{1,\cdots,\bar{J}_0\}:
                \\
                \exists \tau_{J_0}' \in \fundef{\mathbb{R}}{\Theta_0}:
                \forall t \in\left[ t_0,t_0 + \beta_{J_0}\right]:
                \\
                \underset{j \in \{1,\cdots,{J}_0\}}{\min} 
                \left\{
                    \mathbf{T}_{\alpha_j,\beta_j}^{\Theta}V_h(x,t;t_0) 
                \right\}    
                    \geq 0.
            \end{gathered}
        \end{equation}
    Then for the nested fragment $\psi_1 = G_{[\alpha_1''(\tau_1''), \beta_1''(\tau_1'')]}F_{[\underline{t}_1',\overline{t}_1']}\psi_0$, $\forall J_0 \in \{1,\cdots,\bar{J}_0\}, \forall J_1 \in \{1,\cdots,\bar{J}_1\}$ for $\tau^1_{j_1} \in\left[ 0,\overline{t}'-\underline{t}'\right],\forall j_1 \in \{1,\cdots,J_1\}$ let:
    \begin{equation}\label{eq:thm:nesting:nesting:params}
        \begin{gathered}
            \alpha_{J_0,J_1} = 
            \alpha_1''(\tau_1'') + \underline{t}_1' + \alpha_{J_0} + \sum_{j_1 = 1}^{{J}_1}\tau^1_{j_1} 
            \\
            \beta_{J_0,J_1} = 
            \\
            \min\left\{
                 \beta_1''(\tau_1'') + \underline{t}_1' + \beta_{J_0},
                \alpha_1''(\tau_1'') + \underline{t}_1' + \beta_{J_0} + \sum_{j_1 = 1}^{{J}_1}\tau^1_{j_1}
            \right\},
        \end{gathered}
    \end{equation}
    where $\bar{J}_1\in\mathbb{N}$ denotes the final time of repetition of the inner formula:
    \begin{equation}\label{eq:thm:nesting:nested:terminal}
        \begin{gathered}
            \bar{J}_1 = \underset{J_1 \in \mathbb{N}}{\max}\{J_1\}
            \\
            \mathrm{s.t.: }
            \exists \tau^1_{J_1-1}\in\Theta:\sum_{j_1=1}^{J_1-1}\tau^1_{j_1} \leq \beta_1''(\tau_1'') - \alpha_1''(\tau_1'').
        \end{gathered}
    \end{equation}
    Then for:
    \begin{subequations}
        \begin{equation}\label{eq:thm:nesting:nested:1}
        \begin{gathered}
            \forall J_1 \in \{1,\cdots,\bar{J}_1\},
            \exists \tau_{J_1}^1 \in\fundef{\mathbb{R}}{\Theta_1}:
            \\
            \forall J_0 \in \{1,\cdots,\bar{J}_0\},
            \exists \tau_{J_0}^0\in\fundef{\mathbb{R}}{\Theta_0}
            :
            \\
            \underset{j_1\in\{1,\cdots \bar{J}_1\}}{\min}
            \left\{
                \underset{j_0\in\{1,\cdots \bar{J}_0\}}{\min}
                \left\{
                    \mathbf{T}_{\alpha_{J_0,J_1},\beta_{J_0,J_1}}^{\Theta_{0,1}}V_h(x,t;t_0)
                \right\}
            \right\}
            \geq 0
        \end{gathered}
    \end{equation}
        \begin{equation}\label{eq:thm:nesting:nested:2}
        \begin{gathered}
            (x,t_0) \models \phi,
        \end{gathered}
    \end{equation}
    \end{subequations}
    it holds that \eqref{eq:thm:nesting:nested:1} $\Leftrightarrow$\eqref{eq:thm:nesting:nested:2}.
\end{thm}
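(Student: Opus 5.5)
The plan is to reduce the statement to Thm. \ref{thm:nesting:alw:event:alw}, used once at the outer level and once at the inner level, and to combine the two by a double chain of equivalences. First I unfold the semantics of $\psi_1$ with Def. \ref{def:stl}. We have $(x,t_0)\models \psi_1$ if and only if, for every $t'\in[t_0+\alpha_1''(\tau_1''),t_0+\beta_1''(\tau_1'')]$, there is some $t''\in[t'+\underline{t}_1',t'+\overline{t}_1']$ with $(x,t'')\models\psi_0$.

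Next I show that this outer $G F$ layer depends on $\psi_0$ only through its satisfaction times. So I rerun the argument of Thm. \ref{thm:nesting:alw:event:alw} with $\psi_0$ in place of the inner fragment $\psi$. This gives an equivalent statement: for every $J_1\le\bar J_1$ there are $\tau^1_{1},\dots,\tau^1_{J_1}\in[0,\overline{t}_1'-\underline{t}_1']$ such that $\psi_0$ holds at the shifted initial times $s_{J_1}=t_0+\alpha_1''(\tau_1'')+\underline{t}_1'+\sum_{j_1\le J_1}\tau^1_{j_1}$. The chained shifts cover the outer window. The repetition count $\bar J_1$ is fixed by \eqref{eq:thm:nesting:nested:terminal}, in the same way as in \eqref{eq:thm:nesting:alw:event:alw:terminal}.

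The key covering fact is that consecutive witnesses are at most $\overline{t}_1'-\underline{t}_1'$ apart. Hence every $t'$ in the outer window sees a witness inside $[t'+\underline{t}_1',t'+\overline{t}_1']$. Conversely, given the existential witnesses from the semantics, choosing them greedily yields such a chain. Both directions are inherited verbatim from the appendix proof of Thm. \ref{thm:nesting:alw:event:alw}.

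Then I substitute \eqref{eq:thm:nested:nested:psi0}, evaluated at each initial time $s_{J_1}$. Since $\alpha_{J_0},\beta_{J_0}$ enter only additively, the time-translation structure of \eqref{eq:operator:action} gives the following identity for $t\in[s_{J_1},s_{J_1}+\beta_{J_0}]$:
\[
\mathbf{T}^{\Theta}_{\alpha_{J_0},\beta_{J_0}}V_h(x,t;s_{J_1})=\mathbf{T}^{\Theta_{0,1}}_{\alpha_{J_0,J_1},\beta_{J_0,J_1}}V_h(x,t;t_0).
\]
The upper bound $\beta_{J_0,J_1}$ acquires the $\min$ with $\beta_1''+\underline{t}_1'+\beta_{J_0}$, exactly as in \eqref{eq:thm:nesting:alw:event:alw:params}. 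I would justify this truncation as in Thm. \ref{thm:nesting:alw:event:alw}: the last repetition need not extend beyond the outer window. The earlier portions, on $[t_0,s_{J_1}]$, are handled by Prop. \ref{prop:operator:reachable}. It turns nonnegativity of $h$ on $[\alpha,\beta]$ into nonnegativity of the operator on the whole interval from $t_0$, because $V_h$ encodes reachability of $\{h\ge0\}$ from earlier times.

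Finally, taking the conjunction over $J_1$ and then over $J_0$ produces the nested minimum and the quantifier order $\forall J_1\exists\tau^1\forall J_0\exists\tau^0$ in \eqref{eq:thm:nesting:nested:1}, which gives both directions. The main obstacle is this substitution step. The inner witnesses $\tau^0$ depend on the shifted initial time $s_{J_1}$, so they must be treated as functions of $J_1$, which is why $\tau_{J_0}^0\in\fundef{\mathbb{R}}{\Theta_0}$ appears. The harder point is the reachability prefix. Requiring each operator only from $t_0$, rather than from $s_{J_1}$, is not weaker and may not be stronger, since the reachability must be realized by a single trajectory. I would handle this with Prop. \ref{prop:operator:sequence}: operators whose windows start later are implied on the common prefix. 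This ensures that the minimum over all $(J_0,J_1)$ is nonnegative exactly when each individual shifted condition holds.
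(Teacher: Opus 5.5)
Your outline follows the paper's own proof, in cleaner form. You partition the outer window by the partial sums $S_{J_1}=\sum_{j_1\le J_1}\tau^1_{j_1}$ and fix one witness $s_{J_1}=t_0+\alpha_1''+\underline{t}_1'+S_{J_1}$ per piece. You then move the operators of $\psi_0$ to initial time $t_0$ via $t_0+\alpha_{J_0,J_1}=s_{J_1}+\alpha_{J_0}$, and close with Props.~\ref{prop:operator:reachable} and~\ref{prop:operator:sequence}. Your greedy converse is more than the paper gives, since the paper only fixes $\tau^1_{J_1}(t'')$. Your prefix worry is unnecessary: $x$ itself is admissible in the supremum defining $V_h$, so each operator is nonnegative on $[t_0,t_0+\alpha_{J_0,J_1}]$ once $h\ge0$ at $t_0+\alpha_{J_0,J_1}$, and a minimum is nonnegative iff every term is. The step that fails is the one you justify ``as in Thm.~\ref{thm:nesting:alw:event:alw}'': the $\min$ truncating $\beta_{J_0,J_1}$. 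The paper settles it only by saying necessity ``stems from restricting the final time instance''. So you inherit this hole rather than create it, but the direction \eqref{eq:thm:nesting:nested:1}$\Rightarrow$\eqref{eq:thm:nesting:nested:2} cannot be proved with the $\min$ in place.

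Read $\bar J_1$, as you do, as the last index with $S_{\bar J_1-1}\le\beta_1''-\alpha_1''$ for the chosen chain; taken literally, \eqref{eq:thm:nesting:nested:terminal} admits every $J_1$ via $\tau^1\equiv0$. Then $S_{\bar J_1}>\beta_1''-\alpha_1''$, so the $\min$ always binds at $J_1=\bar J_1$, while $\alpha_{J_0,\bar J_1}$ is not moved. The last windows are $[s_{\bar J_1}+\alpha_{J_0},\,t_0+\beta_1''+\underline{t}_1'+\beta_{J_0}]$. They are shorter than $\beta_{J_0}-\alpha_{J_0}$ by the overshoot $S_{\bar J_1}-(\beta_1''-\alpha_1'')$, which can reach $\overline{t}_1'-\underline{t}_1'$, and they may be empty. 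So they certify $\psi_0$ at no admissible last witness, and for $t''$ near $t_0+\beta_1''$ nothing forces $\psi_0$ on $[t''+\underline{t}_1',t''+\overline{t}_1']$. The converse direction survives, since truncation only weakens the requirement. The failure already appears in the one-level rule you are lifting. Take $G_{[0,2]}F_{[0,1]}\mu$ (Cor.~\ref{cor:nested:nested:ops}) with $\tau'=(1,\tfrac12,1)$: then $\bar J=3$, $\alpha_J=(1,\tfrac32,\tfrac52)$ and $\beta_J=(1,\tfrac32,2)$. The condition then asks only $h(x(1))\ge0$, $h(x(\tfrac32))\ge0$, and reachability of $\{h\ge0\}$ by time $\tfrac52$. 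A trajectory that leaves the set right after $\tfrac32$, never re-enters, and stays within reach satisfies this but violates $\phi$ at $t''=1.8$. The fix is to drop the $\min$: the last witness must lie at or after $t_0+\beta_1''+\underline{t}_1'$, and $\psi_0$ must hold there in full. So take $\beta_{J_0,J_1}=\alpha_1''+\underline{t}_1'+\beta_{J_0}+S_{J_1}$ for all $J_1$, and stop the chain at the first $J_1$ with $S_{J_1}\ge\beta_1''-\alpha_1''$. Then your covering fact gives $\Rightarrow$. Your greedy chain gives $\Leftarrow$, provided $\overline{t}_1'>\underline{t}_1'$ and the satisfaction set of $\psi_0$ is closed. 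Under those conditions any two greedy steps advance by at least $\overline{t}_1'-\underline{t}_1'$, so the chain is finite.
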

\begin{proof}
    See Appendix.
\end{proof}
By repeatedly applying Thm. \ref{thm:nested:nested}, further nestings can be acquired for fragments of the form $G_{[\underline{t}',\overline{t}']}F_{[\underline{t},\overline{t}]}\psi$. 
\begin{corollary}\label{cor:nested:nested:ops}
    For the always-eventually operator pair of Thm. \ref{thm:nesting:alw:event:alw}, we can define a pair of operators:
    \begin{equation}
        G_{\left[\alpha'',\beta''\right]}F_{\left[\underline{t}',\overline{t}'\right]}:
        \mathbf{\Gamma}_{\alpha'',\beta''}^{\Theta''}\mathbf{\Phi}_{\underline{t}'+\tau',\underline{t}'+\tau'}^{\left[ 0,\overline{t}' -\underline{t}'\right]},
    \end{equation}
    where the nesting rule produces a \emph{collection} of operators:
    \begin{equation}
        \begin{gathered}
             \mathbf{\Gamma}_{\alpha'',\beta''}^{\Theta''}\mathbf{\Phi}_{\underline{t}'+\tau',\underline{t}'+\tau'}^{\left[ 0,\overline{t}' -\underline{t}'\right]}
             =
             \left\{
                \mathbf{\Gamma}_{\alpha_J,\beta_J}^{\Theta_J}, J \in \{1,\cdots, \bar{J}\}
             \right\},
        \end{gathered}
    \end{equation}
    where $\alpha_J,\beta_J, \Theta_J$ are given by \eqref{eq:thm:nesting:alw:event:alw:params}, by setting $\alpha = \beta = 0,\Theta = \emptyset$.
\end{corollary}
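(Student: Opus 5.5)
The plan is to obtain the corollary by specializing Thm. \ref{thm:nesting:alw:event:alw} to the case where the inner fragment is the bare predicate, $\psi=\mu$, and then reading each term of the resulting minimum as a separate always-type operator. First I would check that $\mu$ fits the hypothesis \eqref{thm:nesting:alw:event:alw:eq:psi} with $\alpha=\beta=0$ and $\Theta=\emptyset$. By Def. \ref{def:operator}, for $t=t_0$ the first branch of \eqref{eq:operator:action} gives $\mathbf{T}_{0,0}^{\emptyset}V_h(x,t_0;t_0)=V_h(x(t_0),0)=h(x(t_0))$, using the terminal condition in \eqref{eq:hjb}. Hence $(x,t_0)\models\mu \Leftrightarrow h(x(t_0))\geq 0 \Leftrightarrow \mathbf{T}_{0,0}^{\emptyset}V_h\geq 0$ on the degenerate interval $[t_0,t_0]$. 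So the identity operator corresponds to $\mu$ in the sense of \eqref{eq:corr:toshow}.

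Next I would substitute $\alpha=\beta=0$ and $\Theta=\emptyset$ into \eqref{eq:thm:nesting:alw:event:alw:params}. This gives
\[
\alpha_J=\alpha''+\underline{t}'+\textstyle\sum_{j=1}^J\tau_j', \qquad \beta_J=\min\{\beta''+\underline{t}',\,\alpha_J\}, \qquad \Theta_J=[0,\overline{t}'-\underline{t}']^J .
\]
Thm. \ref{thm:nesting:alw:event:alw} then states that $(x,t_0)\models G_{[\alpha'',\beta'']}F_{[\underline{t}',\overline{t}']}\mu$ holds iff, for every $J\leq\bar{J}$, some choice of $\tau_J'$ makes $\min_{j\leq J}\mathbf{T}_{\alpha_j,\beta_j}^{\Theta_j}V_h\geq 0$ on $[t_0,t_0+\beta_J]$. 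Since a minimum is nonnegative exactly when each entry is, and the $\tau_j'$ are shared and nested in $J$, this condition splits into a conjunction over $J\in\{1,\dots,\bar J\}$. The $J$-th conjunct is the requirement that $\mathbf{T}_{\alpha_J,\beta_J}^{\Theta_J}V_h\geq 0$ on $[t_0,t_0+\beta_J]$.

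By Prop. \ref{prop:operator:reachable}, each such conjunct is equivalent to $h(x(t))\geq 0$ on $[t_0+\alpha_J,t_0+\beta_J]$. Prop. \ref{prop:always}, in its parametric form (see the remark following Prop. \ref{prop:until}), identifies this condition with the always operator $\mathbf{\Gamma}_{\alpha_J,\beta_J}^{\Theta_J}$. This yields the claimed collection $\{\mathbf{\Gamma}_{\alpha_J,\beta_J}^{\Theta_J}\}_{J\leq\bar J}$.

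The main obstacle is the bookkeeping of the windows. I must check that $\alpha_J\leq\beta_J$, so that each $\mathbf{\Gamma}$ is well-defined; otherwise some window could be empty. The terminal condition \eqref{eq:thm:nesting:alw:event:alw:terminal} bounds $\sum_{j<J}\tau_j'$ by $\beta''-\alpha''$. Using it, I would show that $\beta_J=\alpha_J$ (a point window, i.e. a degenerate $G$, consistent with Prop. \ref{prop:eventually}) whenever $\alpha_J\leq\beta''+\underline{t}'$. The only remaining case is the last index $\bar J$, where the truncation by $\beta''+\underline{t}'$ may be active, and I would treat it separately. I would also need to justify that the existential choice of $\tau'$ can be made uniformly across $J$, by taking the $\tau'$ for $\bar J$ and restricting it.
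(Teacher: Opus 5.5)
Your route is what the paper intends: instantiate Thm.~\ref{thm:nesting:alw:event:alw} with $\psi=\mu$ and the identity operator ($\alpha=\beta=0$, $\Theta=\emptyset$), split the minimum into a conjunction, and read each term as an always-type operator. The paper gives no argument beyond that substitution. The gap is the step you defer to ``treat separately'', and it cannot be closed with the parameters as given.

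With $\beta=0$ you correctly get $\beta_J=\min\{\beta''+\underline{t}',\alpha_J\}$. So $\beta_J\le\alpha_J$ for \emph{every} $J$, with equality iff $s_J:=\sum_{j=1}^{J}\tau_j'\le\beta''-\alpha''$. For $J<\bar{J}$ this follows from \eqref{eq:thm:nesting:alw:event:alw:terminal} and $\tau_j'\ge0$. You must read that condition along the chosen sequence: literally it is existential and is met by $\tau'\equiv0$ for every $J$, so it bounds nothing. At $J=\bar{J}$, however, the truncation is not merely possibly active. Maximality of $\bar{J}$ forces $s_{\bar{J}}>\beta''-\alpha''$, and the proof of Thm.~\ref{thm:nesting:alw:event:alw} derives exactly $t_0+\alpha''+s_{\bar{J}}>t_0+\beta''$. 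This is not an artifact of how $\bar{J}$ is defined. Covering $t''=t_0+\beta''$ requires a satisfaction instant in $[t_0+\beta''+\underline{t}',t_0+\beta''+\overline{t}']$, and its location is dictated by the trajectory, so in general you cannot arrange $s_{\bar{J}}=\beta''-\alpha''$. Hence $\beta_{\bar{J}}=\beta''+\underline{t}'<\alpha_{\bar{J}}$ strictly, and the last element of the collection has an inverted window.

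For that element your identification fails. On $[t_0,t_0+\beta_{\bar{J}}]$ only the first branch of \eqref{eq:operator:action} is ever evaluated. The $\bar{J}$-th conjunct is therefore $V_h(x(t),t-t_0-\alpha_{\bar{J}})\ge0$ for $t\le t_0+\beta''+\underline{t}'$. This is a reachability certificate for $\mathcal{B}_h$ and never forces $h(x(t_0+\alpha_{\bar{J}}))\ge0$. Prop.~\ref{prop:operator:reachable} (whose proof needs $h\ge0$ on a nonempty window) and Prop.~\ref{prop:always} (which assumes $\underline{t}\le\overline{t}$) do not apply. Nothing then enforces $\mu$ for $t''\in(t_0+\alpha''+s_{\bar{J}-1},t_0+\beta'']$.

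For instance, take $\alpha''=\underline{t}'=0$, $\beta''=\overline{t}'=1$, $\tau_1'=\tfrac12$, $\tau_2'=1$. Then $\bar{J}=2$ and $(\alpha_2,\beta_2)=(\tfrac32,1)$. The only pointwise constraint on $h$ is $h(x(t_0+\tfrac12))\ge0$, while $t''=t_0+1$ needs $h\ge0$ somewhere in $[t_0+1,t_0+2]$.

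To close the argument you must change the construction, not the bookkeeping. For a bare predicate, drop the truncation and take $\beta_J=\alpha_J$ for all $J\le\bar{J}$. These are point windows, i.e.\ degenerate $\mathbf{\Gamma}$'s as in Prop.~\ref{prop:eventually}, with $\bar{J}$ the first index such that $s_{\bar{J}}\ge\beta''-\alpha''$. You then need to establish the equivalence directly rather than inherit it from Thm.~\ref{thm:nesting:alw:event:alw}. Sufficiency follows from the covering property $0\le\tau_j'\le\overline{t}'-\underline{t}'$. Necessity follows by choosing satisfaction instants greedily.
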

\begin{remark}
    Note that Cor. \ref{cor:nested:nested:ops} essentially covers both Thms. \ref{thm:nesting:alw:event:alw}, \ref{thm:nested:nested} by virtue of Eqs. \eqref{eq:thm:nesting:alw:event:alw:params}, \eqref{eq:thm:nesting:nesting:params} being identical, by noticing that $\alpha,\beta$ in \eqref{eq:thm:nesting:alw:event:alw:params} appear exactly like $\alpha_{J_0}, \beta_{J_0}$  in \eqref{eq:thm:nesting:nesting:params}. Further nestings of $GF$ maintain this equivalence through $\alpha_{J_0,J_1,J_2\cdots}, \beta_{J_0,J_1,J_2\cdots}$, where each additional nesting with $GF$ adds another index, another free variable, and another possible repetition.  
\end{remark}
Consider now any fragment $\phi$ containing the STL operators $G,F$. Note that by virtue of parameterizing the always operator through $G_{\left[ \alpha,\beta\right]}, \alpha,\beta\in\mathcal{P}_1^\Theta$, we capture both the ``original'' always operator as well as its modified version in the decomposition of the until operator -- see Prop. \ref{prop:until}--.
By applying Props. \ref{thm:nesting:always}, \ref{thm:nesting:eventually} $\phi$ can be turned into a form with no repeated nestings of $F,G$ respectively, denoted by $\phi'$. Owing to Props \ref{thm:nesting:always}, \ref{thm:nesting:eventually}, $(x,t_0) \models \phi\Leftrightarrow(x,t_0)\models \phi'$, and henceforth $\phi'$ will be referred to as a simplified fragment. Then, there are two cases for $\phi'$: 1) If the number of STL operators is even, then the fragment takes the form $GF(GF(\cdots GF(GF\mu)))$ and applying Thm. \ref{thm:nesting:alw:event:alw} for $\mathbf{T}_{\alpha,\beta}^{\Theta} = \mathbf{T}_{0,0}^\emptyset$ (that is, the identity operator in Def. \ref{eq:operator:def}), followed by repeated application of Thm. \ref{thm:nested:nested} yields the CBF-STL operator corresponding to $\phi'$. 2) If the number of STL operators is odd, then there are two more cases: 2a) $\phi' = GF(GF(\cdots GF(GF(G\mu))))$, which is exactly covered by the repeated application of Thms. \ref{thm:nesting:alw:event:alw}, \ref{thm:nested:nested}, and 2b)  $\phi' = F(GF(GF(\cdots GF(GF\mu))))$. This last case is treated in the following theorem:
}
{
\begin{thm}\label{thm:nested:nested:eventually}
    Consider a fragment $\psi$ that corresponds to $N\in\mathbb{N}$ nestings of $GF$ operators obtained by the repeated application of Thm. \ref{thm:nested:nested}, i.e.:
        \begin{equation}\label{eq:thm:event:nesting:nested:1}
        \begin{gathered}
            \forall J_N \in \{1,\cdots,\bar{J}_N\},
            \exists \tau_{J_N}^N \in\fundef{\mathbb{R}}{\Theta_N}:
            \\
            \forall J_{N-1} \in \{1,\cdots,\bar{J}_{N-1}\},
            \exists \tau_{J_{N-1}}^{N-1} \in\fundef{\mathbb{R}}{\Theta_{N-1}}:
             \\
            \vdots 
            \\
            \forall J_0 \in \{1,\cdots,\bar{J}_0\},
            \exists \tau_{J_0}^0\in\fundef{\mathbb{R}}{\Theta_0}
            :
            \\
            \underset{j_{N}\in\{1,\cdots \bar{J}_{N}\}}{\min}
            \left\{
                \cdots
                \underset{j_{0}\in\{1,\cdots \bar{J}_{0}\}}{\min}
                \left\{
                    \mathbf{T}_{\alpha,\beta}^{\Theta}V_h(x,t;t_0)
                \right\}
            \right\}
            \geq 0
            \\
            \Leftrightarrow
            (x,t_0) \models \psi,
        \end{gathered}
    \end{equation}
    with $\alpha = \alpha_{J_0,\cdots,J_{N}},\beta = \beta_{J_0,\cdots,J_N},\Theta = \Theta_{0,\cdots,N}$. Then for the fragment $\phi = F_{[\underline{t},\overline{t}]}\psi$:
    \begin{equation}\label{eq:thm:event:nesting:nested:2}
        \begin{gathered}
            \exists \tau \in \left[ 0, \overline{t} - \underline{t}\right]:
            \\
            \forall J_N \in \{1,\cdots,\bar{J}_N\},
            \exists \tau_{J_N}^N \in\fundef{\mathbb{R}}{\Theta_N}:
            \\
            \forall J_{N-1} \in \{1,\cdots,\bar{J}_{N-1}\},
            \exists \tau_{J_{N-1}}^{N-1} \in\fundef{\mathbb{R}}{\Theta_{N-1}}:
             \\
            \vdots 
            \\
            \forall J_0 \in \{1,\cdots,\bar{J}_0\},
            \exists \tau_{J_0}^0\in\fundef{\mathbb{R}}{\Theta_0}
            :
            \\
            \underset{j_{N}\in\{1,\cdots \bar{J}_{N}\}}{\min}
            \left\{
                \cdots
                \underset{j_{0}\in\{1,\cdots \bar{J}_{0}\}}{\min}
                \left\{
                    \mathbf{T}_{\alpha',\beta'}^{\Theta'}V_h(x,t;t_0)
                \right\}
            \right\}
            \geq 0
            \\
            \Leftrightarrow
            (x,t_0) \models \phi,
        \end{gathered}
    \end{equation}
    with $\tau \in \left[ 0, \overline{t} - \underline{t} \right]$:
    \begin{equation}\label{eq:thm:event:nesting:nested:params}
        \begin{gathered}
            \alpha' = \alpha + \underline{t} + \tau,\\
            \beta' = \beta + \underline{t} + \tau,\\
            \Theta' = \Theta\times\left[ 0, \overline{t} - \underline{t} \right].
        \end{gathered}
    \end{equation}
\end{thm}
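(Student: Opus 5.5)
The plan is to unfold the outer eventually operator with Def. \ref{def:stl}, shift time, and then invoke the hypothesis \eqref{eq:thm:event:nesting:nested:1} at the shifted initial time. By the semantics, $(x,t_0)\models F_{[\underline{t},\overline{t}]}\psi$ holds if and only if there exists $t_1\in[t_0+\underline{t},t_0+\overline{t}]$ with $(x,t_1)\models\psi$. We parametrize the satisfaction instant as $t_1 = t_0+\underline{t}+\tau$ with $\tau\in[0,\overline{t}-\underline{t}]$, exactly as in Prop. \ref{prop:eventually}. The existential quantifier $\exists t_1$ then becomes the outermost quantifier $\exists\tau$ in \eqref{eq:thm:event:nesting:nested:2}.

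Since \eqref{eq:thm:event:nesting:nested:1} holds for every initial time, we apply it with $t_0$ replaced by $t_1$. This shows that $(x,t_1)\models\psi$ is equivalent to the full nested quantifier string with $\min\{\cdots\mathbf{T}_{\alpha,\beta}^{\Theta}V_h(x,t;t_1)\}\geq 0$, holding for $t$ over the corresponding window starting at $t_1$. The key step is a time-shift identity read directly off \eqref{eq:operator:action}:
\[
\mathbf{T}_{\alpha,\beta}^{\Theta}V_h(x,t;t_0+\underline{t}+\tau)
=\mathbf{T}_{\alpha+\underline{t}+\tau,\;\beta+\underline{t}+\tau}^{\Theta\times[0,\overline{t}-\underline{t}]}V_h(x,t;t_0),
\]
valid for $t\in[t_0+\underline{t}+\tau,\;t_0+\beta'(\tau)]$. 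Indeed, both sides equal $V_h(x,t-\alpha-\underline{t}-\tau-t_0)$ on the reach phase and $h(x)$ on the window phase. With $\alpha',\beta',\Theta'$ defined as in \eqref{eq:thm:event:nesting:nested:params}, this identity lets us substitute the right-hand side everywhere inside the minima. Because the shift is identical for every index $(J_0,\dots,J_N)$, the mins and the inner $\forall J_k\,\exists\tau^k_{J_k}$ quantifiers commute with it unchanged.

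What remains is the interval $[t_0,t_0+\underline{t}+\tau]$, which the shifted operator covers but the hypothesis does not constrain. Here we use Prop. \ref{prop:operator:reachable}, or more precisely its reasoning applied to each member of the collection simultaneously. In the forward direction, $h(x(s))\geq 0$ on each window $[t_0+\alpha'_j,t_0+\beta'_j]$ along the realized trajectory means that each target set is reached in time, so $V_h(x(t),t-\alpha'_j-t_0)\geq 0$ for all earlier $t$, including $t\in[t_0,t_1]$. In the reverse direction, nonnegativity over $[t_0,t_0+\beta']$ in particular gives nonnegativity on the sub-window starting at $t_1$, which is all the hypothesis needs. Chaining these equivalences yields \eqref{eq:thm:event:nesting:nested:2}.

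I expect the main obstacle to be this extension to $[t_0,t_1]$ across the whole collection of operators. The value-function argument is pointwise along the actual trajectory, so it must hold jointly for all $j$ with the same choices of the inner $\tau^k_{J_k}$. I would handle this by fixing the witnesses $\tau,\tau^k_{J_k}$ first and then applying the reachability argument to each operator separately, taking the minimum only at the end. A further point to check is that the outer existential $\exists\tau$ sits outside all the inner quantifiers. This is the correct placement, because the satisfaction instant $t_1$ is chosen once, before $\psi$'s internal repetitions are resolved, which is the same ordering as in Cor. \ref{cor:nesting:eventually:always:ops}.
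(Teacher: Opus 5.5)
Your proposal is correct and follows the paper's proof. Both unfold the outer $F_{[\underline{t},\overline{t}]}$, write the satisfaction instant as $t_0+\underline{t}+\tau$ with $\tau\in[0,\overline{t}-\underline{t}]$, invoke \eqref{eq:thm:event:nesting:nested:1} at that shifted initial time, and read the identity $\mathbf{T}_{\alpha,\beta}^{\Theta}V_h(x,t;t_0+\underline{t}+\tau)=\mathbf{T}_{\alpha',\beta'}^{\Theta'}V_h(x,t;t_0)$ off \eqref{eq:operator:action}. Your explicit treatment of the leading interval $[t_0,t_0+\underline{t}+\tau]$, using the reasoning of Prop.~\ref{prop:operator:reachable}, is extra care the paper omits: its proof simply writes the $V_h$ branch on $[t_0,\alpha+t_0+\underline{t}+\tau]$ without justifying nonnegativity there, so your version is, if anything, more complete.
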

\begin{proof}
    See Appendix. 
\end{proof}
}
{
This completes the operator composition rules for all possible nesting cases. Nevertheless, to treat the complete STL syntax, we need to consider logical operations, which may appear on their own or as part of the decomposition of the until operator --see Prop. \ref{prop:until}--. This will be the focus of the next subsection.
}
{
\subsection{Logical Compositions}
In this section we demonstrate how logical operations are handled within the proposed framework. We consider conjunctions/disjunctions nested with eventually, always and always eventually. This covers all cases, by inductive application of the resulting rules (note that no assumption is made on the fragments in the theorems that follow, i.e., they may themselves contain arbitrary nestings). 
\begin{thm}\label{thm:conjunction:event}
    Consider $I\in\mathbb{N}$ STL fragments $\psi_i, i\in\{1,\cdots,I\}$. Then:
    \begin{equation}\notag
        \begin{gathered}
            (x,t_0)\models F_{\left[\underline{t},\overline{t}\right]}
            \left( 
                \bigwedge_{i=1}^I \psi_i
            \right)
            \Leftrightarrow
            \\
            \forall \tau \in \left[ 0,\overline{t} - \underline{t}\right]:
            \bigwedge_{i=1}^I
            \left( 
                (x,t_0) \models 
                F_{\left[ \underline{t} + \tau , \underline{t} + \tau \right]}\psi_i
            \right).
        \end{gathered}
    \end{equation}
\end{thm}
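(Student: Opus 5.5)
The plan is to unfold both sides through the semantics of Def. \ref{def:stl} and reparametrize the satisfaction instant of the eventually operator through the free variable $\tau$, exactly as in Prop. \ref{prop:eventually}. The statement has to be read with the paper's convention for the free parameter $\tau$: $\tau$ is a decision variable chosen once, in the same sense as $\tau(t_0)$ in the definition of correspondence. So the quantifier over $\tau$ on the right-hand side really means ``there is a choice of $\tau\in[0,\overline{t}-\underline{t}]$, \emph{common to all} $i$''. A literal universal quantifier would turn the statement into $G_{[\underline{t},\overline{t}]}\bigwedge_i\psi_i$, which is false. I would state this reading explicitly at the start, since it is the whole content of the theorem: the conjuncts must share one parameter $\tau$, rather than each having its own $\tau_i$. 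This is why $F$ does not distribute over $\wedge$.

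First, the forward direction. By Def. \ref{def:stl}, $(x,t_0)\models F_{[\underline{t},\overline{t}]}(\bigwedge_i\psi_i)$ gives some $t_1\in[t_0+\underline{t},t_0+\overline{t}]$ with $(x,t_1)\models\bigwedge_i\psi_i$. Applying the conjunction clause inductively over $i$ gives $(x,t_1)\models\psi_i$ for every $i$. Set $\tau:=t_1-t_0-\underline{t}$, which lies in $[0,\overline{t}-\underline{t}]$. The degenerate window $F_{[\underline{t}+\tau,\underline{t}+\tau]}$ has the single admissible instant $t_0+\underline{t}+\tau=t_1$, so $(x,t_0)\models F_{[\underline{t}+\tau,\underline{t}+\tau]}\psi_i$ holds for each $i$, with the same $\tau$ throughout.

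Second, the converse. Given a common $\tau$ with $(x,t_0)\models F_{[\underline{t}+\tau,\underline{t}+\tau]}\psi_i$ for all $i$, the singleton window forces each $\psi_i$ to hold at the same instant $t_1=t_0+\underline{t}+\tau$. The conjunction clause then gives $(x,t_1)\models\bigwedge_i\psi_i$. Since $t_1\in[t_0+\underline{t},t_0+\overline{t}]$, the eventually clause gives the left-hand side.

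No assumption on the internal structure of the $\psi_i$ is used, so the $\psi_i$ may carry arbitrary nestings, as the section requires. For the operator-level reading, each $F_{[\underline{t}+\tau,\underline{t}+\tau]}\psi_i$ can then be handled by Prop. \ref{prop:eventually} or Thm. \ref{thm:nested:nested:eventually}, with the shifted windows $\alpha'=\alpha+\underline{t}+\tau$ and $\beta'=\beta+\underline{t}+\tau$ sharing $\tau$. The conjunction is encoded as the minimum of the resulting operator functions. The step I expect to be the main obstacle is this quantifier bookkeeping, not any analysis. I must make sure $\tau$ is the same shared parameter in every conjunct and is existentially chosen. When the $\psi_i$ themselves carry inner free parameters (e.g. $\tau_{J_0}^0$ from Thm. \ref{thm:nesting:alw:event:alw}), those inner parameters stay independent per conjunct and sit inside the scope of the shared $\tau$. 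I would handle this by writing the right-hand side with $\tau$ outermost and the inner quantifiers of each $\psi_i$ untouched.
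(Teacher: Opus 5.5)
Your proposal is correct and follows the same route as the paper. Both proofs unfold Def.~\ref{def:stl}, push the conjunction inside the witness instant, and reparametrize that instant as $t_0+\underline{t}+\tau$, so each conjunct becomes a singleton-window eventually sharing one $\tau$. Your explicit reading of $\forall\tau$ as a single, existentially chosen parameter common to all conjuncts is what justifies the step where the paper's chain replaces $\exists\tau\in[0,\overline{t}-\underline{t}]$ by $\forall\tau$ without comment. Taken literally, that step and the statement would instead characterize $G_{[\underline{t},\overline{t}]}\bigwedge_i\psi_i$.
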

\begin{proof}
    By Def. \ref{def:operator} we have:
\begin{equation}\notag 
    \begin{gathered}
        (x,t_0)\models F_{\left[\underline{t},\overline{t}\right]}
         \left( 
            \bigwedge_{i=1}^I \psi_i
        \right)
        \Leftrightarrow
        \\
        \exists t \in \left[t_0 + \underline{t},t_0 + \overline{t}\right]:
        (x,t) \models 
        \bigwedge_{i=1}^I \psi_i
        \Leftrightarrow
        \\
         \exists t \in \left[t_0 + \underline{t},t_0 + \overline{t}\right]:
        \bigwedge_{i=1}^I 
        \left( 
            (x,t) \models \psi_i
        \right)
        \Leftrightarrow
        \\
        \exists \tau \in \left[ 0, \overline{t}-\underline{t}\right]:
        \bigwedge_{i=1}^I 
        \left( 
            (x,t_0 + \underline{t} + \tau) \models \psi_i
        \right)
        \Leftrightarrow\\
        \forall \tau \in \left[ 0, \overline{t}-\underline{t}\right]:
        \\
        \bigwedge_{i=1}^I 
        \left( 
            \exists t' = t_0 + \underline{t} + \tau \in
            \left[ 
                t_0 +\underline{t}, t_0 +\overline{t}
            \right]:
            (x,t') \models \psi_i
        \right)
        \Leftrightarrow\\
         \forall \tau \in \left[ 0, \overline{t}-\underline{t} \right]:
         \bigwedge_{i=1}^I 
        \left( 
            (x,t_0) \models  F_{\left[ \underline{t} + \tau, \underline{t} + \tau\right]}\psi_i
        \right),
    \end{gathered}
\end{equation}
concluding the proof. 
\end{proof}
\begin{thm}\label{thm:conjunction:alw}
    Consider $I\in\mathbb{N}$ STL fragments $\psi_i, i\in\{1,\cdots,I\}$. Then:
    \begin{equation}\notag
        \begin{gathered}
            \forall \tau \in \Theta:
            (x,t_0)\models G_{\left[\alpha(\tau),\beta(\tau)\right]}
            \left( 
                \bigwedge_{i=1}^I \psi_i
            \right)
            \Leftrightarrow
            \\
            \forall \tau \in\Theta:
            \bigwedge_{i=1}^I
            \left( 
                (x,t_0) \models 
                G_{\left[ \alpha(\tau),\beta(\tau) \right]}\psi_i
            \right).
        \end{gathered}
    \end{equation}
\end{thm}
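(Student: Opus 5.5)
The proof unfolds the semantics of Def. \ref{def:stl} in a chain of equivalences, in the same style as the proof of Thm. \ref{thm:conjunction:event}. The only structural change from that proof is that the existential quantifier over the satisfaction instant is replaced by a universal one. Because a universal quantifier commutes with a conjunction, this should actually be simpler than the eventually case.

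Fix an arbitrary $\tau\in\Theta$. Since both sides of the claimed equivalence are universally quantified over the same $\tau$, it suffices to prove the equivalence pointwise in $\tau$ and then re-quantify.

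For this fixed $\tau$, we first apply the always clause of Def. \ref{def:stl}: $(x,t_0)\models G_{[\alpha(\tau),\beta(\tau)]}\bigl(\bigwedge_{i=1}^I\psi_i\bigr)$ holds if and only if $\forall t\in[t_0+\alpha(\tau),t_0+\beta(\tau)]$ we have $(x,t)\models\bigwedge_{i=1}^I\psi_i$. Next, the conjunction clause, applied inductively $I-1$ times, rewrites the inner statement as $\bigwedge_{i=1}^I\bigl((x,t)\models\psi_i\bigr)$. We then interchange the universal quantifier over $t$ with the finite conjunction. This is the elementary logical identity $\forall t\,(P_1(t)\wedge\cdots\wedge P_I(t))\Leftrightarrow(\forall t\,P_1(t))\wedge\cdots\wedge(\forall t\,P_I(t))$, and it holds in both directions. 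Finally, we read each conjunct back through the always clause, giving $(x,t_0)\models G_{[\alpha(\tau),\beta(\tau)]}\psi_i$.

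Three side conditions need checking. First, the interval $[t_0+\alpha(\tau),t_0+\beta(\tau)]$ is the same for every $i$ because it depends only on $\tau$. Second, the interval should be well defined, i.e. $\alpha(\tau)\le\beta(\tau)$, as assumed in Props. \ref{thm:nesting:always}, \ref{thm:nesting:eventually}. If instead $\alpha(\tau)>\beta(\tau)$, both sides are vacuously true and the equivalence still holds. Third, no assumption is placed on the internal structure of the $\psi_i$, since we only use their satisfaction relation at the points $t$.

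There is no real obstacle here. The one point deserving a remark is why this is easier than Thm. \ref{thm:conjunction:event}. There, the witness time had to be shared across conjuncts, which is why $\tau$ had to be fixed before distributing. Here, no witness is needed, so the distribution over the conjunction is immediate. One might also note that if $\tau$ were allowed to be a function $\tau(t_0)$, as in the correspondence definition, the same argument applies pointwise in $t_0$.
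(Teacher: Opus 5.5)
Your proposal is correct and follows essentially the same route as the paper's proof. Both unfold the always clause of the semantics, split the conjunction pointwise in $t$, commute the universal quantifier over $t\in[t_0+\alpha(\tau),t_0+\beta(\tau)]$ with the finite conjunction, and fold each conjunct back into $G_{[\alpha(\tau),\beta(\tau)]}\psi_i$. You fix $\tau$ and re-quantify at the end, whereas the paper carries $\forall \tau\in\Theta$ along on every line; the chain of equivalences is the same.
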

\begin{proof}
    By Def. \ref{def:operator} we have:
\begin{equation}\notag
    \begin{gathered}
        \forall \tau \in \Theta: (x,t_0)\models G_{\left[ \alpha(\tau),\beta(\tau) \right]}
         \left( 
            \bigwedge_{i=1}^I \psi_i
        \right)
        \Leftrightarrow
        \\
        \forall \tau \in \Theta: 
        \forall t \in \left[t_0 + \alpha(\tau),t_0 + \beta(\tau)\right]:
        (x,t) \models 
        \bigwedge_{i=1}^I \psi_i
        \Leftrightarrow
        \\
        \forall \tau \in \Theta: 
         \forall t \in \left[t_0 + \alpha(\tau),t_0 + \beta(\tau)\right]:
        \bigwedge_{i=1}^I 
        \left( 
            (x,t) \models \psi_i
        \right)
        \Leftrightarrow
        \\
        \forall \tau \in \Theta:
        \bigwedge_{i=1}^I 
        \left( 
             \forall t \in \left[t_0 + \alpha(\tau),t_0 + \beta(\tau)\right]:
            (x,t) \models \psi_i
        \right)
        \Leftrightarrow
        \\
        \forall \tau \in \Theta:
        \bigwedge_{i=1}^I 
        \left( 
            (x,t_0) \models G_{\left[ \alpha(\tau),\beta(\tau) \right]}\psi_i
        \right),
    \end{gathered}
\end{equation}
concluding the proof. 
\end{proof}
\begin{remark}
    The proofs of Thms. \ref{thm:conjunction:event}, \ref{thm:conjunction:alw}, hold similarly for disjunctions.
\end{remark}
\begin{thm}\label{thm:alw:event:conjunction:indep}
    Consider a collection of simplified fragments $\psi_i',i\in\{1,\cdots,I\},I\in\mathbb{N}$. Then, for the nested fragment:
    $\phi = G_{\left[\alpha'(\tau'),\beta'(\tau')\right]}F_{\left[\underline{t},\overline{t}\right]}\left( \bigwedge_{i=1}^{I}\psi_i' \right)$: 
    $(x,t_0)\models \phi 
    \Leftrightarrow
    (x,t_0)\models
    \bigwedge_{i=1}^{I}
     G_{\left[\alpha'(\tau'),\beta'(\tau')\right]}F_{\left[\underline{t},\overline{t}\right]}\psi_i'$.
\end{thm}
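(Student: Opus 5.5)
The plan is to reduce the statement to the two logical-composition rules already proved, Thm. \ref{thm:conjunction:event} and Thm. \ref{thm:conjunction:alw}, applied from the inside out. First, rewrite the inner eventually through its parametrized form as in Prop. \ref{prop:eventually}. Thm. \ref{thm:conjunction:event} then replaces $F_{[\underline{t},\overline{t}]}\left(\bigwedge_{i=1}^I\psi_i'\right)$ at every evaluation time $t'$ by the conjunction $\bigwedge_{i=1}^I F_{[\underline{t}+\tau,\underline{t}+\tau]}\psi_i'$. Here $\tau\in[0,\overline{t}-\underline{t}]$ is the free (decision) parameter of the operator $\mathbf{\Phi}_{\underline{t}+\tau,\underline{t}+\tau}^{[0,\overline{t}-\underline{t}]}$, and it is common to all conjuncts.

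Next, unfold the outer always using Def. \ref{def:stl}: $(x,t_0)\models\phi$ iff for all $t'\in[t_0+\alpha'(\tau'),t_0+\beta'(\tau')]$ we have $(x,t')\models F_{[\underline{t},\overline{t}]}\left(\bigwedge_i\psi_i'\right)$. Substituting the result of the first step inside this universal quantifier, the conjunction over $i$ sits inside $\forall t'$. Since $\forall$ distributes over $\wedge$, this is exactly the content of Thm. \ref{thm:conjunction:alw} with the fragments $F_{[\underline{t}+\tau,\underline{t}+\tau]}\psi_i'$ in place of $\psi_i$. So the statement becomes $\bigwedge_i G_{[\alpha'(\tau'),\beta'(\tau')]}F_{[\underline{t}+\tau,\underline{t}+\tau]}\psi_i'$. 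Re-folding the eventually via Prop. \ref{prop:eventually} gives the right-hand side, $\bigwedge_i G_{[\alpha'(\tau'),\beta'(\tau')]}F_{[\underline{t},\overline{t}]}\psi_i'$.

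Because the $\psi_i'$ are simplified fragments, each conjunct has the form treated by Thm. \ref{thm:nesting:alw:event:alw} or Thm. \ref{thm:nested:nested}. The operator collection of Cor. \ref{cor:nested:nested:ops} can therefore be built conjunct-wise and combined through a pointwise $\min$. This is the operational payoff of the statement, and I would note it at the end of the proof.

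The main obstacle is the $\Leftarrow$ direction. Under plain semantics, $F$ does not distribute over $\wedge$. The argument goes through only because, as in Thm. \ref{thm:conjunction:event}, the witness $\tau$ (and, after the $G$ nesting, the per-repetition witnesses $\tau_J'$ of Thm. \ref{thm:nesting:alw:event:alw}) is one shared free parameter for all conjuncts, chosen per evaluation time $t'$. I would state this shared-parameter reading explicitly. I would then check that the quantifier order $\forall t'\,\exists\tau(t')$ is the same on both sides, so that the distribution of $\forall t'$ over $\wedge$ is legitimate: the $\Rightarrow$ direction is then immediate, and the $\Leftarrow$ direction uses the common $\tau(t')$.
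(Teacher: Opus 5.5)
Your first two steps are sound if $\tau$ is read as a single witness function of the evaluation time, existentially quantified outside the conjunction. Read that way, they establish
\[
\begin{gathered}
(x,t_0)\models\phi \;\Leftrightarrow\; \exists\,\tau\in\fundef{\mathbb{R}}{[0,\overline{t}-\underline{t}]}: \\
\bigwedge_{i=1}^{I}\ \forall t'\in\left[t_0+\alpha'(\tau'),t_0+\beta'(\tau')\right]:\ \left(x,t'+\underline{t}+\tau(t')\right)\models\psi_i' .
\end{gathered}
\]
The $\Leftarrow$ direction breaks at your re-folding step. Turning each $F_{[\underline{t}+\tau,\underline{t}+\tau]}\psi_i'$ back into $F_{[\underline{t},\overline{t}]}\psi_i'$ inside its own conjunct moves $\exists\tau$ inside $\bigwedge_i$. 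Each conjunct then gets its own witness $\tau_i$, and $\bigwedge_i\exists\tau_i$ does not give $\exists\tau\bigwedge_i$.

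This cannot be patched, because under Def.~\ref{def:stl} the stated equivalence is false. Take $I=2$, $\psi_1'=\mu_1$, $\psi_2'=\mu_2$ with $\mathcal{B}_{h_1}\cap\mathcal{B}_{h_2}=\emptyset$ (e.g.\ $h_1(x)=-x$, $h_2(x)=x-1$), and $\overline{t}>\underline{t}$. Let the signal be $p$-periodic with $p\leq\overline{t}-\underline{t}$ and visit both sets. Every window $[t'+\underline{t},t'+\overline{t}]$ then meets both sets, so $\bigwedge_{i}G_{[\alpha'(\tau'),\beta'(\tau')]}F_{[\underline{t},\overline{t}]}\mu_i$ holds. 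But $\mu_1\wedge\mu_2$ is never true, so $\phi$ fails.

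Your closing check also targets the wrong quantifier pair. $\forall t'$ is outermost on both sides; the real difference is $\exists\tau\bigwedge_i$ versus $\bigwedge_i\exists\tau_i$. The ``common $\tau(t')$'' that your $\Leftarrow$ direction uses is not supplied by its hypothesis. Declaring a ``shared-parameter reading'' of the right-hand side does not prove the statement. It replaces the right-hand side by the display above, which is just a restatement of the left-hand side.

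For comparison, the paper's proof unfolds Def.~\ref{def:stl} directly rather than going through Thms.~\ref{thm:conjunction:event}, \ref{thm:conjunction:alw}, but it contains the same step. It rewrites $\exists t'\colon\bigwedge_i(x,t')\models\psi_i'$ (there $t'$ is the eventually witness) as $\forall i\colon\cdots\exists t'\colon(x,t')\models\psi_i'$. It then asserts the equivalence under $\tau_1'=\tau_2'=\cdots$, i.e., equal \emph{outer} $G$-window parameters, which does not constrain the inner witness at all.

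Your diagnosis, that the eventually witness is what must be shared, is the accurate one. It is what Rem.~\ref{rem:param:consistency} and the adjacency matrix $\mathcal{A}$ are meant to encode. What can honestly be proved is the $\Rightarrow$ direction, plus the equivalence with the shared-witness right-hand side above. Likewise, your operator-level remark (conjunct-wise collections from Cor.~\ref{cor:nested:nested:ops} combined by a pointwise $\min$) is valid only if the $F$-parameters of the different collections are identified through $\mathcal{A}$.
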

\begin{proof}
    We have :
    \begin{equation}\notag 
        \begin{gathered}
            \forall \tau' \in \Theta':
            (x,t_0)\models \phi = 
            G_{\left[\alpha'(\tau'),\beta'(\tau')\right]}F_{\left[\underline{t},\overline{t}\right]}
            \left( \bigwedge_{i=1}^{I}\psi_i' \right)
            \Leftrightarrow
            \\
            \forall t\in
            \left[
                t_0 + \alpha'(\tau'),t_0 + \beta'(\tau')
            \right]:
            \exists t' \in
            \left[
                t + \underline{t},t + \overline{t}
            \right]:
            \\
            (x,t') \models  \bigwedge_{i=1}^{I}\psi_i'
            \overset{\mathrm{Def.} \ref{def:stl}}{=}
             \bigwedge_{i=1}^{I}
             \left( 
                (x,t') \models \psi_i'
             \right)
             \Leftrightarrow
             \\
             \forall i \in \{1,\cdots,I\}:
             \forall \tau_i' \in \Theta':
             \forall t\in
            \left[
                t_0 + \alpha'(\tau'),t_0 + \beta'(\tau')
            \right]:
            \\
            \exists t' \in
            \left[
                t + \underline{t},t + \overline{t}
            \right]:
            (x,t') \models \psi_i'
            \overset{\mathrm{Def.} \ref{def:stl}}{\Longleftrightarrow}
            \\
             \forall i \in \{1,\cdots,I\}:
             \forall \tau_i' \in \Theta':
             (x,t_0) \models 
             G_{\left[\alpha'(\tau_i'),\beta'(\tau_i')\right]}F_{\left[\underline{t},\overline{t}\right]}\psi_i'
             \\ 
             \overset{\mathrm{Def.} \ref{def:stl}}{\Longleftrightarrow}
                (x,t_0) \models 
            \bigwedge_{i=1}^{I}
             G_{\left[\alpha'(\tau_i'),\beta'(\tau_i')\right]}F_{\left[\underline{t},\overline{t}\right]}\psi_i',
        \end{gathered}
    \end{equation}
    where the equivalence holds iff $\tau_1' = \tau_2' = \cdots = \tau_N'$, concluding the proof. 
\end{proof}
\begin{thm}\label{thm:alw:event:disjunction:indep}
    Consider a collection of simplified fragments $\psi_i',i\in\{1,\cdots,I\},I\in\mathbb{N}$. Then, for the nested fragment:
    $\phi = G_{\left[\alpha'(\tau'),\beta'(\tau')\right]}F_{\left[\underline{t},\overline{t}\right]}\left( \bigvee_{i=1}^{I}\psi_i' \right)$: 
    $(x,t_0)\models \phi 
    \Leftrightarrow
    (x,t_0)\models
    \bigvee_{i=1}^{I}
     G_{\left[\alpha'(\tau'),\beta'(\tau')\right]}F_{\left[\underline{t},\overline{t}\right]}\psi_i'$.
\end{thm}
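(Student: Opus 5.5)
The plan is to mirror the chain of equivalences in the proof of Thm. \ref{thm:alw:event:conjunction:indep}, replacing the conjunction with a disjunction, and to be explicit about where the quantifier over the disjunct index sits relative to the quantifiers over time. First I would unfold $\phi$ using Def. \ref{def:stl}: for $\tau'\in\Theta'$, $(x,t_0)\models\phi$ iff for all $t\in[t_0+\alpha'(\tau'),t_0+\beta'(\tau')]$ there is a $t'\in[t+\underline{t},t+\overline{t}]$ with $(x,t')\models\bigvee_{i=1}^I\psi_i'$. Applying the disjunction clause of Def. \ref{def:stl}, obtained via $\neg$ and $\wedge$ as in the Remark after Thm. \ref{thm:conjunction:alw}, this becomes $\exists i\in\{1,\cdots,I\}$ with $(x,t')\models\psi_i'$. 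The inner $\exists t'$ and $\exists i$ commute freely, so for each $t$ we get some $i(t)$ with $(x,t)\models F_{[\underline{t},\overline{t}]}\psi_{i(t)}'$, which is the content of Thm. \ref{thm:conjunction:event} read for disjunctions.

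The direction $(\Leftarrow)$ is routine and I would do it first. If $(x,t_0)\models G_{[\alpha'(\tau'),\beta'(\tau')]}F_{[\underline{t},\overline{t}]}\psi_k'$ for some fixed $k$, then for every $t$ in the window the witness $t'$ for $\psi_k'$ also witnesses $\bigvee_i\psi_i'$. Hence $\phi$ holds with the same $\tau'$.

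The main obstacle is the direction $(\Rightarrow)$. There the index $i(t)$ may genuinely depend on $t$, and exchanging $\forall t\,\exists i$ into $\exists i\,\forall t$ is not valid for arbitrary signals. A standard counterexample is a signal satisfying $\psi_1'$ and $\psi_2'$ on alternating long stretches. So the statement must be read in the operator framework, as the proof of Thm. \ref{thm:alw:event:conjunction:indep} implicitly does: the index choice has to be treated as an additional free parameter.

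Concretely, I would try to absorb the switching into the parameters $\tau'$. The plan is to partition the window $[t_0+\alpha'(\tau'),t_0+\beta'(\tau')]$ into the finitely many sub-windows produced by the repetition structure of Thm. \ref{thm:nesting:alw:event:alw}, with indices $J\in\{1,\cdots,\bar{J}\}$. On each sub-window a single witness $t'_J$, and hence a single index $i_J$, suffices, because one $F$-satisfaction covers an always-window of length up to $\overline{t}-\underline{t}$. The claimed equivalence would then hold with each $\tau'_i$ ranging over the sub-windows where $\psi_i'$ is selected. This parallels the remark closing the proof of Thm. \ref{thm:alw:event:conjunction:indep}, which notes that equivalence is achieved by tying the parameters $\tau_i'$ together. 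The step I expect to need the most care is justifying that the disjunction over $i$ may be pulled outside the $G$ only after this reparametrization. If that fails, the fallback is to state and prove only the sufficient direction $(\Leftarrow)$, which is all that is needed for control synthesis.
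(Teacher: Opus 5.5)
Your diagnosis of $(\Rightarrow)$ is correct, and it defeats the statement itself, not just a naive proof of it. Here is a concrete counterexample. Take $\dot x = u$, $\mathcal{U}=[-1,1]$, $u\equiv 1$ and $x(0)=-5$, so that $x(t)=t-5$. Let $h_1(x)=x$, $h_2(x)=-x$ and $\psi_i'=\mu_i$. Use constant windows $\alpha'\equiv 0$, $\beta'\equiv 10$ and $[\underline{t},\overline{t}]=[0,1]$.
\begin{itemize}
\item At every time $t'$ we have $h_1(x(t'))\geq 0$ or $h_2(x(t'))\geq 0$, so $(x,0)\models G_{[0,10]}F_{[0,1]}(\mu_1\vee\mu_2)$.
\item $G_{[0,10]}F_{[0,1]}\mu_1$ fails at $t=0$, because no $t'\in[0,1]$ has $t'\geq 5$.
\item $G_{[0,10]}F_{[0,1]}\mu_2$ fails at $t=10$, because no $t'\in[10,11]$ has $t'\leq 5$.
\end{itemize}
The paper's proof is the one line ``identical to Thm.~\ref{thm:alw:event:conjunction:indep}''. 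It pulls $\bigvee_i$ outside the quantifier $\forall t\in[t_0+\alpha'(\tau'),t_0+\beta'(\tau')]$. That is exactly the invalid exchange $\forall t\,\exists i\Rightarrow\exists i\,\forall t$ that you flag. Tying the parameters as in Rem.~\ref{rem:param:consistency} cannot repair it, because in this example the windows do not depend on any parameter.

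The gap in your proposal is the middle ``rescue'' step, and you should drop it rather than try to complete it. Picking an index $i_J$ on each repetition sub-window of Thm.~\ref{thm:nesting:alw:event:alw} proves: for every $J$ there is an $i_J$ with $F_{[\underline{t},\overline{t}]}\psi_{i_J}'$ holding throughout that sub-window. That is only a re-encoding of the left-hand side. The right-hand side $\bigvee_i G_{[\alpha'(\tau'),\beta'(\tau')]}F_{[\underline{t},\overline{t}]}\psi_i'$ demands a single $i$ over the whole window. Letting each $\tau_i'$ ``range over the sub-windows where $\psi_i'$ is selected'' changes the windows of the $G$ operators, and so changes the formula. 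No reading ``in the operator framework'' rescues this, since the statement is about $\models$ under Def.~\ref{def:stl}.

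What is actually true is your $(\Leftarrow)$ argument, which is correct. The theorem should therefore be stated as an implication. That is enough for soundness of control synthesis. It is not enough for the paper's advertised necessity, however. The necessity halves of Prop.~\ref{prop:complete_paths} and Thm.~\ref{thm:formula:satisfaction} route through this theorem, so they no longer follow for formulae with a disjunction under $GF$.
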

\begin{proof}
    The proof is identical to Thm. \ref{thm:alw:event:conjunction:indep}.
\end{proof}
\begin{remark}\label{rem:param:consistency}
    The condition $\tau_1' = \tau_2' = \cdots = \tau_N'$ in Thms. \ref{thm:alw:event:conjunction:indep}, \ref{thm:alw:event:disjunction:indep}, and similarly within the proof of Thms. \ref{thm:conjunction:event}, \ref{thm:conjunction:alw}, is necessary to maintain \emph{consistency}; each CBF-STL operator $\mathbf{T}_{\alpha_i,\beta_i}^{\Theta_i},i\in\{1,\cdots,I\}$ corresponds to $\psi_i',i\in\{1,\cdots,I\}$ resulting from applying the rules of the previous section. By applying the rules of Thms. \ref{thm:alw:event:conjunction:indep}, \ref{thm:alw:event:disjunction:indep}, every CBF-STL operator is \emph{independently} modified. For the latter to correspond to the logic of the nested fragment, this independence needs to be amended, which is exactly the role of the condition $\tau_1' = \tau_2' = \cdots = \tau_N'$.
\end{remark}
By virtue of the simplified form $\psi'$ (which we remind the reader can always be acquired through Props. \ref{prop:always}, \ref{prop:eventually}) Thms. \ref{thm:alw:event:conjunction:indep}, \ref{thm:alw:event:disjunction:indep} cover all possible cases of nesting logical operators. 
}
\section{Scheduling, Task Planning \& Control}
{ In this section we employ a similar approach to \citebr{10388467} for modeling STL formulae using graphs. We include here a discussion on the notation for clarity, and the reader is directed there for more details. The authors in the former propose signal Temporal Logic Trees (sTLTs), and an equivalence relationship between complete paths of an sTLT and satisfaction of the associated STL formula is demonstrated. 
Consider the example $\phi = F_{\left[0,15\right]}\left( G_{\left[2,10\right]}\mu_1 \vee \mu_2 U_{\left[5,10\right]}\mu_3\right)$, with its associated sTLT depicted in Fig. \ref{fig:stlt}. Its nodes fall into three categories, 1) set nodes, denoted by $\mathbb{X}_i,i\in\{1,\cdots,8\}$, 2) logic nodes denoted by $\{\wedge,\vee\}$, and 3) temporal nodes denoted similarly to STL operators. The set nodes correspond to \emph{satisfying sets}, i.e., subsets of the state space for which the nodes' sub-trees are reachable under the input constraints. The authors in \citebr{10388467} denote by $\mathbf{p}$ complete paths of the sTLT:
% \begin{equation}\notag 
   \( \mathbf{p} = \mathbb{X}_0 \widetilde{\Theta}_1 \mathbb{X}_1 \widetilde{\Theta}_2 \cdots\),
% \end{equation}
where $\widetilde{\Theta}_i,i\in\mathbb{N}$ denotes\footnote{we use $\widetilde{\Theta}$ instead of the symbol $\Theta$ in \citebr{10388467} which we have already reserved here in the operator definition} logic/temporal nodes. The discrepancy between $\phi = F_{\left[0,15\right]}\left( G_{\left[2,10\right]}\mu_1 \vee \mu_2 U_{\left[5,10\right]}\mu_3\right)$ and Fig. \ref{fig:stlt} stems from the use of a ``desired form'' $\hat{\phi} = ( F_{\left[0,15\right]}G_{\left[2,10\right]}\mu_1)\vee F_{\left[0,15\right]}\left(  G_{[0,10]}\mu_2 \wedge F_{\left[5,10\right]}\mu_3\right)$ in \citebr{10388467}, which we will not employ in this work. 
}
\begin{figure}
    \centering
    \includegraphics[width= 0.5\linewidth]{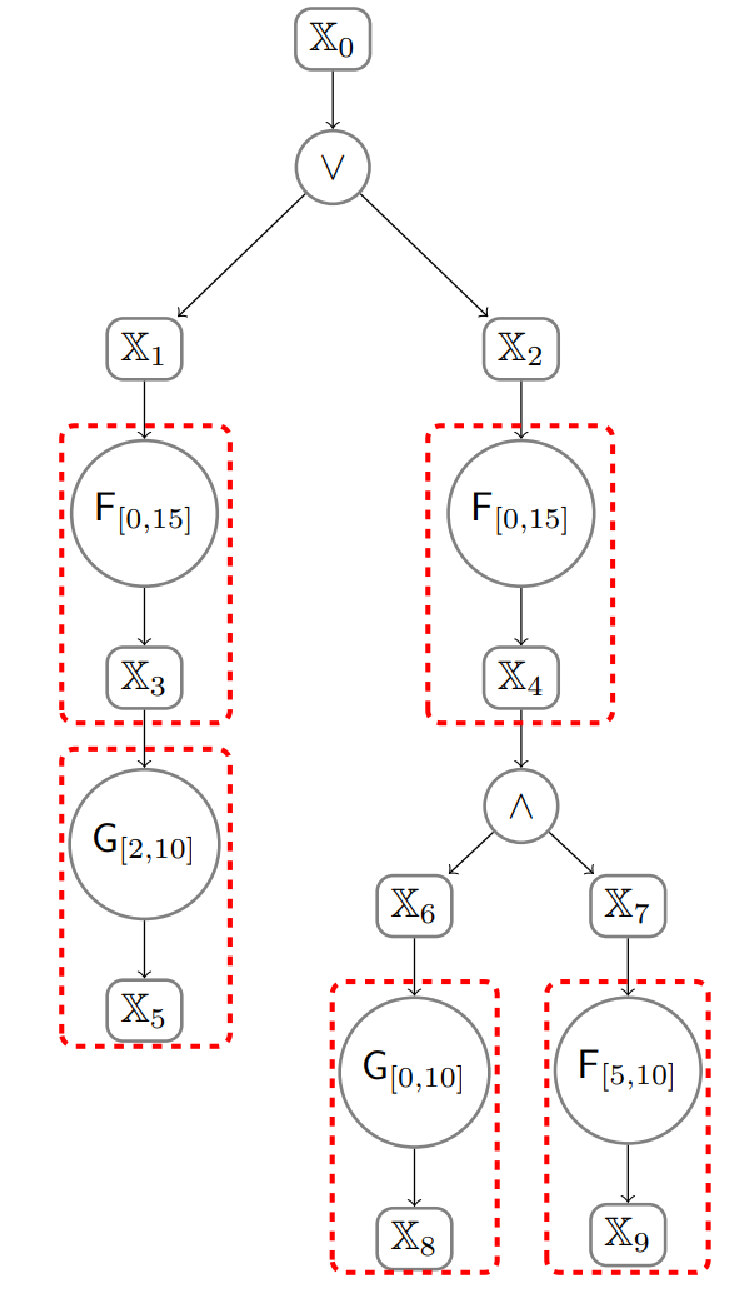}
    \caption{Fig. 2 in \citebr{10388467}, sTLT of $\phi = F_{\left[0,15\right]}\left( G_{\left[2,10\right]}\mu_1 \vee \mu_2 U_{\left[5,10\right]}\mu_3\right)$ for the desired form $\hat{\phi} = ( F_{\left[0,15\right]}G_{\left[2,10\right]}\mu_1)\vee F_{\left[0,15\right]}\left(  G_{[0,10]}\mu_2 \wedge F_{\left[5,10\right]}\mu_3\right)$.}
    \label{fig:stlt}
\end{figure}
\begin{figure*}
    \centering
    \includegraphics[trim={0cm 15.5cm 11.5cm 0cm},clip,width= 0.9\linewidth]{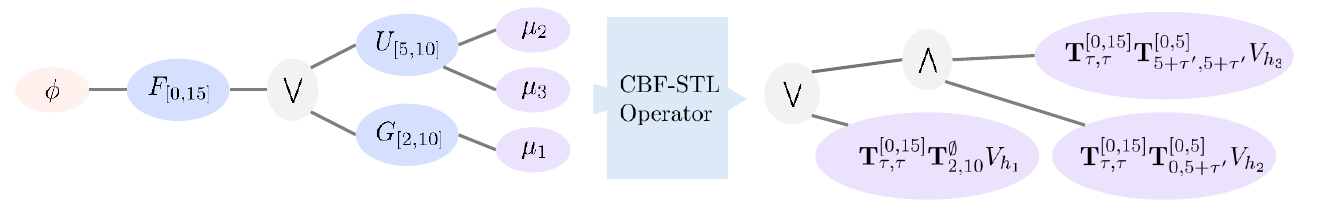}
    \caption{Examples of STL operator graph (left) and logic graph (right) for the STL formula $\phi= F_{\left[0,15\right]}\left( G_{\left[2,10\right]}\mu_1 \vee \mu_2 U_{\left[5,10\right]}\mu_3\right)$.}
    \label{fig:task:logic:graphs}
\end{figure*}
\par
Consider now a general STL formula \eqref{eq:stl:formula}, which can be modeled using a rooted tree structure similar to sTLTs. Let $\mathcal{G}_{\textrm{STL}} = \left\{ \mathcal{V}_{\textrm{STL}},  \mathcal{E}_{\textrm{STL}}\right\}$ denote a tree whose vertices $v \in \mathcal{V}_{\textrm{STL}}$ consist of simple predicates, STL and logical operators. The edge set $ \mathcal{E}_{\textrm{STL}}$ consists of pairs of nodes with a parent-child relationship, where each edge corresponds to a nesting operation. For instance, for $F_{[\underline{t},\overline{t}]}\mu$, where $\mu$ is a simple predicate, the vertex corresponding to $\mu$ is a child of the node corresponding to $F_{[\underline{t},\overline{t}]}$. If $\mu$ is an STL \textbf{fragment} then its vertex forms a sub-branch with root $F_{[\underline{t},\overline{t}]}$. In such trees, leaf nodes always correspond to simple predicates, while the root node is considered to model the entire STL formula. An example $\phi= F_{\left[0,15\right]}\left( G_{\left[2,10\right]}\mu_1 \vee \mu_2 U_{\left[5,10\right]}\mu_3\right)$ \citebr{10388467} is depicted in Fig. \ref{fig:task:logic:graphs} (left).
\par 
A main difference between $\mathcal{G}_{\textrm{STL}}$ and sTLTs lies in the omission from $\mathcal{G}_{\textrm{STL}}$ of the corresponding reachable sets, since reachability is baked into the proposed operator's action on reachability value functions. 
Hence, interpreting the vertices of $\mathcal{G}_{\textrm{STL}}$ as containing the corresponding sets of sTLTs turns $\mathcal{G}_{\textrm{STL}}$ into an sTLT. {A notable difference between this work and \citebr{10388467} rests on computational complexity; first, each one of the set nodes in \citebr{10388467} requires the solution of a reachability problem whereas in our work the number of reachability functions is exactly equal to the number of simple predicates. Second, we bypass computing CBFs for the system, since we employ the reachability value functions directly.}
\par 
We further construct a \textit{logic tree}, denoted as $\mathcal{G}_{\textrm{LOG}}= \left\{ \mathcal{V}_{\textrm{LOG}},  \mathcal{E}_{\textrm{LOG}}\right\}$, by first replacing the leaf vertices (corresponding to simple predicates) with the associated operators acting on the predicates' value functions, and second, removing the STL operator vertices. Note that by construction $\mathcal{G}_{\textrm{LOG}}$ is a proper tree (i.e., any inner node has at least two children; every inner node is a logical operator and its children are the operator's arguments). {We denote the set of such logic graphs, i.e., proper trees with inner nodes corresponding to logical operators and leaf nodes corresponding to nested CBF-STL operators \eqref{eq:operator:def} acting on value functions \eqref{eq:vfun:def}, as $\mathfrak{G}_{\textrm{log}} \ni \mathcal{G}_{\textrm{log}}$.} An example is illustrated in Fig. \ref{fig:task:logic:graphs} (right), where the until operator is replaced by its decomposition.

\subsection{Scheduling \& Task Planning}
\subsubsection{{Preliminaries}}
Given an STL formula $\phi$ \eqref{eq:stl:formula}, \textit{scheduling} refers to finding a sequence of tasks that can be completed given system \eqref{eq:dynamics} under admissible inputs defined by $\mathcal{U}$. Concurrently, we distinguish task \textit{planning} as the exact time for which each task is satisfied. In our approach, both aspects are encoded via the parameters $\tau\in\Theta$ in \eqref{eq:operator:def}. 
\par 
Given a collection of $K\in\mathbb{N}$ simple predicates $\{\mu_1,\cdots,\mu_K\}$ with their associated predicate functions $\{h_1,\cdots,h_K\}, h_k \in \fundef{\mathcal{X}}{\mathbb{R}}$ and value functions $V_{h_k}\in\fundef{\mathcal{X}\times\mathbb{R}}{\mathbb{R}},\forall k\in\{1,\cdots,K\}$ \eqref{eq:hjb}, following the operator composition rules yields the operators $\mathbf{T}_{\alpha_k,\beta_k}^{\Theta_k}\in\mathcal{T},\alpha_k,\beta_k\in\mathcal{P}_{\Theta_k}^{l_k},\Theta_k \subset \mathbb{R}^{l_k}\cup\emptyset,\forall k \in \{1,\cdots,K\}$. 
\par 
For each operator, let $\boldsymbol{\tau}^k \in \Theta_k,\forall k \in \{1,\cdots,K\}$ denote their parameters. Note that several of these parameters may coincide (i.e., should take identical values at all times), which can be modeled via a (symmetric) adjacency matrix $\mathcal{A} \in \{0,1\}^{\left( \sum_{k=1}^{K}l_k\right)\times \left( \sum_{k=1}^{K}l_k\right)}$, corresponding to the stacked parameter vector $\boldsymbol{\tau} \triangleq \left( \boldsymbol{\tau}^1,\cdots,\boldsymbol{\tau}^K\right) \in  \Theta_1 \times\cdots\times\Theta_K \subset \mathbb{R}^{\sum_{k=1}^{K}l_k}\cup \emptyset$. Then, $\mathcal{A}_{i,j}=1$ if the $i$-th and $j$-th elements of $\boldsymbol{\tau}$ coincide, otherwise $\mathcal{A}_{i,j} = 0$. 
Given the above matrix's rank $\mathbb{N}\ni\hat{L} = \textrm{rank}(\mathcal{A})$, let $\mathcal{L} = \left\{\hat{l}_1,\cdots,\hat{l}_{\hat{L}}\right\} \subset \left\{1,\cdots,\sum_{k=1}^{K}l_k\right\}$ denote the set containing the indices of the independent columns of $\mathcal{A}$. Then, the independent parameters $\hat{\boldsymbol{\tau}} \in \mathbb{R}^{\hat{L}}$ of $\phi$ satisfy ${\boldsymbol{\tau}} = \hat{\mathcal{A}}^{\top}\hat{\boldsymbol{\tau}}$, where $\hat{\mathcal{A}}\in\{0,1\}^{\hat{L}\times \left( \sum_{k=1}^{K}l_k\right)}$ is formed by the independent columns of $\mathcal{A}$ and $\hat{\boldsymbol{\tau}} \in \Theta_{\mathcal{L}} \triangleq \Theta_{\hat{l}_1} \times \cdots \times \Theta_{\hat{L}}$. 
\par 
Given the formula $\phi= F_{\left[0,15\right]}\left( G_{\left[2,10\right]}\mu_1 \vee \mu_2 U_{\left[5,10\right]}\mu_3\right)$ (example \citebr{10388467}, Fig. \ref{fig:task:logic:graphs}), the adjacency matrix for $\boldsymbol{\tau} = \left( \boldsymbol{\tau}^1, \boldsymbol{\tau} ^2, \boldsymbol{\tau} ^3\right) = \left( \tau,\left( \tau,\tau'\right), \left( \tau,\tau'\right)\right) \in \mathbb{R}^5$ is: 
\begin{equation}\notag 
    \mathcal{A} = 
    \begin{bmatrix}
        1 & 1 & 0 & 1 & 0\\
        1 & 1 & 0 & 1 & 0 \\
        0 & 0 & 1 & 0 & 1 \\
        1 & 1 & 0 & 1 & 0 \\
         0 & 0 & 1 & 0 & 1
    \end{bmatrix},
\end{equation}
and the independent parameters with $\mathcal{L} = \{1,3\}$ satisfy:
\[
    {\boldsymbol{\tau}}
    =
    \begin{bmatrix}
        1 & 1 & 0 & 1 & 0 \\
         0 & 0 & 1 & 0 & 1
    \end{bmatrix}^\top 
    \hat{\boldsymbol{\tau}},
\]
where $\hat{\boldsymbol{\tau}} = \left( \tau,\tau'\right) \in \left[0,15\right]\times\left[0,5\right]$. In case of repeated fragments, the dimensions of the matrices $\mathcal{A}, \hat{\mathcal{A}}$ and the parameter vectors $\boldsymbol{\tau}, \hat{\boldsymbol{\tau}}$ should appropriately and consistently increase as task repetitions elapse.  
\par 
Finally, given a rooted tree $\mathcal{G} = \{\mathcal{V},\mathcal{E}\}$ with root $r$ and leaves $\mathcal{L} \subseteq \mathcal{V}$. We denote as $d(v)$ the depth (distance to the root) of vertex $v$, $p(v)$ the parent of $v$, $\deg(v)$ the number of outgoing edges of $v$ and the group $L(v) = \{ \ell \in \mathcal{L} \mid v \text{ is an ancestor of } \ell \}$. We denote as $\textrm{LCA}(l_1,l_2),l_1,l_2 \in \mathcal{L}$ the lowest common ancestor of two leaf nodes, i.e.:
\begin{equation}
    \textrm{LCA}(l_1,l_2) = \underset{v \in \mathcal{V}/\mathcal{L}}{\arg\max\{d(v)\}}, \textrm{such that: } l_1, l_2 \in L(v).
\end{equation}
\subsubsection{{sTLT Semantics}}
{
Throughout the sequel we implicitly employ the nesting rules of the previous section. An important step rests on leveraging Cors. \ref{cor:nesting:always:ops}, \ref{cor:nesting:eventually:ops}, \ref{cor:nested:nested:ops}, for replacing each STL operator of a formula with their corresponding CBF-STL operators for simplified fragments (we remind the reader that such fragments do not contain nestings of the form $GG, FF$). While in the aforementioned corollaries we have distinguished between always/eventually through the notation $\mathbf{\Gamma},\mathbf{\Phi}$, in the sequel we employ the symbol $\mathbf{T}$ to capture both cases. 
}
\begin{proposition}\label{prop:complete_paths}
    Given an STL formula $\phi$ \eqref{eq:stl:formula}, which stems from a collection of simple predicates $\{\mu_k,\ k=1,\cdots,K\}$, consider the corresponding STL operator tree $\mathcal{G}_{\textrm{STL}}$ its logic tree $\mathcal{G}_{\textrm{LOG}}$ as well its sTLT.
    Denote as $\mathbf{p}_k$ complete path of the sTLT ending on the leaf vertex corresponding to $\mu_k,\forall k\in\{1,\cdots,K\}$ (see \citebr{10388467} for details):
     \begin{equation}\label{eq:complete:path}
        \mathbf{p}_k = \mathbb{X}_0 \widetilde{\Theta}_1 \mathbb{X}_1 \widetilde{\Theta}_2 \cdots \widetilde{\Theta}_{N_f^k} \mathbb{X}_{N_f^k}.
    \end{equation}
    Let $(x,t) \cong \mathbf{p}_k$ denote that a trajectory $x\in\fundef{\mathbb{R}}{\mathcal{X}}$ of \eqref{eq:dynamics} \textit{satisfies} $\mathbf{p}$ (see. Definition 13 in \citebr{10388467} for details). Then:
    \begin{equation}
        (x,0) \cong \mathbf{p}_k \Leftrightarrow \mathbf{T}_{\alpha_k(\boldsymbol{\tau}^k(t)),\beta_k(\boldsymbol{\tau}^k(t))}^{\Theta_k}V_{h_k}(x(t),t) \geq 0, 
    \end{equation}
    $\forall t \geq 0$ and for some $\boldsymbol{\tau}^k \in\fundef{\mathbb{R}}{\mathbb{R}^{l_k}}$, where $\mathbf{T}_{\alpha_k,\beta_k}^{\Theta_k} \in \mathcal{T}$ is the operator corresponding to $\mu_k$, i.e.,:
    \begin{equation}\label{eq:neted:operator:ex} 
        \mathbf{T}_{\alpha_k,\beta_k}^{\Theta_k}V_{h_k}(x,t) = \mathbf{T}_{\alpha_k^1,\beta_k^1}^{\Theta_k^1}\mathbf{T}_{\alpha_k^2,\beta_k^2}^{\Theta_k^2}\cdots \mathbf{T}_{\alpha_k^{N_f^k},\beta_k^{N_f^k}}^{\Theta_k^{N_f^k}}V_{h_k}(x,t),
    \end{equation}
    where for notational consistency between the indices the operators corresponding to $\widetilde{\Theta}_i\in\{\wedge,\vee\}$ can be replaced with the identity operator $\mathbf{T}_{0,0}^{\emptyset}$, i.e., $\alpha_k^i=\beta_k^i=0$ for any $\Theta_k^i=\emptyset, \forall i\in\{0,\cdots,N_f^k\}$ such that $\widetilde{\Theta}_i\in\{\wedge,\vee\}$. The function $V_{h_k}\in\fundef{\mathcal{X}\times\mathbb{R}}{\mathbb{R}}$ is the value function corresponding to $\mu_k$ and $\alpha_k,\beta_k\in\mathcal{P}_{\Theta_k}^{l_k},\Theta_k \subset \mathbb{R}^{l_k}\cup\emptyset, \alpha_k^i,\beta_k^i\in\mathcal{P}^{l_k^i},\Theta_k^i \subset \mathbb{R}^{l_k^i}\cup\emptyset,\forall i\in\{1,\cdots,N_f^k\}, \forall k \in \{1,\cdots,K\}$. 
\end{proposition}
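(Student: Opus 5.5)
The proof will go by induction on the length $N_f^k$ of the complete path $\mathbf{p}_k$, read from the leaf $\mathbb{X}_{N_f^k}$ towards the root $\mathbb{X}_0$. This mirrors how the nested operator in \eqref{eq:neted:operator:ex} is built by composing one operator per temporal or logic node.

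\emph{Base case.} Here the path is $\mathbb{X}_0\widetilde{\Theta}_1\mathbb{X}_1$ with $\widetilde{\Theta}_1$ a single temporal node, $G$ or $F$ (or until, decomposed as in \eqref{eq:until:decomp}). We have $(x,0)\cong\mathbf{p}_k$ if and only if $(x,0)$ satisfies the one-operator fragment over $\mu_k$, by Definition 13 of \citebr{10388467}. Props. \ref{prop:always}, \ref{prop:eventually}, \ref{prop:until} identify this with the operator correspondence \eqref{eq:corr:toshow}. Prop. \ref{prop:operator:reachable} then extends positivity of the operator from the satisfaction window $[t_0+\alpha,t_0+\beta]$ to all of $[t_0,t_0+\beta]$. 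Hence it holds for all $t\geq0$ with a suitable selection $\boldsymbol{\tau}^k(t)\in\Theta_k$. Outside the window the operator is not restricted by \eqref{eq:operator:action}, so I would take the condition as vacuous there.

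\emph{Inductive step.} Suppose the claim holds for the sub-path $\mathbb{X}_1\widetilde{\Theta}_2\cdots\mathbb{X}_{N_f^k}$, i.e., for the operator $\mathbf{T}^{\Theta_k^2}_{\alpha_k^2,\beta_k^2}\cdots V_{h_k}$. We then prepend $\widetilde{\Theta}_1$ and split into cases.
\begin{itemize}
\item If $\widetilde{\Theta}_1\in\{\wedge,\vee\}$, satisfaction of $\mathbb{X}_0$ along this path reduces to satisfaction of the child branch containing $\mu_k$, since a complete path selects one branch. The operator is the identity $\mathbf{T}_{0,0}^\emptyset$, and the parameter coupling is absorbed into $\mathcal{A}$, following Remark \ref{rem:param:consistency} and Thms. \ref{thm:conjunction:event}, \ref{thm:conjunction:alw}, \ref{thm:alw:event:conjunction:indep}, \ref{thm:alw:event:disjunction:indep}.
\item If $\widetilde{\Theta}_1$ is temporal, I first simplify $GG$ and $FF$ pairs via Props. \ref{thm:nesting:always}, \ref{thm:nesting:eventually} and Cors. \ref{cor:nesting:always:ops}, \ref{cor:nesting:eventually:ops}.
\item The remaining temporal pairs are handled by Prop. \ref{prop:nesting:event:alw} ($FG$), Thms. \ref{thm:nesting:alw:event:alw} and \ref{thm:nested:nested} ($GF$, via Cor. \ref{cor:nested:nested:ops}), and Thm. \ref{thm:nested:nested:eventually} (a leading $F$).
\end{itemize}
Each of these is an equivalence, between satisfaction of the prefixed fragment and positivity of the composed operator, with updated $\alpha,\beta,\Theta$. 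Chaining them gives exactly the composition \eqref{eq:neted:operator:ex}.

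\emph{Main obstacle.} The hardest step is the $GF$ case. There, Cor. \ref{cor:nested:nested:ops} returns a \emph{collection} of operators indexed by $J$, with a $\min$ over repetitions. This has to be reconciled with the single time-varying parameter $\boldsymbol{\tau}^k(t)$ in the statement. I would handle it by letting $\boldsymbol{\tau}^k(\cdot)$ be piecewise constant: on each window $[\alpha_J,\beta_J]$ it takes the value $\tau_J'$. The $\min$ over $j\leq J$ then collapses to the currently active operator, because earlier windows have already elapsed, and Prop. \ref{prop:operator:sequence} guarantees that positivity for an earlier window implies positivity for later ones up to the switching time. The dimension of $\boldsymbol{\tau}$ grows as repetitions elapse, as noted after the definition of $\hat{\mathcal{A}}$.

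Finally, the statement is at $t_0=0$ but the correspondence is required for all $t_0$. I would get this by noting that every rule above holds uniformly in $t_0$, and then specializing to $t_0=0$.
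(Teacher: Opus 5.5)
Your skeleton (leaf-to-root recursion, logic nodes as $\mathbf{T}_{0,0}^{\emptyset}$, each temporal node or pair discharged by the matching nesting rule, parameters realised as switching signals) is the one the paper uses. The converse direction, however, has a genuine gap. Every equivalence you chain (the correspondence \eqref{eq:corr:toshow}, Prop.~\ref{prop:nesting:event:alw}, Thms.~\ref{thm:nesting:alw:event:alw}--\ref{thm:nested:nested:eventually}) fixes each window once and then quantifies over all $t$ in it. The right-hand side of the proposition instead lets $\boldsymbol{\tau}^k$ depend on the running time $t$. Your ``suitable selection'' yields such a signal for $(\Rightarrow)$. For $(\Leftarrow)$, though, you must recover per-window constant parameters from an \emph{arbitrary} witnessing signal. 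With your convention that the condition is vacuous outside $[t_0,t_0+\beta]$, this already fails for $F_{[\underline{t},\overline{t}]}\mu$ with $\underline{t}<\overline{t}$. Take $\tau(t)=\overline{t}-\underline{t}$ for $t\le\underline{t}$ and $\tau(t)=0$ for $t>\underline{t}$. For $t\le\underline{t}$ the operator equals $V_h(x(t),t-\overline{t})$, and for $t>\underline{t}$ one has $t>\beta(\tau(t))=\underline{t}$, so nothing is required. Hence any trajectory that at time $\underline{t}$ can still reach $\mathcal{B}_h$ by time $\overline{t}$, but does not enter $\mathcal{B}_h$ on $[\underline{t},\overline{t}]$, satisfies the right-hand side and violates the formula.

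The paper's proof handles this through two identifications. First, it identifies the admissible $\boldsymbol{\tau}^k(\cdot)$ with the time interval codings of Definition 12 in \citebr{10388467}, i.e., signals that define the starting and ending time of each operator window; read as a restriction, a value cannot be revised before its window has been traversed. Second, it identifies each $\mathbb{X}_{i-1}$ with the zero super-level set of the partial composition $\mathbf{T}_{\alpha_k^i,\beta_k^i}^{\Theta_k^i}\cdots\mathbf{T}_{\alpha_k^{N_f^k},\beta_k^{N_f^k}}^{\Theta_k^{N_f^k}}V_{h_k}$. You need both. The first is what makes $(\Leftarrow)$ true. The second is needed because your induction hypothesis concerns $\cong$ for the sub-path, the rules you cite are stated for $\models$, and you invoke Definition 13 only in the base case.

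A smaller point in the $GF$ step: ``earlier windows have already elapsed'' is false whenever consecutive windows overlap, $\beta_{J-1}\ge\alpha_J$. This happens for small steps $\tau_J'$ as soon as the inner fragment has a window of positive length. The $\min$ still collapses, but only because on $[\alpha_J,\beta_{J-1}]$ both operators equal $h(x(t))$; the paper treats this as a separate case. So the switch to $\tau_J'$ should be placed at $\min\{\alpha_J,\beta_{J-1}\}$, which is exactly the endpoint up to which Prop.~\ref{prop:operator:sequence} gives the implication. This switching rule is also the restriction on $\boldsymbol{\tau}^k$ that makes the converse go through.
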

\begin{proof}
    Consider some $k \in \{1,\cdots,K\}$. We will prove the claim for each item i) through iv) of Definition 13 in \citebr{10388467}. For the complete path \eqref{eq:complete:path}, $\mathbb{X}_i$ correspond to sets and $\widetilde{\Theta}_i,i\in\{1,\cdots,{N_f^k}\}$ correspond to STL operators (see \citebr{10388467} for details).  
    First note that \textit{time interval codings} Definition 12 in \citebr{10388467} correspond to the CBF-STL operator intervals $[\alpha_k^i,\beta_k^i]$. 
    To see this, the existence of a time interval coding is defined as ``\textit{assigning a time interval to each reachable set $\mathbb{X}_i$}'', which corresponds exactly to finding some $\boldsymbol{\tau}^k(t)$ (possibly discontinuous) which defines the starting and ending times of each STL operator.  
    \par 
    Furthermore, consider the constituents of the path's $\mathbf{p}_k$ corresponding operator in Eq. \eqref{eq:neted:operator:ex}.
    Each $\widetilde{\Theta}_i, \mathbb{X}_i$ in \citebr{10388467} is defined recursively from $i={N_f^k}$ to $i=1$, with $\mathbb{X}_{N_f^k}$ corresponding to $\mathcal{B}_{h_k}$ in our notation (i.e., $\setdef{x\in\mathcal{X}}{{h_k}(x) = V_{h_k}(x,0)\geq 0}$), and $\mathbb{X}_{i-1}$ is the reachable set for the STL operator $\widetilde{\Theta}_{i}$ from the set $\mathbb{X}_{i}$ given its time interval coding, which is exactly:
    \begin{equation}\notag 
        \mathbb{X}_{i-1}= 
        \setdefr
        {x\in\mathcal{X}}
        { \mathbf{T}_{\alpha_k^i,\beta_k^i}^{\Theta_k^i}\cdots \mathbf{T}_{\alpha_k^{N_f^k},\beta_k^{N_f^k}}^{\Theta_k^{N_f^k}}V_{h_k}(x,t) \geq 0}.
    \end{equation}  
    {
    Finally, item i) in \citebr{10388467} corresponds to Thms. \ref{thm:alw:event:conjunction:indep}, \ref{thm:alw:event:disjunction:indep} by replacing the outer operator with the identity one (see Def. \ref{def:operator}), items ii) and iii) are covered by Props. \ref{thm:nesting:always}, \ref{thm:nesting:eventually} and Thms. \ref{thm:nesting:alw:event:alw}, \ref{thm:nested:nested}, while item iv) corresponds exactly to set invariance by the action of the operator on the value function. Sufficiency and necessity stem directly from the aforementioned proofs, where STL formula satisfaction is equivalent to the state belonging to the corresponding reachable sets  (i.e., super-level sets of the value function acted on by the operator). 
    }
\end{proof}

\subsubsection{{Formula Satisfaction}}
In order to satisfy the entire formula $\phi$, the logical relationships between complete paths of the sTLT must be taken into account. To this end, we will design a function $\sigma\in\fundef{\mathcal{X}\times\mathbb{R}\times\Theta}{\mathbb{R}}$ whose positivity is necessary and sufficient for formula satisfaction, via Alg. \ref{alg:groupings}.
\begin{figure*}[ht!]
    \centering
    \includegraphics[trim={0.0cm 12.75cm 5.5cm 0cm},clip,width=0.9\linewidth]{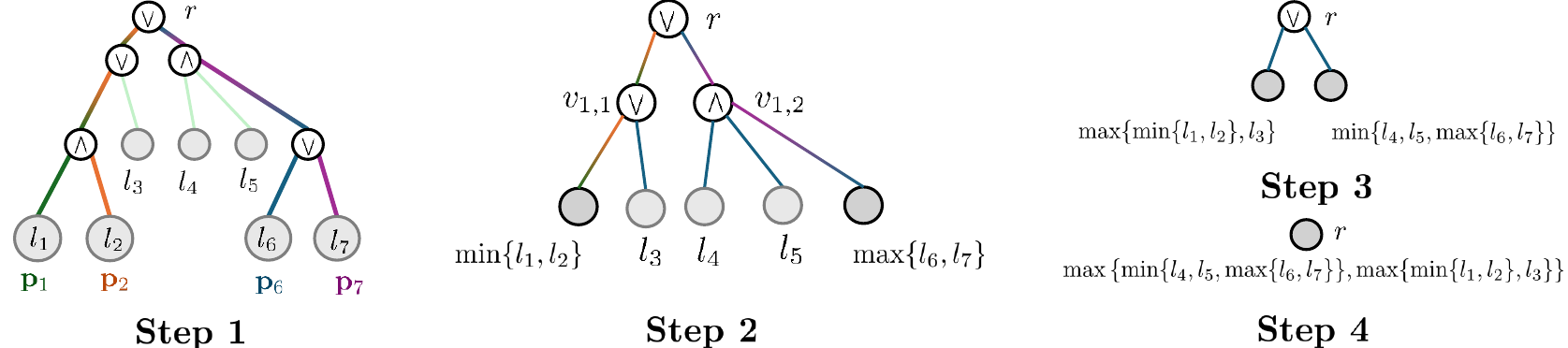}
    \caption{Example/visual aid for Alg. \ref{alg:groupings} and the proof of Thm. \ref{thm:formula:satisfaction}. At step 1, two pairs of leaves with the maximum LCA are identified $\left(\{l_1,l_2\},\{l_6,l_7\}\right)$.}
    \label{fig:graph:ex:thm}
\end{figure*}
\begin{algorithm}[ht!]
\caption{Bottom-Up Logic Tree Groupings}\label{alg:groupings}
\begin{algorithmic}[1]
\State Consider the proper logic tree $\mathcal{G}_{\textrm{LOG}} = (\mathcal{V}_{\textrm{LOG}}, \mathcal{E}_{\textrm{LOG}})$ of depth $D\in\mathbb{N}$,
with its root removed, let its new root denoted by $r$. Since the tree is proper, for internal nodes $\deg(v) \geq 2$, and for leaves $\deg(\ell) = 0$. For each internal node $v \in \{\vee,\wedge\}$.
\State $\bar{d}\gets D-1$.
\While{$\bar{d} \geq  0$}
    \State Find the leaf subset(s) with LCA of maximum depth:
    \[
        \{l_1^\star, l_2^\star,\cdots\} = \underset{\boldsymbol{l} \in \textrm{Pow}(\mathcal{L})}{\arg \max} \left\{ d\left( \textrm{LCA}(\boldsymbol{l}\right) \right\}.
    \]
    \State Remove leaves $l_1^\star, l_2^\star,\cdots$ from tree, relabel $v \triangleq p(l_1^\star) = p(l_2^\star) = \cdots$ as $v'$, with:
    \[
        v' = 
        \begin{cases}
            \min\{ l_1^\star, l_2^\star,\cdots \} , &v \in \{\wedge\}
            \\
            \max\{ l_1^\star, l_2^\star,\cdots \} , &v \in \{\vee\}
        \end{cases}.
    \]
    % Since $\mathcal{G}_{\textrm{LOG}}$ is a proper tree, the LCA of any leaf pair is their parent (no single degree chains on tree), and $v'$ becomes a new leaf.
    \State Update $\bar{d}\gets \underset{\boldsymbol{l} \in \textrm{Pow}(\mathcal{L})}{\max} \left\{ d\left( \textrm{LCA}(\boldsymbol{l})\right) \right\}$
\EndWhile
\State Return root label $r$.
\end{algorithmic}
\end{algorithm}
\\
An example of the steps of Alg. \ref{alg:groupings} is depicted in Fig. \ref{fig:graph:ex:thm}.
The function $\sigma$ then results from replacing the leaf node labels with their corresponding operator-based value functions (see Fig. \ref{fig:task:logic:graphs}) and is formally defined via the mapping:
\begin{equation}\label{eq:sigma_fun_def}
    \mathfrak{G}_{\textrm{log}} \overset{\textrm{Alg. }\ref{alg:groupings}}{\longrightarrow} \fundef{\mathcal{X}\times\mathbb{R}\times\Theta}{\mathbb{R}} \ni \sigma .
\end{equation}
We overload the notation that we have used thus far for the operators by also including the dependence on the corresponding parameters:
\begin{equation}\notag 
    \fundef{\mathcal{X}\times\mathbb{R}\times\Theta}{\mathbb{R}}\ni\mathbf{V}_h(x,t,\boldsymbol{\tau})
    = \mathbf{T}_{\alpha(\boldsymbol{\tau}), \beta(\boldsymbol{\tau})}^{\Theta}V_h(x,t;t_0).
\end{equation}
For the case of $\phi= F_{\left[0,15\right]}\left( G_{\left[2,10\right]}\mu_1 \vee \mu_2 U_{\left[5,10\right]}\mu_3\right)$ (example \citebr{10388467}, Fig. \ref{fig:task:logic:graphs}) this yields:
\begin{equation}\label{eq:sigma_fun:example}
    \begin{split}
         &\sigma(x,t,\boldsymbol{\tau})
         =
         \sigma\left(x,t,\left(\boldsymbol{\tau}^1,\boldsymbol{\tau}^2,\boldsymbol{\tau}^3\right)\right)=
         \\
         &{\max}
         \left\{ 
            \mathbf{V}_1(x,t;\boldsymbol{\tau}^1), 
            \min
            \left\{
                \mathbf{V}_2(x,t;\boldsymbol{\tau}^2),\mathbf{V}_3(x,t;\boldsymbol{\tau}^3)
            \right\}
         \right\},
    \end{split}
    \raisetag{1.0cm}
\end{equation}
where $\mathbf{V}_i(x,t;\boldsymbol{\tau}^i) = \mathbf{T}_{\alpha_i(\boldsymbol{\tau}^i),\beta_i(\boldsymbol{\tau}^i)}^{\Theta_i}V_{h_i}(x,t;t_0),i\in\{1,2,3\}$.

\begin{thm}\label{thm:formula:satisfaction}
    Given an STL formula $\phi$ \eqref{eq:stl:formula}, which contains a collection of simple predicates $\{\mu_k,\ k=1,\cdots,K\}$, consider the corresponding STL operator tree $\mathcal{G}_{\textrm{STL}}$ its logic tree $\mathcal{G}_{\textrm{LOG}}$ as well its sTLT, denoted by $\mathcal{T}^T$. Construct the corresponding function $\sigma \in \fundef{\mathcal{X}\times\mathbb{R}\times \Theta}{\mathbb{R}}$, evaluated as $\sigma(x,t,\boldsymbol{\tau})$. Now let $\mathcal{A}$ denote the adjacency matrix for the parameters $\boldsymbol{\tau} = \left(\boldsymbol{\tau}^1, \cdots, \boldsymbol{\tau}^K \right)$, with $\hat{\mathcal{A}}$ a matrix consisting of its independent columns and denote the free parameters as $\hat{\boldsymbol{\tau}}$, satisfying $\boldsymbol{\tau} = \hat{\mathcal{A}}^\top\hat{\boldsymbol{\tau}}$. Further consider the satisfaction relationship $(x,t) \cong \mathcal{T}^T$ Definition 14 in \citebr{10388467} for a trajectory $x\in\fundef{\mathbb{R}}{\mathcal{X}}$ of \eqref{eq:dynamics} from $x(t_0),t_0 =0$ w.l.o.g.. Then
    \( (x,0)\cong \mathcal{T}^T \Leftrightarrow(x,0) \models \phi\Leftrightarrow \exists \hat{\boldsymbol{\tau}} \in \fundef{\mathbb{R}}{\Theta_{\mathcal{L}}} : \sigma\left(x(t),t,\hat{\mathcal{A}}^\top \hat{\boldsymbol{\tau}}(t)\right)\geq 0, \forall t \in \mathbb{R}_{\geq 0}\).    
\end{thm}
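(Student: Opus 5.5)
The proof splits into two equivalences. For $(x,0)\cong\mathcal{T}^T \Leftrightarrow (x,0)\models\phi$, I would invoke the soundness and completeness result of \citebr{10388467}: satisfaction of the sTLT (Definition 14 there) is equivalent to satisfaction of the formula. The one point to check is that our $\mathcal{G}_{\textrm{STL}}$ is built from $\phi$ itself and not from the ``desired form'' $\hat{\phi}$. By Props. \ref{thm:nesting:always}, \ref{thm:nesting:eventually} and Thms. \ref{thm:conjunction:event}--\ref{thm:alw:event:disjunction:indep}, the rewritings used there preserve satisfaction, so this step is essentially a citation. The substantive part is $(x,0)\models\phi \Leftrightarrow \exists\hat{\boldsymbol{\tau}}:\sigma\geq 0$. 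I would prove it through the sTLT characterization, using Prop. \ref{prop:complete_paths} as the leaf-level dictionary. For each leaf $k$, satisfaction of the complete path $\mathbf{p}_k$ is equivalent to $\mathbf{V}_k(x(t),t;\boldsymbol{\tau}^k(t))\geq 0$ for all $t\geq0$, for some parameter signal $\boldsymbol{\tau}^k$.

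Next, I would argue by induction over the iterations of Alg. \ref{alg:groupings}, i.e., bottom-up on the proper tree $\mathcal{G}_{\textrm{LOG}}$. The invariant is that each current leaf $v$ carries a label $\sigma_v$, and that $\sigma_v(x(t),t,\boldsymbol{\tau})\geq0$ for all $t$ is equivalent to satisfaction of the sub-formula rooted at $v$, in the sense of Definition 14 of \citebr{10388467} restricted to that subtree. The base case is Prop. \ref{prop:complete_paths}. For the inductive step, take an internal node $v$ of maximal depth whose children are all leaves. If $v=\wedge$, then $\min_i\sigma_{l_i}\geq0$ iff every $\sigma_{l_i}\geq0$, which matches Definition 14, where all children's paths must be satisfied. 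If $v=\vee$, then $\max_i\sigma_{l_i}\geq0$ iff some $\sigma_{l_i}\geq0$. Thms. \ref{thm:conjunction:event}, \ref{thm:conjunction:alw}, \ref{thm:alw:event:conjunction:indep}, \ref{thm:alw:event:disjunction:indep} justify pushing the temporal operators above $v$ into each child's operator, so the logical node is applied pointwise in $t$ to already time-shifted value functions. The algorithm terminates because each iteration strictly shrinks the tree, and the root label is $\sigma$.

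The coupling of parameters is handled via $\mathcal{A}$. By Remark \ref{rem:param:consistency}, the independent modification of children in Thms. \ref{thm:alw:event:conjunction:indep}, \ref{thm:alw:event:disjunction:indep} is consistent only when shared parameters are equal. Restricting to $\boldsymbol{\tau}=\hat{\mathcal{A}}^\top\hat{\boldsymbol{\tau}}$ enforces exactly these equalities. For ``$\Leftarrow$'', a common $\hat{\boldsymbol{\tau}}$ with $\sigma\geq0$ unpacks, via the invariant, into leaf parameters satisfying every path, consistently. For ``$\Rightarrow$'', the time-interval coding witnessing satisfaction defines $\hat{\boldsymbol{\tau}}$, by the correspondence between codings and $\boldsymbol{\tau}^k$ in the proof of Prop. \ref{prop:complete_paths}.

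I expect the main obstacle to be the disjunction case combined with time dependence and quantifier order. The statement ``$\forall t$, $\max\{\mathbf{V}_1,\mathbf{V}_2\}\geq0$'' is weaker than ``$\forall t\,\mathbf{V}_1\geq0$ or $\forall t\,\mathbf{V}_2\geq0$''. Moreover, the parameters are signals $\hat{\boldsymbol{\tau}}(t)$ under nested $\forall J\,\exists\tau_J$ quantifiers. To close this gap I would first try the monotonicity results, Prop. \ref{prop:valu_fun:monotone} and Prop. \ref{prop:operator:reachable}. The aim is to show that if a branch's operator value becomes negative at some time, then that branch can no longer be satisfied from then on, so the branch attaining the max can be fixed consistently. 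If that fails, I would interpret $\cong$ per Definition 14, which already selects branches time-wise, and argue equivalence directly with that definition.
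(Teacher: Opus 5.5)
Your architecture is the paper's. You cite \citebr{10388467} for $(x,0)\cong\mathcal{T}^T\Leftrightarrow(x,0)\models\phi$, induct along Alg.~\ref{alg:groupings} with Prop.~\ref{prop:complete_paths} as base case and $\min/\max$ at logical nodes (the paper packages this step as Prop.~\ref{prop:prelim:thm:formula:satisfaction}), and couple parameters through $\boldsymbol{\tau}=\hat{\mathcal{A}}^\top\hat{\boldsymbol{\tau}}$.

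The gap is exactly the one you flagged, and it is fatal rather than technical. Your invariant is quantified over all $t$. At a $\vee$ node you use that $\max_i\sigma_{l_i}\geq 0$ iff some $\sigma_{l_i}\geq 0$, which holds only pointwise in $t$, so the step silently replaces ``$\exists i\,\forall t$'' by ``$\forall t\,\exists i$''. The paper's proof does not supply the missing argument; it only asserts that Prop.~\ref{prop:prelim:thm:formula:satisfaction} carries over to $\vee,\max$. In fact no argument exists for $\sigma$ as constructed. Take $\phi=G_{[0,2]}\mu_1\vee G_{[0,2]}\mu_2$, the scalar system $\dot{x}=u$ with $|u|\leq 1$, $h_1(x)=1-(x-1)^2$, $h_2(x)=1-(x-2)^2$, $x(0)=0.5$, $u\equiv 1$ on $[0,2]$ and $u\equiv 0$ afterwards. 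Both operators are $\mathbf{\Gamma}_{0,2}^{\emptyset}$, so there are no parameters and $\sigma(x(t),t)=\max\{h_1(x(t)),h_2(x(t))\}$. This is nonnegative for all $t\geq 0$, because $x(t)\in[0.5,2.5]\subset\mathcal{B}_{h_1}\cup\mathcal{B}_{h_2}$. Yet $h_2(x(0))<0$ and $h_1(x(2))<0$, so $(x,0)\not\models\phi$.

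Neither repair you propose can work.
\begin{itemize}
\item \emph{Monotonicity.} Your intuition is semantically right: a violated disjunct is dead for good. During a reach phase this is even visible in the label, since for fixed parameters $t\mapsto V_h(x(t),t-\alpha-t_0)$ is non-increasing along any admissible trajectory, by dynamic programming on \eqref{eq:vfun:def}. In the hold phase, however, the label is $h(x(t))$, which has no monotonicity along trajectories; Prop.~\ref{prop:valu_fun:monotone} only concerns the time argument at a fixed state. So the $\max$ can move onto a disjunct that has already failed, exactly as in the example.
\item \emph{Falling back on Definition 14.} As the paper's remark after the proof states, the sTLT semantics of \citebr{10388467} commits to one disjunct. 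That is the ``$\exists i\,\forall t$'' reading, which is precisely what the pointwise $\max$ fails to encode.
\end{itemize}

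Two related steps in your proposal also do not hold as stated.
\begin{itemize}
\item The disjunctive $G$ and $GF$ versions of the rules you invoke to push temporal operators below a logical node (the remark after Thm.~\ref{thm:conjunction:alw}, and Thm.~\ref{thm:alw:event:disjunction:indep}) hold in one direction only. For example, $G\psi_1\vee G\psi_2\Rightarrow G(\psi_1\vee\psi_2)$, but not conversely.
\item Your first equivalence is not ``essentially a citation''. \citebr{10388467} establishes completeness of $\cong$ only for formulae in desired form (see the paper's remark after the proof). Building the tree from $\phi$ instead of $\hat{\phi}$ does not extend that result, so for, e.g., $G_{[0,2]}(\mu_1\vee\mu_2)$ only $\cong\Rightarrow\models$ is available from the citation.
\end{itemize}

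For disjunction-free formulae your induction coincides with the paper's argument. To cover $\vee$, you would have to restrict the statement or change the construction so that the chosen disjunct stays committed over the window on which it is evaluated. One option is to latch each disjunct's label through its running minimum along the trajectory before taking the $\max$, at the cost of an extra memory state.
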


Prior to proving Thm. \ref{thm:formula:satisfaction}, we prove the following:
\begin{proposition}\label{prop:prelim:thm:formula:satisfaction}
    Consider the setup of Thm. \ref{thm:formula:satisfaction}, two complete paths of $\mathcal{T}^T$, denoted as $\mathbf{p}_{k_1}, \mathbf{p}_{k_2},k_1,k_2\in\{1,\cdots,K\}$ and given by \eqref{eq:complete:path} such that they share a common segment that differs after some depth $d>0$, i.e., $v = \textrm{LCA}(l_{k_1},l_{k_2})\Rightarrow d(v)>0$. Further let the (nested) CBF-STL operators of the corresponding leaves $l_{k_1},l_{k_2}\in\mathcal{V}_{\textrm{LOG}}$ be $\mathbf{T}_{\alpha_{k_1},\beta_{k_1}}^{\Theta_{k_1}}V_{h_{k_1}}(x,t),\mathbf{T}_{\alpha_{k_2},\beta_{k_2}}^{\Theta_{k_2}}V_{h_{k_2}}(x,t)$ Then:
    \begin{equation}
        \begin{gathered}
            \left( 
                (x,0) \cong \mathbf{p}_{k_1}
            \right) 
            \bigwedge
            \left( 
                (x,0) \cong \mathbf{p}_{k_2}
            \right)
            \Leftrightarrow
            \\
            \min
            \left\{ 
                \mathbf{T}_{\alpha_{k_1},\beta_{k_1}}^{\Theta_{k_1}}V_{h_{k_1}}(x,t),
                \mathbf{T}_{\alpha_{k_2},\beta_{k_2}}^{\Theta_{k_2}}V_{h_{k_2}}(x,t)
            \right\}
            \geq 0,
        \end{gathered}
    \end{equation}
    {where the same holds for $\bigvee,\max$}.
\end{proposition}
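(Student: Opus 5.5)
The plan is to reduce the statement to Prop.~\ref{prop:complete_paths} applied to each path separately, and then combine the two resulting inequalities with the elementary identity $\min\{a,b\}\geq 0 \Leftrightarrow (a\geq 0)\wedge(b\geq 0)$ (respectively $\max\{a,b\}\geq 0 \Leftrightarrow (a\geq0)\vee(b\geq 0)$). Concretely, Prop.~\ref{prop:complete_paths} gives, for $i\in\{1,2\}$,
\[
(x,0)\cong \mathbf{p}_{k_i} \Leftrightarrow \exists \boldsymbol{\tau}^{k_i}\in\fundef{\mathbb{R}}{\mathbb{R}^{l_{k_i}}}: \mathbf{T}_{\alpha_{k_i}(\boldsymbol{\tau}^{k_i}(t)),\beta_{k_i}(\boldsymbol{\tau}^{k_i}(t))}^{\Theta_{k_i}}V_{h_{k_i}}(x(t),t)\geq 0,\ \forall t\geq 0 .
\]
Taking the conjunction and using the pointwise identity for $\min$ then yields the stated equivalence, once the quantifiers over the parameter signals are handled.

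The handling of the quantifiers is the step that needs care, because the two paths share the segment from the root down to $v=\textrm{LCA}(l_{k_1},l_{k_2})$. The temporal operators on this common segment contribute parameters that appear in both $\boldsymbol{\tau}^{k_1}$ and $\boldsymbol{\tau}^{k_2}$. In the nested operator \eqref{eq:neted:operator:ex}, the factors $\mathbf{T}_{\alpha^i,\beta^i}^{\Theta^i}$ with $i\le d(v)$ are literally the same operators for both leaves. I will therefore split each parameter vector into a shared part, indexed by the rows of the adjacency matrix $\mathcal{A}$ that couple the two paths, and a private part below $v$. The existential statements then become
\[
\exists\,\text{shared}\ \exists\,\text{private}_1\ \exists\,\text{private}_2:\ \min\{\cdot,\cdot\}\geq 0 .
\]
For the direction ``$\Leftarrow$'', I restrict to each component. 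For ``$\Rightarrow$'', I must show that witnesses for the two paths can be chosen to agree on the shared coordinates. Here I would invoke Thms.~\ref{thm:conjunction:event}, \ref{thm:conjunction:alw}, \ref{thm:alw:event:conjunction:indep} together with Rem.~\ref{rem:param:consistency}, with $v=\wedge$ and the operators above $v$ distributed over the two children. Those results state that the fragment rooted above $v$ is satisfied if and only if the distributed fragments are satisfied \emph{with equal outer parameters}. Since the sTLT satisfaction of Definition~14 in \citebr{10388467} is defined with a single time-interval coding for the common ancestors, the pair $(x,0)\cong\mathbf{p}_{k_1}\wedge(x,0)\cong\mathbf{p}_{k_2}$ is to be read with a common coding on the shared segment. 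This matches $\tau_1'=\tau_2'$ and hence a common shared parameter.

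For the disjunction case, I will repeat the same argument with Thm.~\ref{thm:alw:event:disjunction:indep} and the $\max$ identity. Here ``$\Rightarrow$'' is immediate: take the witness of whichever path holds, and pad the other path's private parameters arbitrarily. The converse follows as before. The main obstacle is the ``$\Rightarrow$'' direction of the conjunction, where common shared parameters must be extracted. If a direct appeal to Rem.~\ref{rem:param:consistency} is judged insufficient, I would argue by induction on $d(v)$. At each shared level, I apply the matching nesting rule (Cors.~\ref{cor:nesting:always:ops}, \ref{cor:nesting:eventually:ops}, \ref{cor:nested:nested:ops}, or Thm.~\ref{thm:nested:nested:eventually}) to peel off the outermost common operator. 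I then use Thm.~\ref{thm:conjunction:event} or \ref{thm:conjunction:alw} to show that its parameter is chosen once for both branches.
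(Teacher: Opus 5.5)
Your conjunction case is sound, and it is essentially the paper's argument in a different order. The paper splits $\mathbf{p}_{k_i}=\mathbf{p}_{0:n}\mathbf{p}_{n:N_f^{k_i}}$ at the LCA and uses one time-interval coding $[\underline{t}_n,\overline{t}_n]$ for the shared prefix. It then writes the pair as $\mathbf{T}_{\alpha',\beta'}^{\Theta'}$ acting on the conjunction of the two branch operators and distributes. You instead apply Prop.~\ref{prop:complete_paths} to each full path and force the shared coordinates of $\boldsymbol{\tau}^{k_1},\boldsymbol{\tau}^{k_2}$ to agree; your fallback induction on $d(v)$ is essentially the paper's factorization. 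Reading the left-hand side with a common coding on the shared segment is exactly the identification the paper makes. With that reading, $\forall t$ distributes over $\wedge$. Once the shared coordinates are fixed, the existentials over the two private blocks separate, because each branch depends only on its own block. Your $\Rightarrow$ direction for $\vee$ is also fine.

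The gap is the claim that, for $\vee$, ``the converse follows as before''. The conjunction step used $\forall t\,(a(t)\geq 0\wedge b(t)\geq 0)\Leftrightarrow(\forall t\,a(t)\geq 0)\wedge(\forall t\,b(t)\geq 0)$, and the analogous exchange of $\forall t$ with $\vee$ is false. A condition $\max\{\cdot,\cdot\}\geq 0$ along the trajectory only says that at each instant \emph{some} branch is nonnegative, and which one may change over time. By contrast, each $(x,0)\cong\mathbf{p}_{k_i}$ is, via Prop.~\ref{prop:complete_paths}, a statement about the whole horizon for that single branch.

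For example, take $F_{[0,10]}(G_{[0,2]}\mu_1\vee G_{[0,2]}\mu_2)$. Via Thm.~\ref{thm:conjunction:event} for $\vee$ and Cor.~\ref{cor:nesting:eventually:always:ops}, the leaf operators are $\mathbf{\Gamma}^{[0,10]}_{\tau,\tau+2}V_{h_1}$ and $\mathbf{\Gamma}^{[0,10]}_{\tau,\tau+2}V_{h_2}$ with a shared $\tau$. On $[\tau,\tau+2]$ their maximum is $\max\{h_1(x(t)),h_2(x(t))\}$. Consider a trajectory that moves from $\{h_1\geq 0\}$ into $\{h_2\geq 0\}$ halfway through this window, staying in each set for less than two time units. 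On $[0,\tau]$ the first operator is nonnegative because $\{h_1\geq 0\}$ is actually reached at $\tau$. So this trajectory meets the max condition, which encodes only $F_{[0,10]}G_{[0,2]}(\mu_1\vee\mu_2)$, yet it satisfies neither complete path.

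Hence the $\Leftarrow$ direction for $\vee$ cannot be obtained by the same argument. It needs either an extra hypothesis ruling out such switching, or a path-level semantics in which the selected disjunct may vary with time. The paper's proof does not supply this either; it only asserts that ``the same holds for $\vee,\max$'', so there is no argument there for you to borrow.
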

\begin{proof}
    See Appendix.
\end{proof}
We are now ready to prove Thm. \ref{thm:formula:satisfaction}.
\newline
\begin{proof}[Proof of Thm. \ref{thm:formula:satisfaction}]
    Consider every complete path $\mathbf{p}_k,k\in\{1,\cdots,K\}$. {The proof follows the same process of Alg. \ref{alg:groupings}}: consider the subset of leaves of $\mathcal{G}_{\textrm{LOG}}$ with the LCA of maximum degree:
    \begin{equation}\notag 
        \{l_{i_1}, l_{i_2},\cdots l_{i_N} \} = \underset{\boldsymbol{l} \in \textrm{Pow}(\mathcal{L})}{\arg \max} \left\{ d\left( \textrm{LCA}(\boldsymbol{l}\right) \right\},
    \end{equation}
    with their corresponding complete paths denoted as $\{\mathbf{p}_{i_1}, \cdots \mathbf{p}_{i_N}\},\ i_n \in \{1,\cdots,K\}, \forall n \in \{1,\cdots,N\}:\ N\in\{ 2,\cdots,K\}$. Let $v\in\{\wedge,\vee\}$ denote the label of the above LCA (which is unique and well defined since the logic tree is proper). Per Prop. \ref{prop:prelim:thm:formula:satisfaction} (we can apply this proposition to more than two paths with exactly the same arguments):
    \begin{equation}\label{thm:formula:satisfaction:paths_eq}
        \begin{gathered}
            \bigwedge_{n=1}^{N}
            \left( 
                (x,0) \cong \mathbf{p}_{i_n}
            \right)
            \Leftrightarrow
            \\
            \min
            \left\{
                \mathbf{V}_{h_{i_n}}(x,t,\boldsymbol{\tau}^{i_n}),\ \forall n \in \{1,\cdots,N\}
            \right\}\geq 0,
        \end{gathered}
    \end{equation}
    (where $\bigwedge, \min$ should be replaced with $\bigvee,\max$ based on the LCA $v \in \{ \wedge,\vee\}$). The leaves $\{l_{i_1}, l_{i_2},\cdots l_{i_N} \}$ are then removed from the graph, and the LCA is replaced with \eqref{thm:formula:satisfaction:paths_eq}.
    This process can be repeated inductively similarly to Alg. \ref{alg:groupings} (see Fig. \ref{fig:graph:ex:thm} for an example). Note that in Thms. 1,2 in \citebr{10388467} this inductive process is exactly how the authors show the equivalence:
    \begin{equation}\label{eq:thm:final:1}
        (x,0) \cong \mathcal{T}^T
        \Leftrightarrow
        (x,0) \models \phi.
    \end{equation}
    {
    Note that, starting from \eqref{thm:formula:satisfaction:paths_eq} and following Alg. \ref{alg:groupings} while replacing the leaf nodes with the corresponding functions $\mathbf{V}_{h_{i_n}}(x,t,\boldsymbol{\tau}^{i_n}),\ \forall n \in \{1,\cdots,N\}$ is exactly how the function $\sigma \in\fundef{\mathcal{X}\times\mathbb{R}\times \Theta}{\mathbb{R}}$ is obtained --see Eq. \eqref{eq:sigma_fun_def}--. Therefore, inductively applying Prop. \ref{prop:prelim:thm:formula:satisfaction} yields:
    }
    \begin{equation}\label{eq:thm:pf:sigma_fun}
        \begin{gathered}
            \sigma
            \left( 
                x(t),t,\left( \boldsymbol{\tau}^1, \cdots, \boldsymbol{\tau}^K \right)
            \right)
            \geq 
            0.
        \end{gathered}
    \end{equation}
    Nevertheless, in order to establish the equivalence between \eqref{eq:thm:final:1} and \eqref{eq:thm:pf:sigma_fun}, the parameters in $\left( \boldsymbol{\tau}^1, \cdots, \boldsymbol{\tau}^K \right)$ should also be consistent {(see Rem. \ref{rem:param:consistency}, Thms. \ref{thm:alw:event:conjunction:indep}, \ref{thm:alw:event:disjunction:indep})}. Per Prop. \ref{prop:complete_paths}, for any complete path $\mathbf{p}_k$, the time interval codings correspond to a choice of parameters $\boldsymbol{\tau}^k,k\in\{1,\cdots,K\}$. However this does not necessarily imply consistence of the parameters between operators/value functions. To achieve this, substitute in \eqref{eq:thm:pf:sigma_fun}:
    \(
        \left( \boldsymbol{\tau}^1, \cdots, \boldsymbol{\tau}^K \right) = \hat{\mathcal{A}}^\top \hat{\boldsymbol{\tau}}.
    \)
    Therefore, by necessity and sufficiency of Props. \ref{prop:complete_paths}, \ref{prop:prelim:thm:formula:satisfaction} existence of some $\hat{\boldsymbol{\tau}} \in \fundef{\mathbb{R}}{\Theta_\mathcal{L}}$ such that:
    \begin{equation}\label{eq:thm:pf:sigma_fun:2}
        \sigma
            \left( 
                x(t),t,\hat{\mathcal{A}}^\top \hat{\boldsymbol{\tau}}
            \right)
            \geq 
            0,
    \end{equation}
    directly implies that $(x,0) \cong \mathcal{T}^T$, which by \eqref{eq:thm:final:1} is a necessary and succificent condition for formula satisfaction. To finish the proof, we need to show necessity ($\Leftarrow$): If $(x,0) \cong \mathcal{T}^T$, by the inductive arguments of Thms. 1,2 in \citebr{10388467} and the necessary and sufficient conditions of the nesting rules, this directly implies that: 1) a time interval coding exists $\Rightarrow$ a consistent set of parameters $\boldsymbol{\tau} = \hat{\mathcal{A}}^\top \hat{\boldsymbol{\tau}}$ also exists for \eqref{eq:thm:pf:sigma_fun}, and 2) the resulting function in \eqref{eq:thm:pf:sigma_fun} should be positive. 
\end{proof}
\begin{remark}
    In Thm. 1 in \citebr{10388467}, the authors {employ a similar inductive process} for a desired form $\hat{\phi}$ of the formula $\phi$, which yields a conservative approach to control for STL. As noted in Thm. 2, Rem. 4 \citebr{10388467}, the lack of equivalence between these two forms stems from: 1) the ``fixing'' of the time of satisfaction for the decomposition of the until operator, 2) when multiple always operators are combined through a disjunction, the aforementioned method enforces two or more \textbf{separate} disjunctions, 3) when encountering disjunctions, the method assumes that the disjunction only appears in the root node of the tree and hence is forced to choose a specific path on the STLT (i.e., only one out of the disjuncted formulae is enforced).
    The above limit the equivalence relationship only on formulae with the desired form in \citebr{10388467}. Otherwise, Thm. 1 in \citebr{10388467} can be used for general formulae (not necessarily in desired form). 
    \par 
    In contrast, our approach does not rely on desired forms and hence is not an under-approximation.
    More specifically, for item 1) when decomposing the until operator, its satisfaction time instance is left ``free'', owing to our parametrization of the CBF-STL operator,
    while for items 2), 3) the our technical proofs do not assume the form of the formula for disjunctions; the corresponding $\max$ operation on invariant sets enables not choosing a specific path of the STLT.
\end{remark}

\subsection{Online Control for STL Satisfaction}
Eq. \eqref{eq:thm:pf:sigma_fun:2} yields a necessary and sufficient condition through the existence of the parameter vector $\hat{\boldsymbol{\tau}}\in\fundef{\mathbb{R}}{\Theta_{\mathcal{L}}}$. Task scheduling and planning boils down to finding such a (possibly discontinuous) function. Therefore, to ensure STL formula satisfaction, two aspects should be tackled: 1) maintaining positivity of $\sigma$ \eqref{eq:thm:pf:sigma_fun:2}, and 2) designing the evolution of $\hat{\boldsymbol{\tau}}$. {This approach is in line with, and generalizes, an existing line of work on STL satisfaction through forward set invariance \citebr{10.7551/mitpress/15136.001.0001}.}
% \subsubsection{Enforcing Conjunctions/Disjunctions}
\par 
To synthesize control inputs, we need to enforce positivity of the non-smooth function $\sigma$,
which can be achieved through the combinatorial CBF framework proposed in Sec. IV in \citebr{ong2025combinatorialcontrolbarrierfunctions}. This yields the conditions (arguments are suppressed for brevity):
\(
        \dot{\mathbf{V}}_k
        >
        -\kappa
        \left(
            {\mathbf{V}}_k
            + 
            \left|
                {\mathbf{V}}_k - \sigma
            \right|
        \right), \forall k \in\{1,\cdots,K\}
\)
where $\kappa \in \fundef{\mathbb{R}}{\mathbb{R}}$ is a class $\mathcal{K}$ function. Then, per Thm. 1 in \citebr{ong2025combinatorialcontrolbarrierfunctions} trajectories that satisfy this inequality are forward time-invariant w.r.t. the zero super-level set of $\sigma$, which is a necessary and sufficient condition for STL formula satisfaction per Thm. \ref{thm:formula:satisfaction}. The main advantage of using combinatorial CBFs as opposed to treating $\sigma$ as a single, non-smooth CBF rests on provable continuity of resulting CBFQP controllers (see \citebr{ong2025combinatorialcontrolbarrierfunctions}). 

\subsubsection{Adaptive Parameter Control}
Thm. \ref{thm:formula:satisfaction}'s statement involves the  existence of time-varying parameters $\boldsymbol\tau \in \fundef{\mathbb{R}}{\Theta_1 \times \cdots \Theta_K}$. To design such a function, we employ an adaptive-control approach. 
We remind the reader that we only need to design an adaptive law for the independent parameters $\hat{\boldsymbol{\tau}} \in \Theta_{\mathcal{L}}$, which satisfy $\boldsymbol{\tau} = \hat{\mathcal{A}}^{\top}\hat{\boldsymbol{\tau}}$.
Consider the virtual model:
\begin{equation}\label{eq:dynamics:virtual:tau}
    \dot{\hat{\boldsymbol{\tau}}}= \omega, \ \hat{\boldsymbol{\tau}}(0)  \in  \Theta_{\mathcal{L}},
\end{equation}
where $\omega \in \fundef{\mathbb{R}}{\mathbb{R}^{\sum_{l \in \mathcal{L}}l}}$ denotes a virtual input. In order to design the latter, we enforce two conditions: 1) restricting $\hat{\boldsymbol{\tau}}$ within its domain $\Theta_{\mathcal{L}}$, and 2) forward invariance of the zero super-level set of $\sigma$. First, note that $\Theta_{\mathcal{L}}$ is in the form $\hat{\boldsymbol{\tau}}^{\textrm{lb}} \leq \hat{\boldsymbol{\tau}}\leq \hat{\boldsymbol{\tau}}^{\textrm{ub}}$, where $\hat{\boldsymbol{\tau}}^{\textrm{lb}}, \hat{\boldsymbol{\tau}}^{\textrm{ub}} \in \Theta_{\mathcal{L}}$ and the inequalities are interpreted element-wise. Then, $\hat{\boldsymbol{\tau}} \in \Theta_{\mathcal{L}}$ can be enforced through the {following} CBFs $\hat{\sigma}_i\in\fundef{\Theta_{\hat{l}_i}}{\mathbb{R}},i \in \{1,\cdots,\hat{L} \}$, for the virtual system \eqref{eq:dynamics:virtual:tau}:
\[
    \hat{\sigma}_{i}(\hat{\boldsymbol{\tau}}_i) =
    -\left( 
        \hat{\boldsymbol{\tau}}_i - \hat{\boldsymbol{\tau}}^{\textrm{lb}}_i
    \right)
    \left( 
         \hat{\boldsymbol{\tau}}_i - \hat{\boldsymbol{\tau}}^{\textrm{ub}}_i
    \right),\ i \in \{1,\cdots,\hat{L} \},
\]
where $\hat{\boldsymbol{\tau}}_i,\hat{\boldsymbol{\tau}}^{\textrm{lb}}_i,\hat{\boldsymbol{\tau}}^{\textrm{ub}}_i \in \mathbb{R}, i \in \{1,\cdots,\hat{L} \}$ denote the elements of the corresponding vectors. Positivity of $\hat{\sigma}_i, \forall  i \in \{1,\cdots,\hat{L} \}$ ensures that the parameters remain within $\Theta_{\mathcal{L}}$.

\subsubsection{Controller Design}
Putting the above together, consider again $K$ simple predicates and the corresponding functions $\sigma, \hat{\sigma}_i, i \in \{1,\cdots,\hat{L}\}$, which result from the nesting rules and which yield the functions $\mathbf{V}_k,k\in\{1,\cdots,K\}$. We get (the dependence on the arguments of $\mathbf{V}_k$ is suppressed for readability):
\(
        \frac{\mathrm{d}{\mathbf{V}}_k}{\mathrm{d}t}
        =
        \frac{\partial{\mathbf{V}}_k}{\partial t} +
        \frac{\partial{\mathbf{V}}_k^\top }{\partial x}f(x,u) + 
        \frac{\partial{\mathbf{V}}_k^\top }{\partial \boldsymbol{\tau}}\omega,
        \forall k \in \{1,\cdots,K\}.
\)
Hence, the conditions for ensuring forward invariance of $\sigma, \sigma_i, i \in \{1,\cdots,\hat{L}\}$ are:
\begin{subequations}\label{eq:constraints:final}
    \begin{equation}
        \begin{split}
            \begin{bmatrix}
                \frac{\mathrm{d}{\mathbf{V}}_1}{\mathrm{d}t}
                \\
                \vdots
                \\
                \frac{\mathrm{d}{\mathbf{V}}_K}{\mathrm{d}t}
            \end{bmatrix}
            >
            \begin{bmatrix}
                -\kappa
                \left(
                    {\mathbf{V}}_1
                    + 
                    \left|
                        {\mathbf{V}}_1 - \sigma(x,t,\hat{\mathcal{A}}^{\top}\hat{\boldsymbol{\tau}})
                    \right|
                \right)
                \\
                \vdots
                \\
                -\kappa
                \left(
                    {\mathbf{V}}_K
                    + 
                    \left|
                        {\mathbf{V}}_K - \sigma(x,t,\hat{\mathcal{A}}^{\top}\hat{\boldsymbol{\tau}})
                    \right|
                \right)
            \end{bmatrix},
        \end{split}
    \end{equation}
    \begin{equation}
        -\begin{bmatrix} 
            \left( 
                2\hat{\boldsymbol{\tau}}_1 - \hat{\boldsymbol{\tau}}^{\textrm{lb}}_1
                 - \hat{\boldsymbol{\tau}}^{\textrm{ub}}_1
            \right)^\top
            \\
            \vdots
            \\
            \left( 
                2\hat{\boldsymbol{\tau}}_{\hat{L}} - \hat{\boldsymbol{\tau}}^{\textrm{lb}}_{\hat{L}}
                 - \hat{\boldsymbol{\tau}}^{\textrm{ub}}_{\hat{L}}
            \right)^\top
        \end{bmatrix}
        \omega 
        \geq 
        \begin{bmatrix}
            -\hat{\kappa}_1
            \left( 
                \hat{\sigma}_{1}(\hat{\boldsymbol{\tau}}_1)
            \right)
            \\
            \vdots
            \\
            -\hat{\kappa}_{\hat{L}}
            \left( 
                \hat{\sigma}_{\hat{L}}(\hat{\boldsymbol{\tau}}_{\hat{L}})
            \right)
        \end{bmatrix},
    \end{equation}
\end{subequations}
where $\hat{\kappa}_i,\forall i\in\{1,\cdots,\hat{L}\}$ are class $\mathcal{K}$ functions\footnote{Note that the value functions can be shown to be mutually compatible CBFs (as discussed in\citebr{10388467}) if the STL formula is logically consistent and feasible under the input constraints, therefore the choice of $\kappa \in \mathcal{K}$ in that case is trivial. Furthermore, each variable of the virtual system \eqref{eq:dynamics:virtual:tau} forms an independent, fully actuated dynamical system with a \textbf{single} associated CBF constraint. Therefore, the choice of class $\mathcal{K}$ functions $\kappa_i,\forall i\in\{1,\cdots,\hat{L}\}$ is also trivial.}. Combining Eqs. \eqref{eq:dynamics}, \eqref{eq:dynamics:virtual:tau} yields the following enhanced state-space model:
\begin{equation}\label{eq:dynamics:state_enhanced}
    \begin{split}
       \mathcal{X} \times\Theta_{\mathcal{L}}
       \ni 
       \dot{z} = 
        \frac{\textrm{d}}{\textrm{d}t}
        \begin{bmatrix}
            x \\ \hat{\boldsymbol{\tau}}
        \end{bmatrix}
        =
        \begin{bmatrix}
            f(x,u)
            \\
            \omega 
        \end{bmatrix},
    \end{split}
\end{equation}
with $ z(0) = \left( x(0), \hat{\boldsymbol{\tau}}(0)\right) \in \mathcal{X} \times\Theta_{\mathcal{L}}$.
The constraints in Eqs. \eqref{eq:constraints:final} can be enforced via an optimization-based controller:
\begin{equation}\label{eq:cbf:controller}
    \begin{gathered}
        \left\{ u^\star(x,t,\hat{\boldsymbol{\tau}}), \omega^\star(x,t,\hat{\boldsymbol{\tau}}) \right\}  =
        \\
        \underset{u\in\mathcal{U}, \omega \in \mathbb{R}^{\sum_{l \in \mathcal{L}}l}}{\arg\min}
        \left\{ 
            \delta \|u - u_{\textrm{ref}}\|^2 + 
            (1 - \delta) \|\omega - \omega_{\textrm{ref}}\|^2
        \right\},
        \\ 
        \textrm{s.t.: } C(t,x,\hat{\boldsymbol{\tau}},u,\omega) \leq 0,
    \end{gathered}
\end{equation}
where $C(t,x,\hat{\boldsymbol{\tau}},u,\omega) \leq 0$ encodes Eq. \eqref{eq:constraints:final}\footnote{While the first inequality in \eqref{eq:constraints:final} is strict, it can be turned into a non-strict inequality by introducing a slack variable to the optimization problem \eqref{eq:cbf:controller}.} and $\delta \in (0,1)$ is a tuning parameter that weighs the two optimization objectives. Furthermore, the reference inputs $u_{\textrm{ref}} \in \mathbb{R}^m, \omega_{\textrm{ref}}\in \mathbb{R}^{\sum_{l \in \mathcal{L}}l}$ may encode some performance objectives. Note that in case the dynamics \eqref{eq:dynamics} are affine w.r.t. the input, then \eqref{eq:cbf:controller} becomes a QP. 
\par 
A crucial aspect of the proposed controller \eqref{eq:cbf:controller} rests on (in)feasibility of the optimization problem, i.e., (non-)emptiness of the set $\setdef{u\in\mathcal{U}, \omega \in \mathbb{R}^{\sum_{l \in \mathcal{L}}l}}{ C(t,x,\hat{\boldsymbol{\tau}},u,\omega) \leq 0}$. While \eqref{eq:cbf:controller} ensures forward invariance of $\{\sigma \geq 0\}$, this assumes that the set is non-empty, which depends on several aspects, such as logical consistency of the formula and the evolution of the parameter values determined by $\omega_{\textrm{ref}}$. We will investigate the feasibility aspect in future works.   

\begin{remark}
    Through the enhanced state in \eqref{eq:cbf:controller}, the timings of the STL formula satisfaction can be designed through control-theoretic principles. For instance, an optimal control problem can be formulated over the enhanced state and solved to achieve a balance between control performance and STL formula satisfaction. At the same time, owing to necessity in Thm. \ref{thm:formula:satisfaction}, (non)emptiness of the constraint set $\{C \leq 0\}$ can be used to determine logical (in)consistency of formulae and which satisfaction instances are (in)feasible under the dynamics and control input constraints. These aspects are linked to feasibility of the optimization problem \eqref{eq:cbf:controller} and will be examined in future works.   
\end{remark}

{
\section{Methodological Example}
Prior to presenting simulations, we provide an example of working through our method for the nested fragment
\[
    \phi = G_{[0,15]} \left(  F_{[0,15]} \left( \psi_1 U_{[15,20]}\psi_2  \right)\right),
\]
with 
\(
\psi_1 = F_{[0,8]} \left( \mu_1 U_{[0,2]} \left( F_{[1,2]} \mu_2 \right)\right)
\),
\(
\psi_2 = F_{[5,20]}  G_{[2,3]} \mu_3. 
\)
We have:
\begin{equation}\notag 
    \begin{split}
        \psi_1 
        &=  F_{[0,8]} \left( \mu_1 U_{[0,2]} \left( F_{[1,2]} \mu_2 \right)\right)
        \\
         &= 
         F_{[0,8]} 
         \left[ 
             \left( 
                G_{[0,\tau_2]}\mu_1 
            \right)
            \wedge
            \left(
                F_{[\tau_2,\tau_2]}
                \left( 
                    F_{[1+\tau_1,1+\tau_1]} \mu_2 
                \right)
            \right)
         \right],
         \\
         &\qquad \forall \tau_1 \in \left[0,1\right], \tau_2 \in\left[0,2\right]
         \overset{\textrm{Prop. } \ref{thm:nesting:eventually}}{\Longleftrightarrow}
         \\
         \psi_1 &=
         F_{[0,8]} 
         \left[ 
             \left( 
                G_{[0,\tau_2]}\mu_1 
            \right)
            \wedge
            \left(
                F_{[1+\tau_1+\tau_2,1+\tau_1+\tau_2]} \mu_2 
            \right)
         \right]\\
         &\qquad \forall \tau_1 \in \left[0,1\right], \tau_2 \in\left[0,2\right]
         \overset{\textrm{Thm. }\ref{thm:conjunction:event}}{\Longleftrightarrow}\\
         \psi_1 &= 
         \left( 
                F_{\left[ \tau_3, \tau_3\right]} G_{[0,\tau_2]}\mu_1 
            \right)
            \wedge
            \left(
                F_{\left[ \tau_3, \tau_3\right]}
                F_{[1+\tau_1+\tau_2,1+\tau_1+\tau_2]} \mu_2 
            \right),\\
            &\qquad \forall \tau_1 \in \left[0,1\right], \tau_2 \in\left[0,2\right],  \tau_3 \in\left[0,8\right]
             \overset{\textrm{Prop. } \ref{thm:nesting:eventually}}{\Longleftrightarrow}
             \\
            \psi_1 &= 
            \left( 
                F_{\left[ \tau_3, \tau_3\right]} G_{[0,\tau_2]}\mu_1 
            \right)
            \wedge
            \left(
                F_{[1+\tau_1+\tau_2+\tau_3,1+\tau_1+\tau_2+\tau_3]} \mu_2 
            \right)\\
            &\triangleq
            \psi_{1,1} \wedge \psi_{1,2},
             \forall \tau_1 \in \left[0,1\right], \tau_2 \in\left[0,2\right],  \tau_3 \in\left[0,8\right].
    \end{split}
\end{equation}
Similarly:
\(
    \psi_2 = 
    F_{\left[ 5 + \tau_4, 5+\tau_4\right]}G_{\left[ 2,3\right]}\mu_3,  \forall \tau_4 \in \left[ 0,15 \right],
\)
and
\begin{equation}\notag 
    \begin{gathered}
        \psi_1 U_{[15,20]}\psi_2 = 
        \\
        \left( 
            G_{\left[ 0, 15 + \tau_5\right]}\psi_1
        \right)
        \wedge
        \left( 
            F_{\left[ 15 + \tau_5, 15 + \tau_5\right]}\psi_2
        \right),
        \forall \tau_5 \in \left[ 0,5\right].
    \end{gathered}
\end{equation}
Furthermore, 
\begin{equation}\notag 
    \begin{gathered}
        F_{\left[ 15 + \tau_5, 15 + \tau_5\right]}\psi_2 =
        \\
        F_{\left[ 15 + \tau_5, 15 + \tau_5\right]}
        F_{\left[ 5 + \tau_4, 5+\tau_4\right]}G_{\left[ 2,3\right]}\mu_3,  
        \\ \forall \tau_4 \in \left[ 0,15 \right], \tau_5 \in \left[ 0,5 \right]
        \overset{\textrm{Prop. \ref{thm:nesting:eventually}}}{=}
        \\
        F_{\left[ 20 + \tau_4+\tau_5, 20 + \tau_4+\tau_5\right]}G_{\left[ 2,3\right]}\mu_3,  
         \forall \tau_4 \in \left[ 0,15 \right], \tau_5 \in \left[ 0,5 \right],
    \end{gathered}
\end{equation}
and
\begin{equation}\notag 
    \begin{gathered}
        G_{\left[ 0, 15 + \tau_5\right]}\psi_1 = 
        G_{\left[ 0, 15 + \tau_5\right]}
        \left( 
            \psi_{1,1} \wedge \psi_{1,2}
        \right),\\
         \forall \tau_1 \in \left[0,1\right], \tau_2 \in\left[0,2\right],  \tau_3 \in\left[0,8\right],
         \tau_5 \in\left[0,5\right]
         \overset{\textrm{Thm. }\ref{thm:conjunction:alw}}{=}
         \\
        \left( 
            G_{\left[ 0, 15 + \tau_5\right]}\psi_{1,1} 
        \right)
        \wedge
        \left( 
            G_{\left[ 0, 15 + \tau_5\right]}\psi_{1,2}
        \right)
        ,\\
         \forall \tau_1 \in \left[0,1\right], \tau_2 \in\left[0,2\right],  \tau_3 \in\left[0,8\right],\tau_5 \in\left[0,5\right].
    \end{gathered}
\end{equation}
Therefore:
\begin{equation}\notag 
    \begin{gathered}
        \psi_1 U_{[15,20]}\psi_2 = 
        \\
        \left( 
            \left( 
                G_{\left[ 0, 15 + \tau_5\right]}\psi_{1,1} 
            \right)
            \wedge
            \left( 
                G_{\left[ 0, 15 + \tau_5\right]}\psi_{1,2}
            \right)
        \right)
        \wedge
        \\
        \left( 
            F_{\left[ 20 + \tau_4+\tau_5, 20 + \tau_4+\tau_5\right]}G_{\left[ 2,3\right]}\mu_3
        \right),  
        \\
         \forall \tau_4 \in \left[ 0,15 \right], \tau_5 \in \left[ 0,5 \right]
         =
         \\
        \left( 
            G_{\left[ 0, 15 + \tau_5\right]}F_{\left[ \tau_3, \tau_3\right]} G_{[0,\tau_2]}\mu_1
        \right)
        \wedge
        \\
        \left( 
            G_{\left[ 0, 15 + \tau_5\right]}F_{[1+\tau_1+\tau_2+\tau_3,1+\tau_1+\tau_2+\tau_3]} \mu_2 
        \right)
        \wedge
        \\
        \left( 
            F_{\left[ 20 + \tau_4+\tau_5, 20 + \tau_4+\tau_5\right]}G_{\left[ 2,3\right]}\mu_3
        \right),  
        \\
          \forall \tau_1 \in \left[0,1\right], \tau_2 \in\left[0,2\right],  \tau_3 \in\left[0,8\right], \tau_4 \in \left[ 0,15 \right], \tau_5 \in \left[ 0,5 \right].
    \end{gathered}
\end{equation}
Finally, 
\begin{equation}\notag 
    \begin{gathered}
        \phi = G_{[0,15]} \left(  F_{[0,15]} \left( \psi_1 U_{[15,20]}\psi_2  \right)\right)
        \\
        =
        G_{[0,15]} 
        \left[ F_{[0,15]} 
            \left[
                \left(
                    G_{\left[ 0, 15 + \tau_5\right]}F_{\left[ \tau_3, \tau_3\right]} G_{[0,\tau_2]}\mu_1
                \right)
                \wedge
                \right.\right.
                \\
                \left.\left.
                \left( 
                    G_{\left[ 0, 15 + \tau_5\right]}F_{[1+\tau_1+\tau_2+\tau_3,1+\tau_1+\tau_2+\tau_3]} \mu_2 
                \right)
                \wedge
                \right.\right.
                \\
                \left.\left.
                \left( 
                    F_{\left[ 20 + \tau_4+\tau_5, 20 + \tau_4+\tau_5\right]}G_{\left[ 2,3\right]}\mu_3
                \right)
            \right]
        \right],
        \\
      \forall \tau_1 \in \left[0,1\right], \tau_2 \in\left[0,2\right],  \tau_3 \in\left[0,8\right], \tau_4 \in \left[ 0,15 \right], \tau_5 \in \left[ 0,5 \right]
      \\
      \overset{\textrm{Thms. } \ref{thm:conjunction:event}, \ref{thm:conjunction:alw}}{=}
        \left(
           G_{[0,15]} F_{[\tau_6,\tau_6]}  G_{\left[ 0, 15 + \tau_5\right]}F_{\left[ \tau_3, \tau_3\right]} G_{[0,\tau_2]}\mu_1
        \right)
        \wedge
        \\
        \left( 
             G_{[0,15]} F_{[\tau_6,\tau_6]}G_{\left[ 0, 15 + \tau_5\right]}F_{[1+\tau_1+\tau_2+\tau_3,1+\tau_1+\tau_2+\tau_3]} \mu_2 
        \right)
        \wedge
        \\
        \left( 
             G_{[0,15]} F_{[\tau_6,\tau_6]}F_{\left[ 20 + \tau_4+\tau_5, 20 + \tau_4+\tau_5\right]}G_{\left[ 2,3\right]}\mu_3
        \right),
        \\
      \forall \tau_1 \in \left[0,1\right], \tau_2 \in\left[0,2\right],  \tau_3 \in\left[0,8\right], 
      \\
      \tau_4 \in \left[ 0,15 \right], \tau_5 \in \left[ 0,5 \right], \tau_6 \in \left[0,15 \right]
      \\
      \overset{\textrm{Prop. \ref{thm:nesting:eventually}}}{=}
    \left(
           G_{[0,15]} F_{[\tau_6,\tau_6]}  G_{\left[ 0, 15 + \tau_5\right]}F_{\left[ \tau_3, \tau_3\right]} G_{[0,\tau_2]}\mu_1
        \right)
        \wedge
        \\
        \left( 
             G_{[0,15]} F_{[\tau_6,\tau_6]}G_{\left[ 0, 15 + \tau_5\right]}F_{[1+\tau_1+\tau_2+\tau_3,1+\tau_1+\tau_2+\tau_3]} \mu_2 
        \right)
        \wedge
        \\
        \left( 
             G_{[0,15]} F_{\left[ 20 + \tau_4+\tau_5 + \tau_6, 20 + \tau_4+\tau_5+ \tau_6\right]}G_{\left[ 2,3\right]}\mu_3
        \right),
        \\
      \forall \tau_1 \in \left[0,1\right], \tau_2 \in\left[0,2\right],  \tau_3 \in\left[0,8\right], 
      \\
      \tau_4 \in \left[ 0,15 \right], \tau_5 \in \left[ 0,5 \right], \tau_6 \in \left[0,15 \right],
    \end{gathered}
\end{equation}
where each of the above fragments in conjunction are of the form $GF(\cdots GF(GF\mu))$ and can be handled through Thm. \ref{thm:nested:nested}. To demonstrate how the repetitive instances are realized, consider the fragment $G_{[0,15]} F_{[\tau_6,\tau_6]}  G_{\left[ 0, 15 + \tau_5\right]}F_{\left[ \tau_3, \tau_3\right]} G_{[0,\tau_2]}\mu_1$. Let $i_3\in\{1,\cdots,I_3\},I_3 \in \mathbb{N}$ and $i_6 \in \{1,\cdots,I_6\},I_6 \in \mathbb{N}$ denote counters of the repetitions of $F_{\left[ \tau_3, \tau_3\right]} G_{[0,\tau_2]}\mu_1$ (inner) and $ F_{[\tau_6,\tau_6]}G_{\left[ 0, 15 + \tau_5\right]}F_{\left[ \tau_3, \tau_3\right]} G_{[0,\tau_2]}\mu_1$ (outer) respectively. 
Further let
$\tau_3^{i_3}\in\left[ 0,8\right]$ denote the value of $\tau_3(t)$ at the instances where the function $h_1$ corresponding to $\mu_1$ becomes equal to zero at time $t_{i_3} \geq t_0$. As long as $\exists \tau_3^{i_3}: \sum_{i_3 = 1}^{I_3}\tau_3^{i_3}\leq 15 + \tau_5$ (per Thm. \ref{thm:nesting:alw:event:alw}, Eq. \eqref{eq:thm:nesting:alw:event:alw:terminal}) the fragment $F_{\left[ \tau_3, \tau_3\right]} G_{[0,\tau_2]}\mu_1$ is repeated. Let $\tau_6^{i_6}$ denote the value of $\tau_6^{i_6}(t)$ at time $t_1$ (for $i_3 = 1$), and assume that the inner task has elapsed, i.e., $\nexists \tau_3^{i_3}: \sum_{i_3 = 1}^{I_3}\tau_3^{i_3}\leq 15 + \tau_5$, if $\exists \tau_6^{i_6}: \sum_{i_6 = 1}^{I_6}\tau_6^{i_6}\leq 15$, the counter $i_3$ is reset to $1$ and the whole inner fragment is repeated.   
Any additional nestings beyond Thm. \ref{thm:nested:nested} are handled similarly by induction. 
}

\section{Simulations}
In this section we demonstrate the applicability of our method for modeling complex STL formulae and computing control laws for satisfying them. The examples were coded in MATLAB 2024b, on a laptop running the Windows 10 operating system and equipped with an Intel Core Ultra 7 165U CPU. To simulate the dynamics \eqref{eq:dynamics:state_enhanced} under the controller \eqref{eq:cbf:controller}, a first-order Euler integration scheme with norm control is employed. 
% To solve \eqref{eq:cbf:controller}, the constrained non-linear optimization function {\fontfamily{qcr}\selectfont fmincon} is employed for non-affine dynamics, while {\fontfamily{qcr}\selectfont quadprog} is employed for affine dynamics. 
Unless explicitly specified, $u_{\textrm{ref}}\in\mathbb{R}^m$ is set equal to zero, while $\omega_{\textrm{ref}} = -k_\omega \hat{\boldsymbol{\tau}}$, for $k_\omega >0$. This choice of stable dynamics for the parameters $\hat{\boldsymbol{\tau}}$ forces them (and thus the satisfaction instances) to decrease over time, if the constraints encoded in \eqref{eq:cbf:controller} allow it, promoting earlier predicate satisfaction when/if possible. Animations depicting the evolution of the trajectories of each system over time, are available in the GitHub repository: \url{https://github.com/KTH-DHSG/Operator-Based-STL}.
% The code that reproduces the simulations, as well as animations depicting the evolution of the trajectories of each system over time, are available in the following GitHub repository: \url{https://github.com/KTH-DHSG/Operator-Based-STL}.
\subsection{Non-Affine Dynamics}\label{sec:sims:subsec:nonaff}
Consider the system:
\begin{equation}\label{eq:dynamics:non-affine}
    \dot{x} = f(x,u) = -\tanh{(x)} + axu^3 + bu,
\end{equation}
with $a,b\in\mathbb{R}$ system parameters, state $x\in\mathbb{R},\ u\in \mathcal{U} = \left[ -0.5,0.5\right]\subset\mathbb{R}$. Then, Eq. \eqref{eq:hjb} yields:
\begin{equation}\notag 
    \begin{split}
        \frac{\partial V_h}{\partial t} +  \underset{u\in\mathcal{U}}{\max} \left\{\frac{\partial V_h}{\partial x}\left( -\tanh{(x)} + axu^3 + bu \right) \right\} &= 0
        \Leftrightarrow
        \\
        \frac{\partial V_h}{\partial t} - \frac{\partial V_h}{\partial x}\tanh{(x)} + \underset{u\in\mathcal{U}}{\max} \left\{\frac{\partial V_h}{\partial x}\left( axu^3 + bu \right) \right\} &= 0.
    \end{split}
\end{equation}
The maximizer of $\frac{\partial V_h}{\partial x}\left( axu^3 + bu \right)$ only depends on the sign of $\frac{\partial V_h}{\partial x}$. Since $axu^3 + bu$ is continuous and defined over a continuous compact interval (w.r.t. $u\in\mathbb{R}$), the maximizer, denoted by $u^\star \in \fundef{\mathbb{R}}{\mathcal{U}}$ attains values on the interval's boundary or at any interior critical points:
\(
    u^\star (x) = 
    \begin{cases}
        \pm 0.5 \\
        \pm u_{\textrm{crit}}\triangleq \pm \sqrt{\frac{-b}{3ax}}
    \end{cases}
\),
where for the second branch to be well-defined, $\frac{-b}{3ax} \geq 0$ and the corresponding maximizer(s) should be admissible: $ \pm \sqrt{\frac{-b}{3ax}} \in \mathcal{U}$ (otherwise only the boundary values are valid possible maximizers). Owing to convexity of $\setdef{x\in\mathbb{R}}{h(x)\geq 0}$ (equivalently concavity of $h$ w.r.t. $x$) and the fact that \eqref{eq:dynamics:non-affine} is one-dimensional, the sign of $\frac{\partial V_h}{\partial x}$ is the same as the sign of $\frac{\partial h}{\partial x}$. Therefore,  
\(
       u^\star(x) 
        = \underset{u^i \in \mathcal{U}^\star}{\arg\max}
            \left\{
                \frac{\partial h}{\partial x}\left( ax(u^i)^3 + bu^i \right)
            \right\},
\)
with 
\begin{equation}
    \mathcal{U}^\star
    =
    \begin{cases}
        \{\pm 0.5\}, & \frac{-b}{3ax}  < 0 ,
        \\
        \{\pm 0.5\}, & \frac{-b}{3ax} \geq0, \pm u_{\textrm{crit}}\notin \mathcal{U}
        \\
        \left\{\pm 0.5, u_{\textrm{crit}} \right\} , & \frac{-b}{3ax} \geq0,   u_{\textrm{crit}}\in\mathcal{U}, -u_{\textrm{crit}}\notin\mathcal{U}
        \vspace{0.1cm}
        \\
        \left\{\pm 0.5, \pm u_{\textrm{crit}} \right\} , & \frac{-b}{3ax} \geq0,   \pm u_{\textrm{crit}}\in\mathcal{U},
    \end{cases}.
    \notag 
\end{equation}
In order to compute the value function, note that according to \eqref{eq:hjb} the total derivative along trajectories $x^\star\in\fundef{\mathbb{R}}{\mathbb{R}}$ under the optimal input $u^\star$, is:
\begin{equation}\notag 
    \begin{split}
        &\frac{\textrm{d}V_h}{\textrm{d}t}
        =
        \frac{\partial V_h}{\partial t} +\frac{\partial V_h}{\partial x}\ f\left(x^\star(t),u^\star(x^\star(t))\right)
        =0
        \Leftrightarrow
        \\
        \int_{-t}^{0} &\frac{\textrm{d}V_h}{\textrm{d}t} \textrm{d}\tau 
        =
        V_h(x^\star(-t),-t) - V_h(x^\star(0),0) = 0
        \overset{\eqref{eq:hjb}}{\Longleftrightarrow} \\
        &V_h(x^\star(-t),-t) 
        = h(x^\star(0),0).
    \end{split}
\end{equation}
Therefore, for any $x\in\mathbb{R},t\in\mathbb{R}_{< 0}$ system \eqref{eq:dynamics:non-affine} is integrated from some $t<0$ under the optimal input until $t=0$ and the terminal value $h(x^\star(0),0)$ corresponds to the value function. A scaled version (for illustration purposes across a large range of values) of the value function for \eqref{eq:dynamics:non-affine} is depicted in Fig. \ref{fig:hjb_solution}.

\subsubsection{Case 1 - Simply Repeated Formula}\label{sec:sims:subsec:nonaff:case1}
Let $\mu_1,\mu_2$ denote simple predicates with the predicate functions $h_1,h_2:\fundef{\mathcal{X}}{\mathbb{R}}$ given by $h_1(x) = 10\left( 0.25^2 - (x - 1.0)^2 \right)$ and $h_2(x) = 10\left( 0.25^2 - (x - 1.75)^2 \right)$. Consider the nested formula
\(
    \phi = G_{[10]} F_{[0,4]} \left( \mu_1 U_{[1,2]} \left( F_{[1,2]} \mu_2  \right) \right),
\)
which, according to our theoretical results, boils down to a repeated execution of $F_{[0,4]} \left( \mu_1 U_{[1,2]} \left( F_{[1,2]} \mu_2  \right)\right)$ due to the outer nesting of $G_{[0,25]}$. The trajectory of the state and the parameters are depicted in Fig. \ref{fig:nonaffine:res1}.

\subsubsection{Case 2 - Simply Repeated Formula with Disjunction}
Consider the simple predicates $\mu_1,\mu_2,\mu_3$ with the predicate functions $h_1,h_2,h_3:\fundef{\mathcal{X}}{\mathbb{R}}$, where $h_1,h_2$ are the same as previously and $h_3 = 10\left( 0.2^2 - (x - 1.5)^2 \right)$. We consider the formula 
\(
   \phi =  G_{[0,10]} F_{[0,4]} \left( \mu_1 U_{[1,2]} \left( F_{[1,2]} \left( \mu_2 \bigvee G_{[0,1]}\mu_3\right)  \right) \right),
\)
In order to demonstrate the ability of the proposed method to satisfy disjunctions, we present three choices for $u_{\textrm{ref}}$ in Fig. \ref{fig:nonaffine:res2}: a) for $u_{\textrm{ref}} = +1$ in Subfig. \ref{fig:nonaffine:res2:a}, b) $u_{\textrm{ref}} = -1$ in Subfig. \ref{fig:nonaffine:res2:b} and c) $u_{\textrm{ref}}(t) = \sin(0.5t)$ in Subfig. \ref{fig:nonaffine:res2:c}. As a result of the design of $u_{\textrm{ref}}$, choice a) results in satisfaction of the predicate $\mu_2$ for all repetitions, choice b) results in satisfaction of the fragment $G_{[0,1]}\mu_3$ for all repetitions and choice c) results in $\mu_2$ being satisfied for the first repetition while $G_{[0,1]}\mu_3$ is satisfied for the rest of them. This is better illustrated in the animations provided in the GitHub repository: \url{https://github.com/KTH-DHSG/Operator-Based-STL}.
% \url{https://github.com/KTH-DHSG/Operator-Based-STL}. 

\begin{figure}[ht!]
    \centering
    \includegraphics[trim={0cm 0.0cm 0cm 0.0cm},clip,width= 1.0\linewidth]{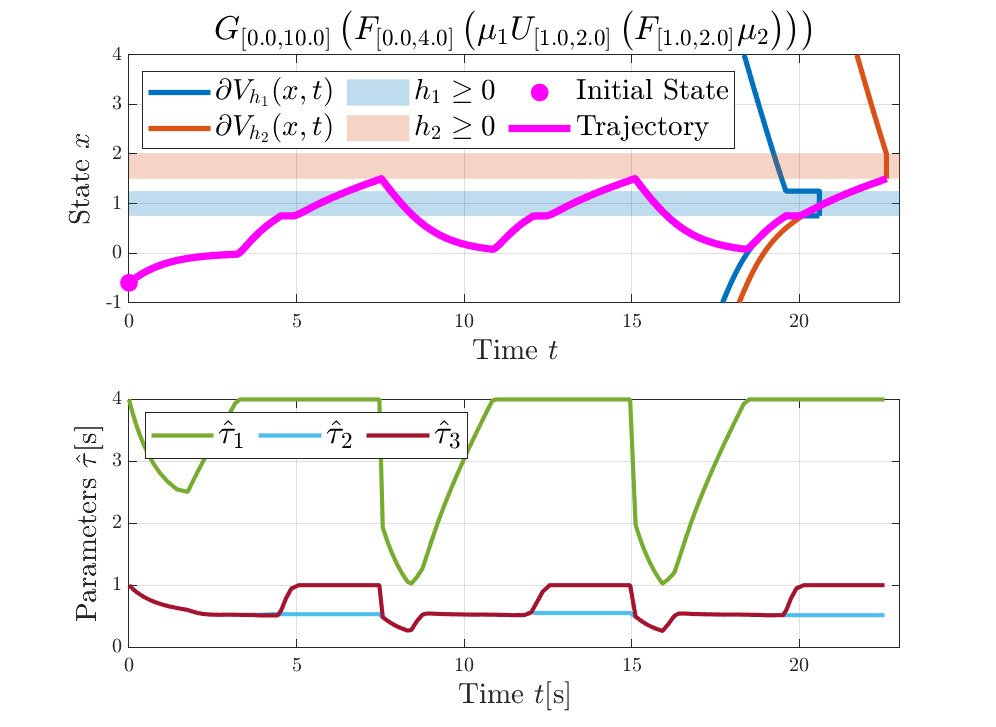}
    \caption{Evolution of the state (top figure) and the parameter values (bottom figure) for the formula $G_{[0,25]}\left( F_{[0,4]} \left( \mu_1 U_{[1,2]} \left( F_{[1,2]} \mu_2  \right) \right) \right)$. The zero super-level sets of the predicate functions are depicted through the blue and red-shaded regions. The parameters $\hat{\tau}_1, \hat{\tau}_3$ correspond to the satisfaction instances of the eventually operators respectively, while $\hat{\tau}_2$ corresponds to the free parameter of the until operator.}
    \label{fig:nonaffine:res1}
\end{figure}
\begin{figure}[ht!]
\centering
    \begin{subfigure}{1\linewidth}
    \centering
    \includegraphics[trim={1.cm 0.0cm 2cm 1.1cm},clip,width=0.95\linewidth]{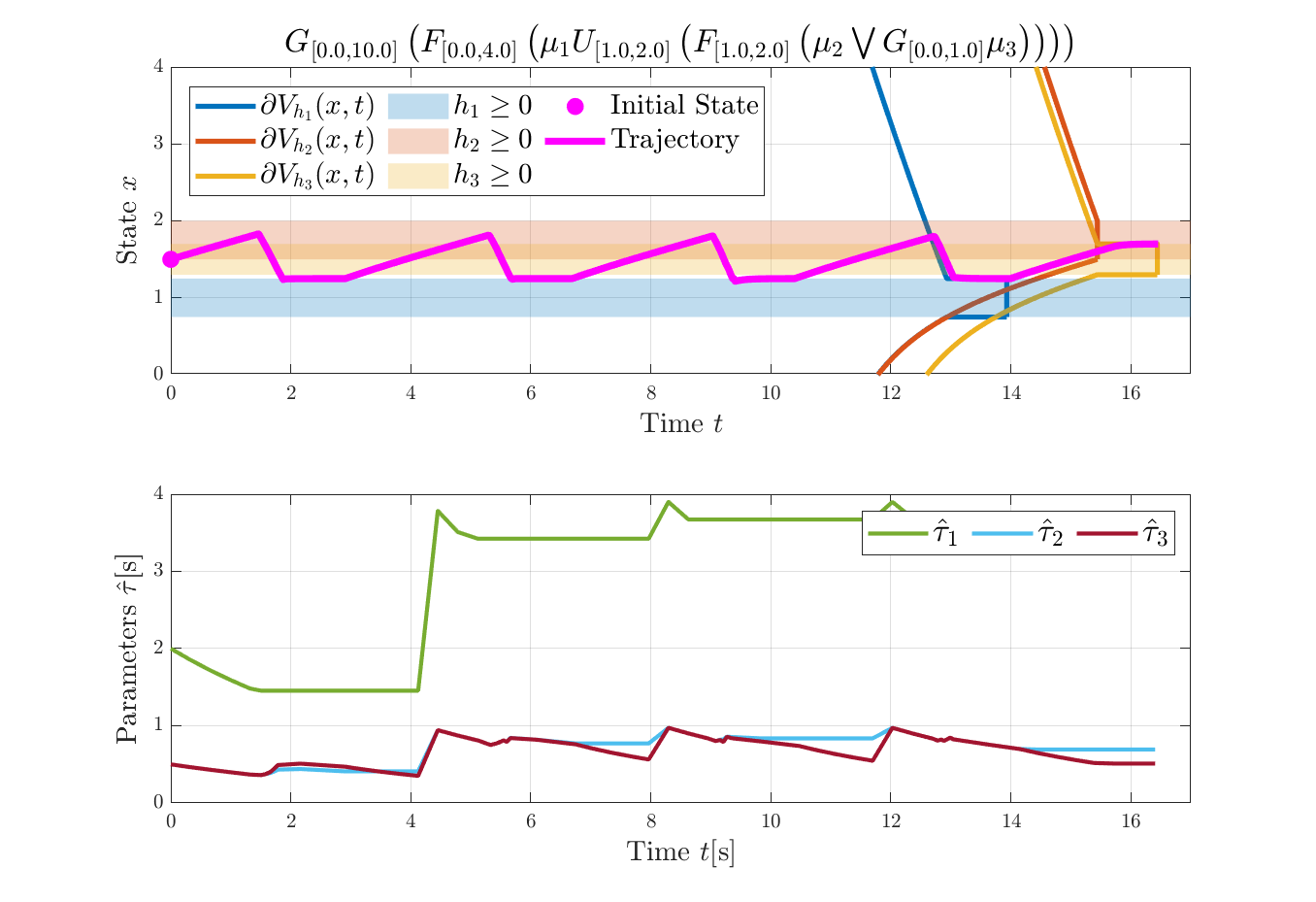}
    \caption{$u_{\textrm{ref}} = +1$.}
    \label{fig:nonaffine:res2:a}
    \end{subfigure}
     \\
    \begin{subfigure}{1\linewidth}
    \centering
    \includegraphics[trim={1.cm 0.0cm 1.5cm 1.1cm},clip,width=0.95\linewidth]{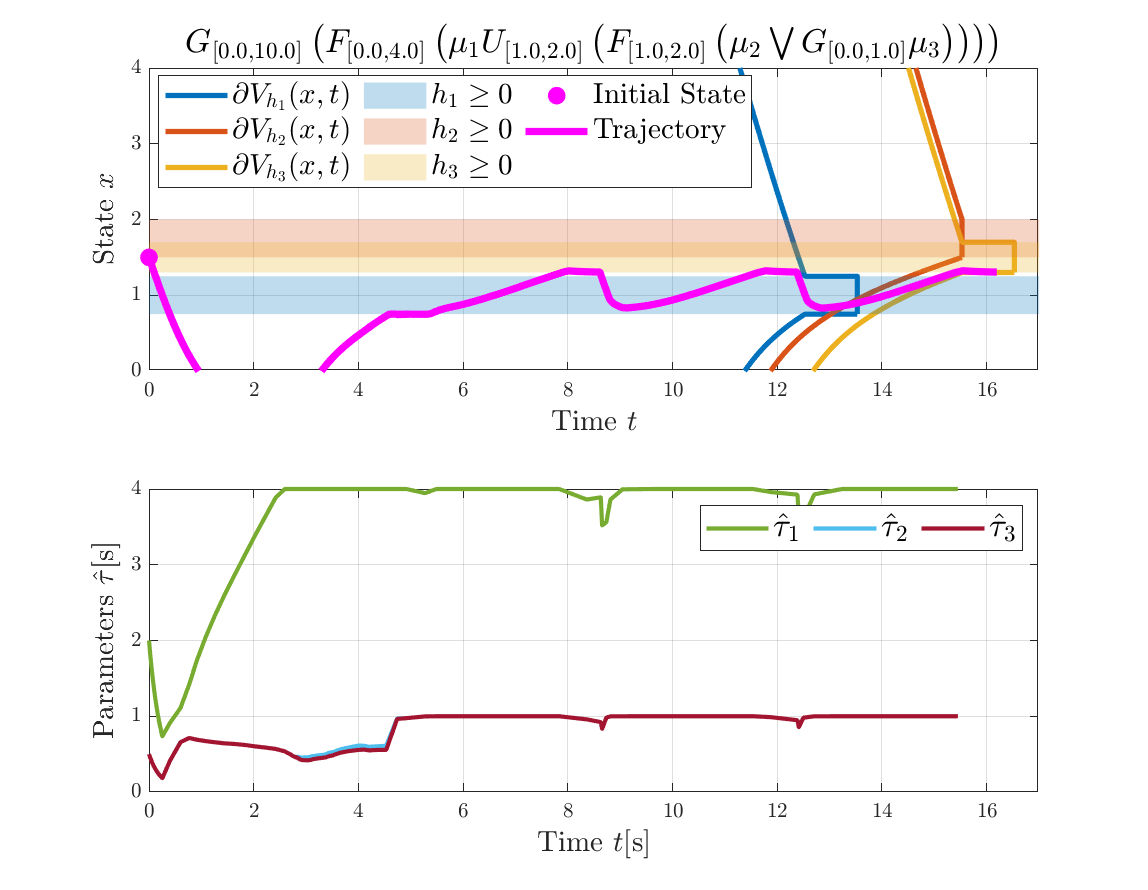}
    \caption{$u_{\textrm{ref}} = -1$.}
    \label{fig:nonaffine:res2:b}
    \end{subfigure}
    \\
    \begin{subfigure}{1\linewidth}
    \centering
    \includegraphics[trim={1.25cm 0.0cm 1.65cm 1.1cm},clip,width=0.95\linewidth]{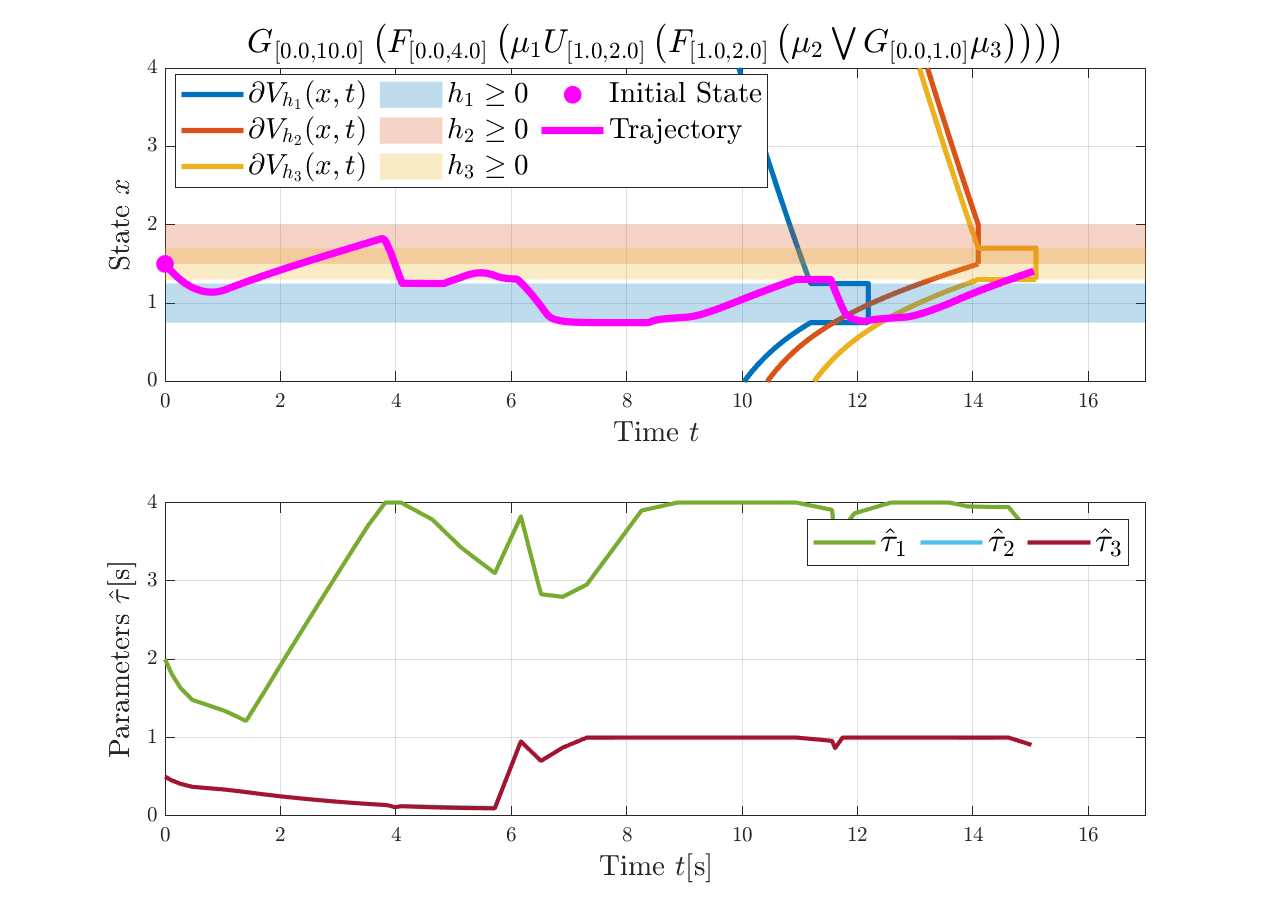}
    \caption{$u_{\textrm{ref}}(t) = \sin(0.5t)$.}
    \label{fig:nonaffine:res2:c}
    \end{subfigure}
\caption{Trajectories for the formula $G_{[0,10]}\left( F_{[0,4]} \left( \mu_1 U_{[1,2]} \left( F_{[1,2]} \left( \mu_2 \bigvee G_{[0,1]}\mu_3\right)  \right) \right) \right)$. The zero super-level sets of the predicate functions are depicted through the blue, red and yellow-shaded regions. The parameters $\hat{\tau}_1, \hat{\tau}_3$ correspond to the satisfaction instances of the eventually operators $F_{[0,4]}$ and $F_{[1,2]}$ respectively, while $\hat{\tau}_2$ corresponds to the until operator.}
\label{fig:nonaffine:res2}
\end{figure}

\subsection{Input-Affine Dynamics}\label{sec:sims:subsec:aff}
Consider the input-affine system:
\(
    \dot{x} = f(x,u) = -0.1\tanh{(x)} + a(0.5x + 1.0),
\)
with $a\in\mathbb{R}$ denoting a system parameter, state $x\in\mathbb{R}$, input $\ u\in \mathcal{U} = \left[ -0.5,0.5\right]\subset\mathbb{R}$. The value function for all predicate functions is computed in the same way as for the non-affine example. 

\subsubsection{Case 1 - Simply Repeated Formula with Conjunction}\label{sec:sims:subsec:aff:case1}
Let $\mu_1,\mu_2,\mu_3$ denote simple predicates with the predicate functions $h_1,h_2,h_3:\fundef{\mathcal{X}}{\mathbb{R}}$ given by $h_1(x) = 10\left( 0.25^2 - (x - 1.0)^2 \right)$, $h_2(x) = 10\left( 0.25^2 - x^2 \right)$ and $h_3(x) = 10\left( 0.2^2 - (x +0.75)^2 \right)$. Consider the STL fragments:
\(
        \psi_1 = 
         G_{[0,15]} F_{[0,5]} \left( \mu_1 U_{[1,2]} \left( F_{[1,2]}\mu_2  \right) \right),
         \psi_2 = 
         F_{[10,30]}G_{[0,1]}\mu_3,
\)
and the formula:
\(
    \phi = \psi_1 \bigwedge \psi_2,
\)
which boils down to a repeated execution of the fragment $\mu_1 U_{[1,2]} \left( F_{[1,2]}\mu_2  \right)$ while $G_{[0,1]}\mu_3$ should be satisfied eventually within the time window $[10,30]$. The trajectories of \eqref{eq:dynamics:state_enhanced} are depicted in Fig. \ref{fig:affine:res1}. Inspecting the bottom subfigure, the parameter corresponding to the fragment $\psi_2$ increases until its maximum value, which means that satisfaction of $G_{[0,1]}\mu_3$ is postponed in favor of satisfying $\psi_1$. This is not a unique solution, as it depends on the specific choice for the parameters' virtual control law as well as their initial condition. 
\subsubsection{Case 2 - Multiply Repeated Formula}\label{sec:sims:subsec:aff:case2}
Consider the simple predicates/predicate function pairs. We treat the following fragment:
\(
    \phi = G_{[0.0,15.0]} \left(  F_{[0.0,15.0]} \left[ \left(  F_{[0.0,8.0]} \left( \mu_1 U_{[0.0,2.0]} \left( F_{[1.0,2.0]} \mu_2 \right)\right) \right)\right.\right.
\)
\(
\left.\left.U_{[15.0,20.0]}\left( F_{[5.0,20.0]}  G_{[2.0,3.0]} \mu_3 \right)  \right]\right)
\).
Compared to the previous cases, this nesting forces multiple repetitions within the time window of the outer always operator $[0,15]$. A trajectory of the state is depicted in Fig. \ref{fig:affine:res}. 
Note that this formula requires several nestings, highlighting the ability of the proposed method to capture complex nested formulae with more than one repetitions.
\begin{figure}
    \centering
    \includegraphics[trim={1.5cm 0.0cm 1.5cm 0.0cm},clip,width= 1\linewidth]{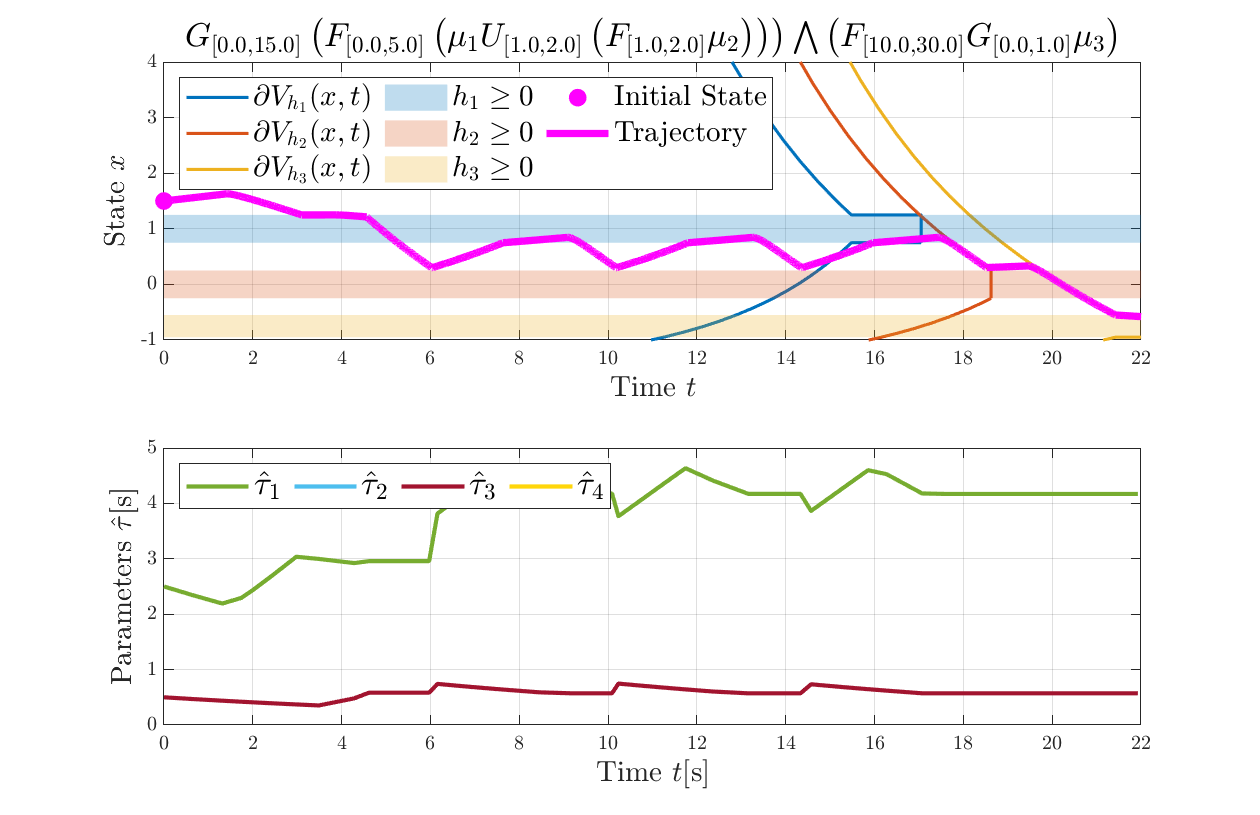}
    \caption{Evolution of the state (top figure) and the parameter values (bottom figure) for the formula $\psi_1 \bigwedge \psi_2$. The parameter $\hat{\tau}_1$ corresponds to the satisfaction instances of  $F_{[10,30]}$.}
    \label{fig:affine:res1}
\end{figure}
\begin{figure*}
\centering
    \includegraphics[trim={2.cm 0.2cm 2cm 0.0cm},clip,width= 1\linewidth]{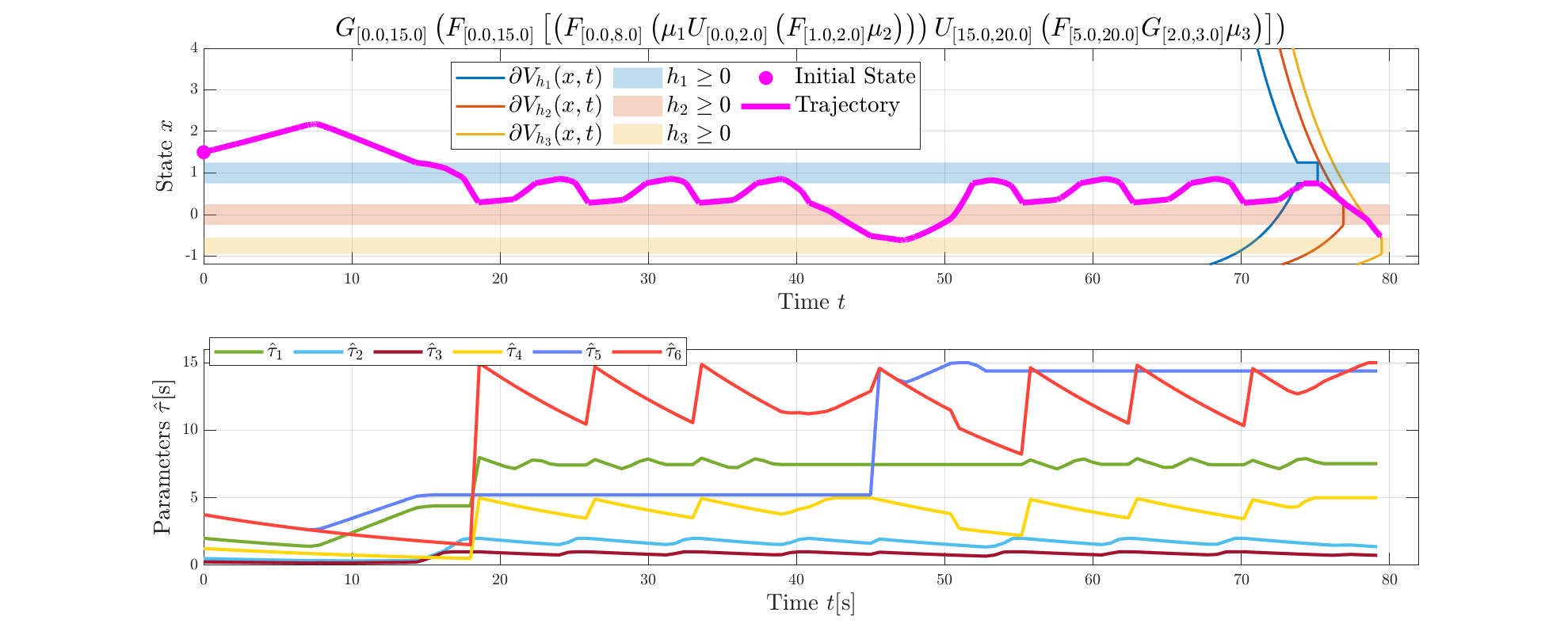}
    \label{fig:affine:res2a}
\caption{State evolution for $G_{[0.0,15.0]} \left(  F_{[0.0,15.0]} \left[ \left(  F_{[0.0,8.0]} \left( \mu_1 U_{[0.0,2.0]} \left( F_{[1.0,2.0]} \mu_2 \right)\right) \right)U_{[15.0,20.0]}\left( F_{[5.0,20.0]}  G_{[2.0,3.0]} \mu_3 \right)  \right]\right)
$ (top), and parameter evolution (bottom). The zero super-level sets of the predicate functions are depicted through the blue, red and yellow-shaded regions.}
\label{fig:affine:res}
\end{figure*}

\subsection{Linear System}
Finally, consider the linear system:
\(
    \dot{x} = 0.1x + u,
\)
with state $x\in\mathbb{R}$, input $\ u\in \mathcal{U} = \left[ -0.5,0.5\right]\subset\mathbb{R}$. The reachability value function is computed similarly to the previous subsections. The same simple predicates/predicate function pairs, with the STL formula:
\(
    \phi =
    \left( 
        G_{[0,15]}F_{[0,5]}
        \left( 
            \mu_1 U_{[1,2]}\left( F_{[1,2]}\mu_2 \right)
        \right)
    \right)
    \bigwedge
    \left( 
        F_{[0,30]}
        G_{[0,1]}\mu_3
    \right).
\)
We compare our method, depicted in Fig. \ref{fig:linear}, against a custom optimization framework based on genetic algorithms with gradient-based local refinement. The objective for the genetic algorithm is to maximize the robustness of the STL formula \citebr{robsem_2} over a discretized trajectory of the system. 
Among many trials and after extensive parameter tuning, the best solution's robustness is $\rho_{\max} = -1.89$ (i.e., the formula is not satisfied). Initially, the same problem was attempted through an MILP approach \citebr{bombara_decision_2016}, however even very rough discretizations of the trajectory yielded problems too large to handle. The same was true for a custom MILP formulation of the problem. This is to be expected, as the employed formula evolves over a large time horizon and thus requires a large number of MILP variables.  
\begin{figure}
    \centering
    \includegraphics[trim={1.25cm 0cm 1.75cm 0.0cm},clip,width=1.0\linewidth]{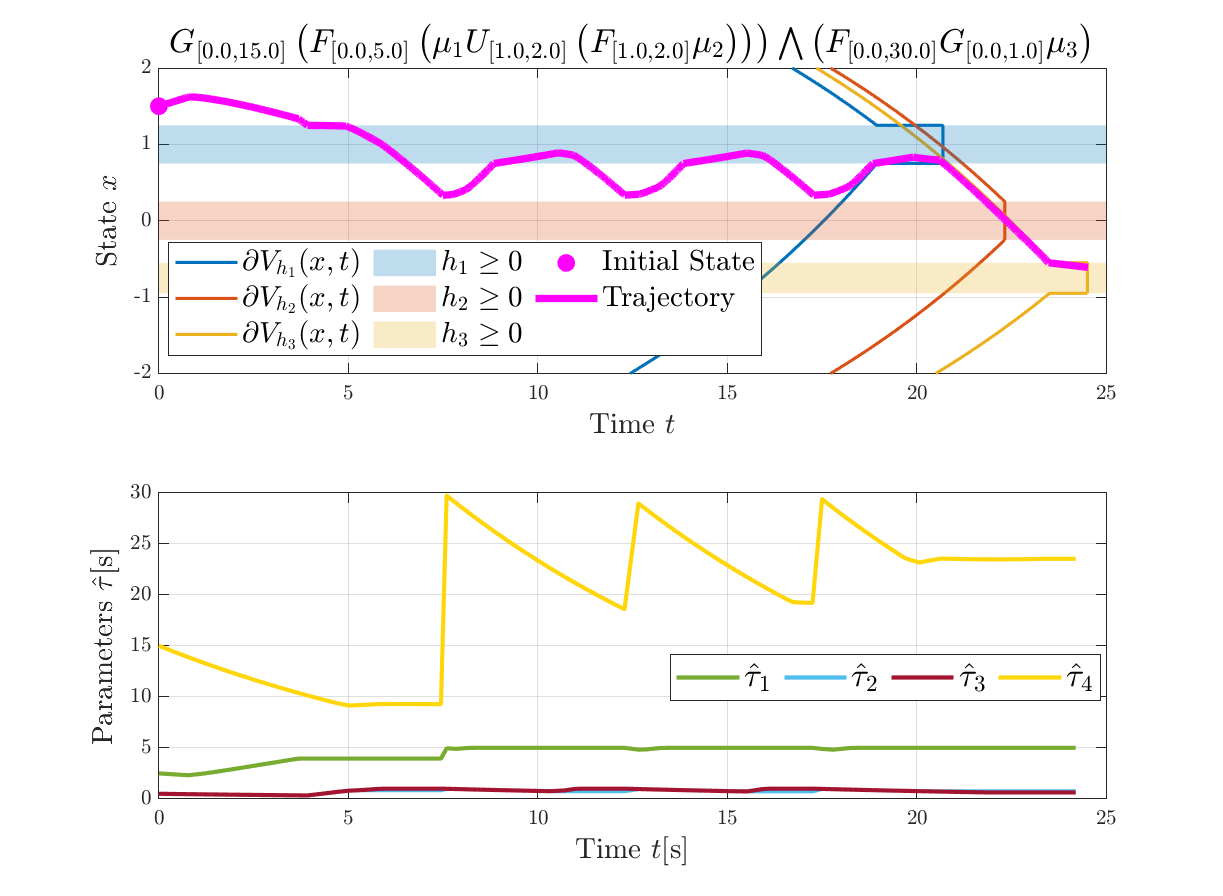}
    \caption{Evolution of the state for our method in the comparative simulation.}
    \label{fig:linear}
\end{figure}

\section{Conclusions \& Future Works}
In this work, we have proposed a novel approach to STL for verification and control synthesis. The method is based on the action of an novel operator, acting on reachability value functions, which effectively provides necessary and sufficient conditions for STL formulae satisfaction. The expressive power of the operator is demonstrated both in theory and simulations, able to capture complex repeated formulae. In future works, we will focus on feasibility analysis on the set of constraints \eqref{eq:constraints:final} associated with the proposed CBF-STL operator as well as applications of the operator in absence of HJB solutions; note how the operator acts on state and time dependent functions, hence can also be applied on CBFs instead of value functions. We also aim at exploring properties of the enhanced state space model \eqref{eq:dynamics:state_enhanced}, as well as employing more sophisticated control methods, e.g., optimal control, for simultaneous planning and control. More specifically, the satisfaction of simple predicates and the resetting of repeated (recursive) fragments are \emph{discrete} events. Therefore, we aim at formalizing and optimizing the CBF-STL operator framework using hybrid systems theory \cite{ROUSSEAS2026101731}.  

% FIX 2: Restored to SageV to match the template's included .bst file
\bibliographystyle{SageH.bst}
\bibliography{bib.bib}

\section*{Appendix}
\subsubsection{Proof of Prop. \ref{prop:operator:sequence}}
\begin{proof}
    First, note that $\forall \tau_1\in\Theta_1,\tau_2\in\Theta_2:\left[ 0, \alpha_2(\tau_2)\right] = \left[ 0, \alpha_1(\tau_1)\right]\cup \left[ \alpha_1(\tau_1), \min \left\{ \alpha_2(\tau_2) , \beta_1(\tau_1) \right\}\right]$.
    For any $t\in [0,\alpha_1]$ {(the arguments of the functions are omitted for clarity)}, select $t' = t + \alpha_{2} - \alpha_{1}$ which implies that $t' \geq t \geq 0$. We get $\forall t\in [0,\alpha_1]: \mathbf{T}_{\alpha_1,\beta_1}^{\Theta_1}V_h(x,t) = V_h(x,t - \alpha_1)$
    % \begin{equation}
    %     \begin{split}
    %         \forall t\in [0,\alpha_1]: \mathbf{T}_{\alpha_1,\beta_1}^{\Theta_1}V_h(x,t) = V_h(x,t - \alpha_1), 
    %     \end{split}
    % \end{equation}
    and% $\forall t\in [0,\alpha_1] \Rightarrow t' \in [\alpha_{2} - \alpha_{1},\alpha_{2}] $: $ \mathbf{T}_{\alpha_2,\beta_2}^{\Theta_2}V_h(x,t') = V_h(x,t'- \alpha_2) = V_h(x,t - \alpha_1)= \mathbf{T}_{\alpha_1,\beta_1}^{\Theta_1}V_h(x,t), \forall x \in \mathcal{X}$.
    \begin{equation}\notag 
        \begin{split}
             \mathbf{T}_{\alpha_2,\beta_2}^{\Theta_2}V_h(x,t') &= V_h(x,t'- \alpha_2) = V_h(x,t - \alpha_1)\\
               &= \mathbf{T}_{\alpha_1,\beta_1}^{\Theta_1}V_h(x,t), \forall x \in \mathcal{X}.
        \end{split} 
    \end{equation}
    Therefore, through Prop. \ref{prop:valu_fun:monotone} and \eqref{eq:operator:action}: 
    \begin{equation}\notag 
        \begin{split}
            \alpha_2 \geq t' \geq t\Leftrightarrow 
            0\geq  t' - \alpha_2 \geq t - \alpha_2  \Leftrightarrow
            \\
            V_h(x,t' - \alpha_2) \leq V_h(x,t - \alpha_2) \Leftrightarrow 
            \\
            \mathbf{T}_{\alpha_1,\beta_1}^{\Theta_1}V_h(x,t) \leq \mathbf{T}_{\alpha_2,\beta_2}^{\Theta_2}V_h(x,t).
        \end{split}
    \end{equation}
    Also, $\forall t \in \left[ \alpha_1(\tau_1), \min \left\{ \alpha_2(\tau_2) , \beta_1(\tau_1) \right\}\right]$:
    $\alpha_1(\tau_1)\leq t \leq  \beta_1(\tau_1) \Rightarrow \mathbf{T}_{\alpha_1,\beta_1}^{\Theta_1}V_h(x,t) = h(x)$,
    and 
    $
    \mathbf{T}_{\alpha_2,\beta_2}^{\Theta_2}V_h(x,t) = V_h(x,t - \alpha_2) \underset{\textrm{Prop.} \ref{prop:valu_fun:monotone}}{ \overset{t - \alpha_2 \leq 0}{\Longleftrightarrow}}
    \mathbf{T}_{\alpha_2,\beta_2}^{\Theta_2}V_h(x,t) \geq V_h(x,0) = h(x) = \mathbf{T}_{\alpha_1,\beta_1}^{\Theta_1}V_h(x,t)
    $.
    Therefore, $\forall t\in\left[ 0, \min \left\{ \alpha_2(\tau_2) , \beta_1(\tau_1) \right\}\right],\forall \tau_1 \in \Theta_1,\tau_2\in\Theta_2$: 
    $\mathbf{T}_{\alpha_2,\beta_2}^{\Theta_2}V_h(x,t) \geq \mathbf{T}_{\alpha_1,\beta_1}^{\Theta_1}V_h(x,t)$, which directly implies: $\mathbf{T}_{\alpha_1 ,\beta_1}^{\Theta_1}V_h(x,t) \geq 0 \Rightarrow \mathbf{T}_{\alpha_{2} ,\beta_{2}}^{\Theta_{2}}V_h(x,t)\geq 0$. 
\end{proof}
\vspace{0.2cm}
\subsubsection{Proof of Prop \ref{prop:always}}
\begin{proof}
    \emph{(Suff.     $\Rightarrow$)} From Def. \ref{def:stl}:
    \begin{equation}\notag 
        (x,t_0) \models G_{\left[\underline{t},\overline{t}\right]}\mu  \Leftrightarrow \forall t' \in \left[t_0+\underline{t}, t_0+\overline{t}\right]: (x,t')\models\mu,
    \end{equation}
    where $(x,t')\models\mu \Leftrightarrow h(x(t')) \geq 0$. W.l.o.g., we choose the initial time $t_0 = 0$. We have to show that:
    \begin{equation}\notag 
        \begin{gathered}
            \forall t \in \left[ 0,\overline{t}\right]:\
            \mathbf{T}_{\underline{t},\overline{t}}^{\emptyset} V_h\left(x\left(t\right),t\right) \geq 0
            \Rightarrow
            (x,0) \models G_{\left[\underline{t},\overline{t}\right]}\mu \Leftrightarrow
            \\
            \forall t \in \left[\underline{t}, \overline{t}\right] :\ h(x(t)) \geq 0.
        \end{gathered}
    \end{equation}
    We have: 
    \(
    \forall t \in \left[ 0,\overline{t}\right]:
    \mathbf{T}_{\underline{t},\overline{t}}^{\emptyset} V_h\left(x\left(t\right),t\right) \geq 0
            \Rightarrow
            \forall t \in \left[ \underline{t},\overline{t}\right]:
            \mathbf{T}_{\underline{t},\overline{t}}^{\emptyset} V_h\left(x\left(t\right),t\right) \geq 0 
            \overset{\eqref{eq:operator:action}}{\Longleftrightarrow}  h(x(t)) \geq 0
    \).
    \newline
    \emph{(Necessity $\Leftarrow$)}
    % From Eq. \eqref{eq:prop:eventualy:toprove},
    For the trajectory $x\in\fundef{\mathbb{R}}{\mathcal{X}}$, which corresponds to a solution to \eqref{eq:dynamics}, consider the input signal $u\in\fundef{\mathbb{R}}{\mathcal{U}}$ that yields the value function $V_h \in \fundef{\mathcal{X}\times\mathbb{R}}{\mathbb{R}}$ \eqref{eq:vfun:def}. Then $\forall t \in \left[0,\underline{t}\right]$, from Prop. \ref{prop:valu_fun:monotone}: 
    \[
    \begin{gathered}
        \mathbf{T}_{\underline{t},\overline{t}}^{\emptyset} V_h(x(t),t)
        \overset{\eqref{eq:operator:action}}{=}
            V_h(x(t),t - \underline{t}) 
            \\
            \overset{\textrm{Prop.} \ref{prop:valu_fun:monotone}}{\geq}  
            V_h(x(\underline{t}),0)
            \overset{\eqref{eq:hjb}}{=}
            h(x(\underline{t}))
            \geq 0.
    \end{gathered}
    \]
    For $t\in \left[\underline{t},\overline{t}\right]$, from \eqref{eq:operator:action}, $\mathbf{T}_{\underline{t},\overline{t}}^{\emptyset} V_h(x(t),t) = h(x(t)) \geq 0$, concluding the proof.
\end{proof}
\subsubsection{Proof of Prop. \ref{prop:eventually}}
\begin{proof}
    \emph{(Sufficiency $\Rightarrow$)} By definition for the eventually operator:
    \(
        (x,t_0) \models F_{\left[\underline{t},\overline{t}\right]}\mu  \Leftrightarrow \exists t' \in \left[t_0+\underline{t}, t_0+\overline{t}\right], (x,t')\models\mu,
    \)
    where $(x,t')\models\mu \Leftrightarrow h(x(t')) \geq 0$. W.l.o.g., we choose the initial time $t_0 = 0$. We have to show that:
    \begin{equation}\notag 
        \begin{gathered}
            \forall t \in \left[ 0,\overline{t}\right]:\
            \mathbf{T}_{\underline{t}+\tau ,\underline{t}+\tau }^{\left[ 0, \overline{t} - \underline{t}\right]} V_h\left(x\left(t\right),t\right) \geq 0 
            \Rightarrow
            (x,0) \models F_{\left[\underline{t},\overline{t}\right]}\mu
            \\
            \Leftrightarrow
            \exists t \in \left[\underline{t} , \overline{t} \right] :\ h(x(t)) \geq 0.
        \end{gathered}
    \end{equation}
     Choosing $t' = \underline{t} + \tau \Rightarrow t' \in \left[ \underline{t},\overline{t}\right],\forall \tau \in \left[ 0,\overline{t} - \underline{t}\right]$, and therefore:
    \begin{equation}\notag 
        \begin{split}
            \forall t \in \left[ 0,\overline{t}\right]:\
            &\mathbf{T}_{\underline{t}+\tau ,\underline{t}+\tau }^{\left[ 0, \overline{t} - \underline{t}\right]} V_h\left(x\left(t\right),t\right) \geq 0
            \overset{\textrm{Prop.} \ref{prop:valu_fun:monotone}}{\Longleftrightarrow}
            \\
            &\mathbf{T}_{\underline{t}+\tau ,\underline{t}+\tau }^{\left[ 0, \overline{t} - \underline{t}\right]} V_h\left(x\left(t'\right),t'\right) \geq 0 
            \overset{\eqref{eq:operator:action}}{\Longleftrightarrow}  h(x(t')) \geq 0.
        \end{split}
    \end{equation}
    This means that $\forall \tau \in \left[ 0,\overline{t} - \underline{t}\right]$:
    \begin{equation}\label{eq:prop:eventualy:toprove}
        \begin{split}
             \exists t'\in \left[ \underline{t},\overline{t}\right]: 
            h(x(t')) \geq 0 \Leftrightarrow (x,0) \models F_{\left[\underline{t},\overline{t}\right]}\mu.
        \end{split}
    \end{equation}
    \emph{(Nec. $\Leftarrow$)}
    From Eq. \eqref{eq:prop:eventualy:toprove}, for the trajectory $x\in\fundef{\mathbb{R}}{\mathcal{X}}$, which corresponds to a solution to \eqref{eq:dynamics}, consider the input signal $u\in\fundef{\mathbb{R}}{\mathcal{U}}$ that yields the value function $V_h \in \fundef{\mathcal{X}\times\mathbb{R}}{\mathbb{R}}$ \eqref{eq:vfun:def}. Then $\forall \tau \in \left[ 0,\overline{t} - \underline{t}\right], \forall t \in \left[0,\underline{t} + \tau\right]$, from Prop. \ref{prop:valu_fun:monotone}: 
    \(
         \mathbf{T}_{\underline{t}+\tau,\underline{t}+\tau}^{\left[0,\overline{t} - \underline{t}\right]} V_h(x(t),t)
         \overset{\eqref{eq:operator:action}}{=}
            V_h(x(t),t - (\underline{t}+\tau)) 
            \geq  V_h(x(\underline{t}+\tau),0) 
            \overset{\eqref{eq:hjb}}{=}
            h(x(t'))
            \geq 0
    \).
\end{proof}
\subsubsection{Proof of Prop. \ref{prop:nesting:event:alw}}
\begin{proof}
    We have
    \begin{equation}\label{thm:nesting:event:equiv:2:star}
       \begin{gathered}
            (x,t_0)\models F_{[ \underline{t}', \overline{t}']}G_{\left[\underline{t},\overline{t}\right]}\mu 
            \Leftrightarrow
            \\
            \forall t_0 \in\mathbb{R}:
            \exists t \in \left[ t_0 + \underline{t}',t_0 + \overline{t}'\right]:
            (x,t)\models G_{\left[\underline{t},\overline{t}\right]}\mu 
            {\Leftrightarrow}
            \\
            \forall t_0 \in\mathbb{R}:
            \exists t' \in \left[ t_0 + \underline{t}',t_0 + \overline{t}'\right]:
            \forall t \in \left[ t' + \underline{t},t' + \overline{t}\right]:
            \\
            h(x(t)) \geq 0 
            \Leftrightarrow
            \\
            \forall t_0 \in\mathbb{R}:
            \exists \tau \in \left[ 0, \overline{t}' - \underline{t}'\right] :  
            \\
            t' = t_0 + \underline{t}'+\tau \in
            \left[ t_0 + \underline{t}',t_0 + \overline{t}'\right]:
            \\
            \forall t \in \left[t' + \underline{t},t' + \overline{t}\right]:
            h(x(t)) \geq 0 
            \Leftrightarrow
            \\
            \forall t_0 \in\mathbb{R}:
            \exists \tau \in \Theta : 
            \forall t \in \left[ t_0+\underline{t}'+\tau  + \underline{t},t_0+\underline{t}'+\tau  + \overline{t}\right]:
            \\
            h(x(t)) \geq 0
            \Leftrightarrow
            \\
            \forall t_0 \in\mathbb{R}:
            \exists \tau \in \Theta : 
            \forall t \in \left[ t_0+\alpha(\tau),t_0+\beta(\tau)\right]:
            \\
            h(x(t)) \geq 0.
       \end{gathered}
       \tag{\ref{thm:nesting:event:equiv:2}*}
    \end{equation}
    Through the action of the operator \eqref{eq:operator:action} and Prop. \ref{prop:operator:reachable}:
    \begin{equation}\label{thm:nesting:event:equiv:1:star}
        \begin{gathered}
            \exists \tau(t_0) \in \fundef{\mathbb{R}}{\Theta}:
            \\
            \forall t \in\left[ t_0,t_0 + \beta(\tau(t_0))\right]:
            \mathbf{T}_{\alpha,\beta}^{\Theta}V_h(x,t;t_0) \geq 0
            \Leftrightarrow
            \\
            \forall t \in 
            \left[ t_0+\alpha(\tau(t_0)),t_0 + \beta(\tau(t_0))\right]:
            h(x(t)) \geq 0.
        \end{gathered}
        \tag{\ref{thm:nesting:event:equiv:1}*}
    \end{equation}
    Note now that, comparing \eqref{thm:nesting:event:equiv:1:star} with \eqref{thm:nesting:event:equiv:2:star}:
    \begin{equation}
        \forall t_0 \in\mathbb{R}:
        \exists \tau \in \Theta 
        \Leftrightarrow
        \exists \tau(t_0) \in\fundef{\mathbb{R}}{\Theta},
    \end{equation}
    which renders \eqref{thm:nesting:event:equiv:1:star}, \eqref{thm:nesting:event:equiv:2:star} identical, showing directly that \eqref{thm:nesting:event:equiv:1:star} $\Leftrightarrow$ \eqref{thm:nesting:event:equiv:2:star}, hence \eqref{thm:nesting:event:equiv:1} $\Leftrightarrow$ \eqref{thm:nesting:event:equiv:2}, concluding the proof. 
\end{proof}
\subsubsection{Proof of Thm. \ref{thm:nesting:alw:event:alw}}
\begin{proof}
   We have:
   \begin{equation}
       \begin{gathered}
           (x,t_0) \models \phi = 
            G_{[\alpha''(\tau''),\beta''(\tau'')]} F_{[\underline{t}',\overline{t}']}\psi
            \Leftrightarrow
            \\
            \forall t'' \in 
            \left[
                t_0 + \alpha''(\tau''),
                t_0 + \beta''(\tau'')
            \right]:
            \\
            \exists t' \in 
            \left[
                t'' + \underline{t}',
                t'' + \overline{t}'
            \right]:
            (x,t') \models \psi
            \overset{\eqref{thm:nesting:alw:event:alw:eq:psi}}{\Leftrightarrow}
            \\
            \forall t'' \in 
            \left[
                t_0 + \alpha''(\tau''),
                t_0 + \beta''(\tau'')
            \right]:
            \\
            \exists t' \in 
            \left[
                t'' + \underline{t}',
                t'' + \overline{t}'
            \right]:
            \\
            \forall t\in
              \left[ 
                t', t' + \beta(\tau)
            \right]:
            \mathbf{T}_{\alpha,\beta}^{\Theta}V_h(x,t;t')\geq 0.
       \end{gathered}
   \end{equation}
   Define the interval for $J\in\{1,\cdots,\bar{J}-1\}$:
   \begin{equation}
       \mathcal{I}_J = 
       \left[
            t_0 + \alpha''(\tau'') + s_{J-1}',
            t_0 + \alpha''(\tau'') + s_{J}'
       \right],
   \end{equation}
   for which:
   \begin{equation}
       \begin{gathered}
           \left[
            t_0 + \alpha''(\tau''),
            t_0 + \beta''(\tau'')
            \right]
            =
            \\
            \left(
                \bigcup_{J=1}^{\bar{J}-1}\mathcal{I}_{J}
            \right)
            \cup
            \left[
                t_0 + \alpha''(\tau'') + s_{\bar{J}-1}',
                t_0 + \beta''(\tau'')
            \right].
       \end{gathered}
   \end{equation}
   We first consider the interval $\bigcup_{J=1}^{\bar{J}-1}\mathcal{I}_{J}$:
   \begin{equation}\label{eq:thm:nesting:alw:event:alw:ind1}
       \begin{gathered}
            (x,t_0) \models \phi \Leftrightarrow
            \forall J \in 
            \{1,\cdots,\bar{J}\}:
            \\
            \forall t_J'' \in 
            \left[
                t_0 + \alpha''(\tau'') + s_{J-1}',
                t_0 + \alpha''(\tau'') + s_{J}'
            \right]:
            \\
            \exists t' \in 
            \left[
                t_J'' + \underline{t}',
                t_J'' + \overline{t}'
            \right]:
            \\
            \forall t\in
              \left[ 
                t', t' + \beta(\tau)
            \right]:
            \mathbf{T}_{\alpha,\beta}^{\Theta}V_h(x,t;t')\geq 0
            \Leftrightarrow
            \\
             \forall t_J'' \in 
             \mathcal{I}_J:
            \\
            \exists \tau' \in \left[ 0 , \overline{t}' - \underline{t}' \right]:
            t' =  t_J'' + \underline{t}' + \tau' \in
            \left[
                t_J'' + \underline{t}',
                t_J'' + \overline{t}'
            \right]:
            \\
            \forall t\in
              \left[ 
                t', t' + \beta(\tau)
            \right]:
            \mathbf{T}_{\alpha,\beta}^{\Theta}V_h(x,t;t')\geq 0.
       \end{gathered}
   \end{equation}
   However, $\forall t_J'' \in \mathcal{I}_J :\exists \tau' \in \left[ 0 , \overline{t}' - \underline{t}' \right]  \triangleq \Theta' \Leftrightarrow \exists \tau_J'\in\fundef{\mathcal{I}_J}{\Theta'}$. Choosing $\tau_J'(t_J'') \triangleq t_0 + \alpha''(\tau'') + s_J' - t_J''$ and replacing $t'$ with its experession in \eqref{eq:thm:nesting:alw:event:alw:ind1}, the above yields $ \forall t_J'' \in \mathcal{I}_J$:
   \begin{equation}\label{eq:thm:nesting:alw:event:alw:ind2}
       \begin{gathered}
            \forall t\in
              \left[ 
                t', t' + \beta(\tau)
            \right]:
            \mathbf{T}_{\alpha,\beta}^{\Theta}V_h(x,t;t')\geq 0
            \Leftrightarrow
            \\
            \forall t\in
              \left[ 
                t_J'' + \underline{t}' + \tau', t_J'' + \underline{t}' + \tau' + \beta(\tau)
            \right]:
            \\
            \mathbf{T}_{\alpha,\beta}^{\Theta}V_h(x,t;t_J'' + \underline{t}' + \tau')\geq 0
            \overset{\tau' = \tau_J'(t_J'')}\Leftrightarrow
            \\
             \forall t\in
              \left[ 
                t_0 + \alpha''(\tau'') + s_J' + \underline{t}' ,
                t_0 + \alpha''(\tau'') + s_J' + \underline{t}' + \beta(\tau)
            \right]:
            \\
            \mathbf{T}_{\alpha,\beta}^{\Theta}V_h(x,t;t_0 + \alpha''(\tau'') + s_J' + \underline{t}')\geq 0
            \overset{\mathrm{Prop. } \ref{prop:operator:reachable}}{\Leftrightarrow}
            \\
             \forall t\in
              \left[ 
                t_0 + \alpha'' + s_J' + \underline{t}' + \alpha(\tau),
                t_0 + \alpha'' + s_J' + \underline{t}' + \beta(\tau)
            \right]:
            \\
            \mathbf{T}_{\alpha,\beta}^{\Theta}V_h(x,t;t_0 + \alpha''(\tau'') + s_J' + \underline{t}')\geq 0.
       \end{gathered}
   \end{equation}
   Furthermore:
   \begin{equation}
       \begin{gathered}
           \mathbf{T}_{\alpha,\beta}^{\Theta}V_h(x,t;t_0 + \alpha'' + s_J' + \underline{t}') = 
           \\
           \begin{cases}
               V_h
               \left( 
                    x,t - \alpha(\tau) - t_0 -\alpha'' - s_J' - \underline{t}'
               \right),
               \\
               t \in 
               \left[ 
                    t_0 + \alpha'' + s_J' + \underline{t}', 
                    t_0 + \alpha'' + s_J' + \underline{t}' + \alpha(\tau)
                \right],
               \\
               \\
               h(x(t)),
               \\
               t\in
               \left[ 
                    t_0 + \alpha'' + s_J' + \underline{t}' + \alpha(\tau),
                    t_0 + \alpha'' + s_J' + \underline{t}' + \beta(\tau)
               \right]
           \end{cases}  
           \\
           \overset{\eqref{eq:thm:nesting:alw:event:alw:params}}{=}
           \begin{cases}
               V_h
               \left( 
                    x,t - t_0  - \alpha_J
               \right), 
               t\in
               \left[ 
                    t_0, t_0 + \alpha_J
               \right]
               \\
                h(x(t)), 
                t\in
               \left[ 
                    t_0 + \alpha_J, t_0 + \beta_J
               \right]
           \end{cases}
           \\
           =
           \mathbf{T}_{\alpha_J,\beta_J}^{\Theta_J}V_h(x,t;t_0).
       \end{gathered}
   \end{equation}
   Therefore, Eq. \eqref{eq:thm:nesting:alw:event:alw:ind2} yields:
   \begin{equation}
       \begin{gathered}
           (x,t_0) \models \phi \Leftrightarrow
            \forall J \in 
            \{1,\cdots,\bar{J}\}:
            \forall t_J'' \in \mathcal{I}_J
            \\
            \forall t\in
              \left[ 
                 t_0 + \alpha_J ,
                 t_0 + \beta_J
            \right]:
            \mathbf{T}_{\alpha_J,\beta_J}^{\Theta_J}V_h(x,t;t_0) \geq 0
            \\
            \overset{\mathrm{Prop. } \ref{prop:operator:reachable}}{\Leftrightarrow}
            \forall t\in
              \left[ 
                 t_0  ,
                 t_0 + \beta_J
            \right]:
            \mathbf{T}_{\alpha_J,\beta_J}^{\Theta_J}V_h(x,t;t_0) \geq 0.
       \end{gathered}
   \end{equation}
   Furthermore, note that $\alpha_1,\alpha_2,\cdots,\alpha_{\bar{J}}$ is an increasing sequence. There are two cases: 1) $t_0 + \beta_{J-1} < t_0 + \alpha_{J}$, 2) $t_0 + \beta_{J-1} \geq t_0 + \alpha_{J}$.
   \par 
   Case 1): Let $t \geq  t_0 + \alpha_{J}>t_0 + \beta_{J-1}$, implying that instances of the operator for $j<J$ have elapsed. From Prop. \ref{prop:operator:sequence}, for any $J<\bar{J}$:
   \begin{equation}
       \begin{gathered}
           \mathbf{T}_{\alpha_J,\beta_J}^{\Theta_J}V_h \leq 
           \mathbf{T}_{\alpha_{J+1},\beta_{J+1}}^{\Theta_{J+1}}V_h
           \leq \cdots \leq 
            \mathbf{T}_{\alpha_{\bar{J}},\beta_{\bar{J}}}^{\Theta_{\bar{J}}}V_h,
       \end{gathered}
   \end{equation}
   hence
   \begin{equation}
           \mathbf{T}_{\alpha_J,\beta_J}^{\Theta_J}V_h 
        \geq 0 
        \Leftrightarrow
        \underset{j\in\{J,\cdots,\bar{J}\}}{\min}
       \left\{
           \mathbf{T}_{\alpha_j,\beta_j}^{\Theta_j}V_h 
        \right\}\geq 0,
   \end{equation}
   concluding the proof for the interval $\bigcup_{J=1}^{\bar{J}-1}\mathcal{I}_{J}$. 
   \par
   Case 2)
   With $t_0 + \beta_{J-1} \geq t_0 + \alpha_{J}$, for $t\in\left[  t_0 + \alpha_{J}, t_0 + \beta_{J-1}\right]$:
   \begin{equation}
       \begin{gathered}
           \mathbf{T}_{\alpha_J,\beta_J}^{\Theta_J}V_h(x,t;t_0)  \geq 0 \Leftrightarrow  h(x(t)) \geq 0,
       \end{gathered}
   \end{equation} 
   and 
   \begin{equation}
       \begin{gathered}
           \forall t \in \left[  t_0 + \alpha_{J-1}, t_0 + \beta_{J-1}\right]:
           \\
           \mathbf{T}_{\alpha_{J-1},\beta_{J-1}}^{\Theta_{J-1}}V_h(x,t;t_0) = h(x(t))
           \overset{\alpha_J \geq \alpha_{J-1}}{\Longleftrightarrow}\\
           \forall t \in \left[  t_0 + \alpha_{J}, t_0 + \beta_{J-1}\right]:
           \\
           \mathbf{T}_{\alpha_{J-1},\beta_{J-1}}^{\Theta_{J-1}}V_h(x,t;t_0) = h(x(t)).
       \end{gathered}
   \end{equation}
   Therefore 
   \begin{equation}
       \begin{gathered}
           \mathbf{T}_{\alpha_J,\beta_J}^{\Theta_J}V_h(x,t;t_0)  \geq 0 \Leftrightarrow  h(x(t)) \geq 0
           \\
           \Leftrightarrow
           \mathbf{T}_{\alpha_{J-1},\beta_{J-1}}^{\Theta_{J-1}}V_h(x,t;t_0)\geq 0.
       \end{gathered}
   \end{equation} 
   The above two cases can be inductively shown to hold till $J =1$, concluding case 2 for the interval $\bigcup_{J=1}^{\bar{J}-1}\mathcal{I}_{J}$. 
   \par
   We finally consider the remaining interval: 
   $ \left[
        t_0 + \alpha''(\tau'') + s_{\bar{J}-1}',
        t_0 + \beta''(\tau'')
    \right]$. By virtue of \eqref{eq:thm:nesting:alw:event:alw:terminal}:
        \begin{equation}
            \begin{gathered}
                \nexists \tau_{\bar{J}}' \in \Theta:
                \alpha'' + \sum_{j=1}^{\bar{J}}\tau_j \geq \beta''
                \Leftrightarrow \\
                \forall  \tau_{\bar{J}} \in\Theta:
                \alpha'' + \sum_{j=1}^{\bar{J}}\tau_j > \beta''
                \Leftrightarrow \\
                t_0 + \alpha'' + s_{\bar{J}} >  t_0 +\beta'',
            \end{gathered}
        \end{equation}
        and
        \begin{equation}
            \begin{gathered}
                \alpha'' + \sum_{j=1}^{\bar{J}-1}\tau_j \leq \beta''
                \Leftrightarrow \\
                \alpha'' + \sum_{j=1}^{\bar{J}-1}\tau_j \leq \beta''
                \Leftrightarrow \\
                t_0 + \alpha'' + s_{\bar{J}-1} \leq  t_0 + \beta'',
            \end{gathered}
        \end{equation}
        Since the function $\tau_J'(t_J'') = t_0 + \alpha''(\tau'') + s_J' - t_J''$  is strictly decreasing, $\forall t \in \left[  t_0 + \alpha'' + s_{\bar{J}-1}, t_0 +\beta''\right]$
        \begin{equation}
            \begin{gathered}
                t_0 + \alpha'' + s_{\bar{J}-1} \leq  t <t_0 + \alpha'' + s_{\bar{J}}   \Leftrightarrow 
                \\
                \hat{\tau}\left( t_0 + \alpha'' + s_{\bar{J}-1} \right)
                \geq  
                \hat{\tau}\left(t\right)
                >
                \hat{\tau}\left(t_0 + \alpha'' + s_{\bar{J}}\right)
                 \\
                 \Leftrightarrow
                s_{\bar{J}} - s_{\bar{J}-1} = \tau_{\bar{J}}
                \geq  
                \hat{\tau}\left(t\right)
                >
                0 \Leftrightarrow
                \hat{\tau}\left(t\right)\in\Theta,
            \end{gathered}
        \end{equation}
        which shows that $\mathbf{T}_{\alpha_{\bar{J}},\beta_{\bar{J}}}^{\Theta_{\bar{J}}}V_h\geq 0$ for the time $t_0 + \alpha'' + s_{\bar{J}-1}$ implies satisfaction for the entire remaining interval, showing equivalence for the interval $I_{\bar{J}}$. The necessity part of the equivalence stems from \eqref{eq:thm:nesting:alw:event:alw:params} for $\beta_J$, where it is restricted to the value $\beta'' + \underline{t}' + \beta$. 
        Therefore, \eqref{eq:thm:nesting:alw:event:alw:1} $\Leftrightarrow$ \eqref{eq:thm:nesting:alw:event:alw:2}, concluding the proof. 
\end{proof}
\subsubsection{Proof of Thm. \ref{thm:nested:nested}}
\begin{proof}
    We have:
    \begin{equation}
        \begin{gathered}
            (x,t_0) \models \psi_1  
            =
            G_{[\alpha_1''(\tau_1''),\beta_1''(\tau_1'')]} F_{[\underline{t}_1',\overline{t}_1']}\psi_0
            \Leftrightarrow
            \\
            \forall t'' \in 
            \left[
                t_0 + \alpha_1''(\tau_1''),
                t_0 + \beta_1''(\tau_1'')
            \right]:
            \\
            \exists t' \in 
            \left[
                t'' + \underline{t}_1',
                t'' + \overline{t}_1'
            \right]:
            (x,t') \models \psi_0
            \overset{\eqref{eq:thm:nested:nested:psi0}}{\Leftrightarrow}
            \\
            \forall t'' \in 
            \left[
                t_0 + \alpha_1''(\tau_1''),
                t_0 + \beta_1''(\tau_1'')
            \right]:
            \\
            \exists t' \in 
            \left[
                t'' + \underline{t}_1',
                t'' + \overline{t}_1'
            \right]:
           \forall J_0\in\{1,\cdots,\bar{J}_0\}:
                \\
                \exists \tau_{J_0}^0 \in \fundef{\mathbb{R}}{\Theta_0}:
                \forall t \in\left[ t',t' + \beta_{J_0}\right]:
                \\
                \underset{j_0 \in \{1,\cdots,{J}_0\}}{\min} 
                \left\{
                    \mathbf{T}_{\alpha_{j_0},\beta_{J_0}}^{\Theta_0}V_h(x,t;t') 
                \right\}    
                    \geq 0.
        \end{gathered}
    \end{equation}
    Define the interval $\forall J_1 \in \{1,\cdots,\bar{J}_1\}$:
    \begin{equation}
        \begin{gathered}
            \mathcal{I}^1_{J_1}
            =
            \left[
                t_0 + \alpha_1''(\tau_1'') + s_{J_1-1}^1,
                t_0 + \beta_1''(\tau_1'') + s_{J_1}^1
            \right],
        \end{gathered}
    \end{equation}
    with 
    \begin{equation}
        s_{J_1}^1 = \sum_{j_1 = 1}^{J_1}\tau^1_{j_1}, 
        \tau_{j_1}^1 \in \left[0, \overline{t}_1' - \underline{t}_1'\right].
    \end{equation}
    for which:
   \begin{equation}
       \begin{gathered}
           \left[
            t_0 + \alpha_1''(\tau_1''),
            t_0 + \beta_1''(\tau_1'')
            \right]
            =
            \\
            \left(
                \bigcup_{J_1=1}^{\bar{J}_1-1}\mathcal{I}^1_{J_1}
            \right)
            \cup
            \left[
                t_0 + \alpha_1''(\tau_1'') + s_{\bar{J}_1-1}',
                t_0 + \beta_1''(\tau_1'')
            \right].
       \end{gathered}
   \end{equation}
   We first consider the interval $\bigcup_{J_1=1}^{\bar{J}_1-1}\mathcal{I}_{J_1}^1$:
   \begin{equation}\label{eq:thm:nesting:nesting:ind1}
       \begin{gathered}
            (x,t_0) \models \phi \Leftrightarrow
            \forall J_1 \in 
            \{1,\cdots,\bar{J}_1\}:
            \\
            \forall t_{J_1}'' \in 
            \left[
                t_0 + \alpha_1''(\tau_1'') + s^1_{J_1-1},
                t_0 + \alpha_1''(\tau_1'') + s^1_{J_1}
            \right]:
            \\
            \exists t' \in 
            \left[
                t_{J_1}'' + \underline{t}_1',
                t_{J_1}'' + \overline{t}_1'
            \right]:
            \\
            \forall J_0\in\{1,\cdots,\bar{J}_0\}:
                \\
                \exists \tau_{J_0}^0 \in \fundef{\mathbb{R}}{\Theta_0}:
                \forall t \in\left[ t',t' + \beta_{J_0}\right]:
                \\
                \underset{j_0 \in \{1,\cdots,{J}_0\}}{\min} 
                \left\{
                    \mathbf{T}_{\alpha_{j_0},\beta_{J_0}}^{\Theta_0}V_h(x,t;t') 
                \right\}    
                    \geq 0
            \Leftrightarrow \\
            \forall J_1 \in 
            \{1,\cdots,\bar{J}_1\}:
            \\
            \forall t_{J_1}'' \in 
            \left[
                t_0 + \alpha''(\tau'') + s^1_{J_1-1},
                t_0 + \alpha''(\tau'') + s^1_{J_1}
            \right]:
            \\
            \exists
            \tau_{J_1}^1\in
            \left[
                0,
                \overline{t}' - \underline{t}'
            \right]
            ,
            t'  = t_{J_1}'' + \underline{t}' +\tau_{J_1}^1
            \in 
            \left[
                t_{J_1}'' + \underline{t}_1',
                t_{J_1}'' + \overline{t}_1'
            \right]
            :
            \\
            \forall J_0\in\{1,\cdots,\bar{J}_0\}:
                \\
                \exists \tau_{J_0}^0 \in \fundef{\mathbb{R}}{\Theta_0}:
                \forall t \in\left[ t',t' + \beta_{J_0}\right]:
                \\
                \underset{j_0 \in \{1,\cdots,{J}_0\}}{\min} 
                \left\{
                    \mathbf{T}_{\alpha_{j_0},\beta_{J_0}}^{\Theta_0}V_h(x,t;t') 
                \right\}    
                    \geq 0
            \Leftrightarrow \\
            \forall J_1 \in 
            \{1,\cdots,\bar{J}_1\}:
            \\
            \forall t_{J_1}'' \in 
            \left[
                t_0 + \alpha''(\tau'') + s^1_{J_1-1},
                t_0 + \alpha''(\tau'') + s^1_{J_1}
            \right]:
            \\
            \exists
            \tau_{J_1}^1\in
            \left[
                0,
                \overline{t}' - \underline{t}'
            \right]
            \triangleq
            \Theta_1:
            \\
            \forall J_0\in\{1,\cdots,\bar{J}_0\}:
                \exists \tau_{J_0}^0 \in \fundef{\mathbb{R}}{\Theta_0}:
                \\
                \forall t \in
                \left[ 
                    t_{J_1}'' + \underline{t}' +\tau_{J_1}^1,
                    t_{J_1}'' + \underline{t}' +\tau_{J_1}^1 + \beta_{J_0}
                \right]:
                \\
                \underset{j_0 \in \{1,\cdots,{J}_0\}}{\min} 
                \left\{
                    \mathbf{T}_{\alpha_{j_0},\beta_{J_0}}^{\Theta_0}V_h(x,t;t_{J_1}'' + \underline{t}' +\tau_{J_1}^1) 
                \right\}    
                    \geq 0
            \overset{\mathrm{Prop. } \ref{prop:operator:reachable}}{\Leftrightarrow}
            \\
            \forall J_1 \in 
            \{1,\cdots,\bar{J}_1\}:
            \\
            \forall t_{J_1}'' \in 
            \left[
                t_0 + \alpha''(\tau'') + s^1_{J_1-1},
                t_0 + \alpha''(\tau'') + s^1_{J_1}
            \right]:
            \\
            \exists
            \tau_{J_1}^1\in
            \left[
                0,
                \overline{t}' - \underline{t}'
            \right]
            \triangleq
            \Theta_1:
            \\
            \forall J_0\in\{1,\cdots,\bar{J}_0\}:
                \exists \tau_{J_0}^0 \in \fundef{\mathbb{R}}{\Theta_0}:
                \\
                \forall t \in
                \left[ 
                    t_{J_1}'' + \underline{t}' +\tau_{J_1}^1+ \alpha_{J_0},
                    t_{J_1}'' + \underline{t}' +\tau_{J_1}^1 + \beta_{J_0}
                \right]:
                \\
                \underset{j_0 \in \{1,\cdots,{J}_0\}}{\min} 
                \left\{
                    \mathbf{T}_{\alpha_{j_0},\beta_{J_0}}^{\Theta_0}V_h(x,t;t_{J_1}'' + \underline{t}' +\tau_{J_1}^1) 
                \right\}    
                    \geq 0
       \end{gathered}
   \end{equation}
   However, $\forall t_{J_1}'' \in \mathcal{I}^1_{J_1} :\exists \tau_{J_1}^1 \in  \Theta_1 \Leftrightarrow \exists \tau_{J_1}^1\in\fundef{\mathcal{I}^1_{J_1}}{\Theta_1}$. Choosing $\tau_{J_1}^1(t_{J_1}'') \triangleq t_0 + \alpha''(\tau'') + s^1_{J_1} - t_{J_1}''$ the above yields:
   \begin{equation}\label{eq:thm:nesting:nesting:ind2}
       \begin{gathered}
            \forall J_1 \in 
            \{1,\cdots,\bar{J}_1\}:
            \\
            \forall t_{J_1}'' \in 
            \left[
                t_0 + \alpha''(\tau'') + s^1_{J_1-1},
                t_0 + \alpha''(\tau'') + s^1_{J_1}
            \right]:
            \\
            \forall J_0\in\{1,\cdots,\bar{J}_0\}:
                \exists \tau_{J_0}^0 \in \fundef{\mathbb{R}}{\Theta_0}:
                \\
                \forall t \in
                \left[ 
                     t_0 + \alpha'' + s^1_{J_1} + \underline{t}' + \alpha_{J_0},
                     t_0 + \alpha'' + s^1_{J_1}  + \underline{t}' + \beta_{J_0}
                \right]:
                \\
                \underset{j_0 \in \{1,\cdots,{J}_0\}}{\min} 
                \left\{
                    \mathbf{T}_{\alpha_{j_0},\beta_{J_0}}^{\Theta_0}V_h
                    (x,t;  t_0 + \alpha''(\tau'') + s^1_{J_1}  + \underline{t}' )
                \right\}    
                    \geq 0.
       \end{gathered}
   \end{equation}
   Furthermore $\forall j_0 \in \{1,\cdots,{J}_0\}$:
   \begin{equation}
       \begin{gathered}
            \mathbf{T}_{\alpha_{j_0},\beta_{j_0}}^{\Theta_0}V_h
            (x,t;  t_0 + \alpha_1''(\tau_1'') + s^1_{J_1}  + \underline{t}' )
           = 
           \\
           \begin{cases}
               V_h
               \left( 
                    x,t -t_0  - \alpha_1'' - s^1_{J_1}  - \underline{t}'
               \right),
               \\
               t \in 
               \left[ 
                    t_0 - \alpha_1'' - s^1_{J_1}  - \underline{t}',
                    t_0  - \alpha_1'' - s^1_{J_1}  - \underline{t}' + \alpha_{j_0}
                \right],
               \\
               \\
               h(x(t)),
               \\
               t\in
               \left[ 
                    t_0  - \alpha_1'' - s^1_{J_1}  - \underline{t}' + \alpha_{j_0},
                    t_0  - \alpha_1'' - s^1_{J_1}  - \underline{t}' + \beta_{J_0}
               \right]
           \end{cases}  
           \\
           \overset{\eqref{eq:thm:nesting:nesting:params}}{=}
           \begin{cases}
               V_h
               \left( 
                    x,t - t_0  - \alpha_{j_0,J_1}
               \right), 
               t\in
               \left[ 
                    t_0, t_0 + \alpha_{j_0,J_1}
               \right]
               \\
                h(x(t)), 
                t\in
               \left[ 
                    t_0 + \alpha_{j_0,J_1}, 
                    t_0 + \beta_{j_0,J_1}
               \right]
           \end{cases}
           \\
           =
           \mathbf{T}_{\alpha_{j_0,J_1},\beta_{j_0,J_1}}^{\Theta_{j_0,J_1}}V_h(x,t;t_0).
       \end{gathered}
   \end{equation}
   Therefore, Eqs. \eqref{eq:thm:nesting:alw:event:alw:ind1}, \eqref{eq:thm:nesting:alw:event:alw:ind2} yield:
   \begin{equation}
       \begin{gathered}
           (x,t_0) \models \phi \Leftrightarrow
            \\
            \forall J_1 \in 
            \{1,\cdots,\bar{J}_1\}:
            \\
            \forall t_{J_1}'' \in 
            \left[
                t_0 + \alpha''(\tau'') + s^1_{J_1-1},
                t_0 + \alpha''(\tau'') + s^1_{J_1}
            \right]:
            \\
            \forall J_0\in\{1,\cdots,\bar{J}_0\}:
                \exists \tau_{J_0}^0 \in \fundef{\mathbb{R}}{\Theta_0}:
                \\
                \forall t \in
                \left[ 
                     t_0 + \alpha_{J_0,J_1},
                     t_0 +  \beta_{J_0,J_1}
                \right]:
                \\
                \underset{j_0 \in \{1,\cdots,{J}_0\}}{\min} 
                \left\{
                    \mathbf{T}_{\alpha_{j_0,J_1},\beta_{j_0,J_1}}^{\Theta_{0,1}}V_h
                    (x,t; t_0 )
                \right\}    
                    \geq 0
       \end{gathered}
   \end{equation}
   Furthermore, note that $\forall J_0 \in \{1,\cdots,\bar{J}_0 \},\alpha_{J_0,1},\alpha_{J_0,2},\cdots,\alpha_{J_0,\bar{J}_1}$ is an increasing sequence. 
   There are two cases: 1) $t_0 + \beta_{J_0,J_1-1} < t_0 + \alpha_{J_0,J_1}$, 2) $t_0 + \beta_{J_0,J_1-1} \geq t_0 + \alpha_{J_0,J_1}$.
   \par 
   Case 1): Let $J_1:t \geq t_0 + \alpha_{J_0,J_1} > t_0 + \beta_{J_0,J_1-1}$, implying that instances of the operator for $j_1<J_1$ have elapsed. From Prop. \ref{prop:operator:sequence}, for any $J_1<\bar{J}_1$:
   \begin{equation}
       \begin{gathered}
           \underset{J_0 \in \{1,\cdots,\bar{J}_0\}}{\min}
           \left\{
                \mathbf{T}_{\alpha_{J_0,J_1},\beta_{J_0,J_1}}^{\Theta_{J_0,J_1}}V_h
            \right\}    
            \leq 
            \\
           \underset{J_0 \in \{1,\cdots,\bar{J}_0\}}{\min}
           \left\{
                \mathbf{T}_{\alpha_{J_0,J_1+1},\beta_{J_0,J_1+1}}^{\Theta_{J_0,J_1+1}}V_h
            \right\}    
            \leq 
            \\
            \underset{J_0 \in \{1,\cdots,\bar{J}_0\}}{\min}
           \left\{
                \mathbf{T}_{\alpha_{J_0,\bar{J}_1},\beta_{J_0,\bar{J}_1}}^{\Theta_{J_0,\bar{J}  _1}}V_h
            \right\} 
       \end{gathered}
   \end{equation}
   hence
   \begin{equation}
       \begin{gathered}
           \underset{J_0\in\{1,\cdots,\bar{J}_0\}}{\min}
           \left\{
               \mathbf{T}_{\alpha_{J_0,J_1},\beta_{J_0,J_1}}^{\Theta_{J_0,J_1}}V_h 
            \right\}\geq 0 
            \Leftrightarrow
            \\
            \underset{J_1\in\{j_1,\cdots,\bar{J}_1\}}{\min}
           \left\{
                \underset{J_0\in\{1,\cdots,\bar{J}_0\}}{\min}
               \left\{
                   \mathbf{T}_{\alpha_{J_0,J_1},\beta_{J_0,J_1}}^{\Theta_{J_0,J_1}}V_h 
                \right\}
            \right\}\geq 0,
       \end{gathered}
   \end{equation}
   concluding the proof for the interval $\bigcup_{J_1=1}^{\bar{J}_1-1}\mathcal{I}_{J_1}^1$. 
    Case 2)
   With $t_0 + \beta_{J_0,J_1-1} \geq t_0 + \alpha_{J_0,J_1}$, for $t\in\left[  t_0 + \alpha_{J_0,J_1}, t_0 + \beta_{J_0,J_1-1}\right]$:
   \begin{equation}
       \begin{gathered}
           \underset{J_0 \in \{1,\cdots,\bar{J}_0\}}{\min}
           \left\{
                \mathbf{T}_{\alpha_{J_0,J_1},\beta_{J_0,J_1}}^{\Theta_{J_0,J_1}}V_h
            \right\} 
            =
            h(x)
            \geq 0,
       \end{gathered}
   \end{equation} 
   and due to the overlap of the intervals 
   \begin{equation}
       \begin{gathered}
            h(x) = 
           \underset{J_0 \in \{1,\cdots,\bar{J}_0\}}{\min}
           \left\{
                \mathbf{T}_{\alpha_{J_0,J_1-1},\beta_{J_0,J_1-1}}^{\Theta_{J_0,J_1-1}}V_h
            \right\} 
            \geq 0,
       \end{gathered}
   \end{equation} 
   concluding case 2 for the interval $\bigcup_{J_1=1}^{\bar{J}_1-1}\mathcal{I}_{J_1}^1$.
   \par
   Finally consider the the remaining interval 
   $ \left[
        t_0 + \alpha_1''(\tau_1'') + s^1_{\bar{J}_1-1},
        t_0 + \beta_1''(\tau_1'')
    \right]$. By virtue of \eqref{eq:thm:nesting:nested:terminal}:
        \begin{equation}
            \begin{gathered}
                \nexists \tau_{\bar{J}_1}^1 \in \Theta:
                \alpha'' + \sum_{j_1=1}^{\bar{J}_1}\tau_{j_1}^1 \geq \beta_1''
                \Leftrightarrow \\
                \forall  \tau_{\bar{J}_1} \in\Theta:
                \alpha_1'' + \sum_{j=1}^{\bar{J}}\tau_{j_1}^1 > \beta_1''
                \Leftrightarrow \\
                t_0 + \alpha_1'' + s_{\bar{J_1}}^1 >  t_0 +\beta_1'',
            \end{gathered}
        \end{equation}
        and
        \begin{equation}
            \begin{gathered}
                \alpha_1'' + \sum_{j_1=1}^{\bar{J}_1-1}\tau_{j_1}^1 \leq \beta_1''
                \Leftrightarrow \\
                \alpha_1'' + \sum_{j_1=1}^{\bar{J}-1}\tau_{j_1}^1 \leq \beta_1''
                \Leftrightarrow \\
                t_0 + \alpha_1'' + s_{\bar{J}_1-1}^1 \leq  t_0 + \beta_1'',
            \end{gathered}
        \end{equation}
        Since the function $\tau_{J_1}^1(t_{J_1}'') = t_0 + \alpha_1''(\tau_1'') + s_{J_1}^1 - t_{J_1}''$  is strictly decreasing, $\forall t \in \left[  t_0 + \alpha_1'' + s_{\bar{J}_1-1}^1, t_0 +\beta_1''\right]$
        \begin{equation}
            \begin{gathered}
                t_0 + \alpha_1'' + s_{\bar{J}_1-1}^1 \leq  t <t_0 + \alpha'' + s_{\bar{J}_1}   \Leftrightarrow 
                \\
                \hat{\tau}\left( t_0 + \alpha_1'' + s_{\bar{J}_1-1}^1 \right)
                \geq  
                \hat{\tau}\left(t\right)
                >
                \hat{\tau}\left(t_0 + \alpha_1'' + s_{\bar{J}_1}^1\right)
                 \\
                 \Leftrightarrow
                s_{\bar{J}_1} - s_{\bar{J}-1}^1 = \tau_{\bar{J}_1}
                \geq  
                \hat{\tau}\left(t\right)
                >
                0 \Leftrightarrow
                \hat{\tau}\left(t\right)\in\Theta_1,
            \end{gathered}
        \end{equation}
        which shows that $\underset{J_0\in\{1,\cdots,\bar{J}_0\}}{\min}\{\mathbf{T}_{\alpha_{J_0,\bar{J}_1},\beta_{J_0,\bar{J}_1}}^{\Theta_{J_0,\bar{J}_1}}V_h\}\geq 0$ from the time $t_0 + \alpha_1'' + s_{\bar{J}_1-1}^1$ implies satisfaction for the entire remaining interval, where necessity stems from restricting the final time instance in \eqref{eq:thm:nesting:nesting:params}.
        Therefore, \eqref{eq:thm:nesting:nested:1} $\Leftrightarrow$ \eqref{eq:thm:nesting:nested:2}, concluding the proof. 
\end{proof}

\subsubsection{Proof of Thm. \ref{thm:nested:nested:eventually}}
\begin{proof}
    We have:
    \begin{equation}
        \begin{gathered}
            (x,t_0) \models \phi  = F_{[\underline{t},\overline{t}]}\psi\Leftrightarrow
            \\
            \exists t \in
            \left[ 
                t_0 + \underline{t}, t_0 + \overline{t}
            \right]: (x,t) \models \psi 
            \Leftrightarrow
            \\
            \exists \tau \in \left[ 0, \overline{t} - \underline{t} \right]:
            (x,t_0 + \underline{t}+\tau) \models \psi \overset{\eqref{eq:thm:event:nesting:nested:1}}{\Leftrightarrow}
            \\
            \exists \tau \in \left[ 0, \overline{t} - \underline{t} \right]:
            \\
            \forall J_N \in \{1,\cdots,\bar{J}_N\},
            \exists \tau_{J_N}^N \in\fundef{\mathbb{R}}{\Theta_N}:
            \\
            \forall J_{N-1} \in \{1,\cdots,\bar{J}_{N-1}\},
            \exists \tau_{J_{N-1}}^{N-1} \in\fundef{\mathbb{R}}{\Theta_{N-1}}:
             \\
            \vdots 
            \\
            \forall J_0 \in \{1,\cdots,\bar{J}_0\},
            \exists \tau_{J_0}^0\in\fundef{\mathbb{R}}{\Theta_0}
            :
            \\
            \underset{j_{N}\in\{1,\cdots \bar{J}_{N}\}}{\min}
            \left\{
                \cdots
                \underset{j_{0}\in\{1,\cdots \bar{J}_{0}\}}{\min}
                \left\{
                    \mathbf{T}_{\alpha,\beta}^{\Theta}V_h(x,t;t_0 + \underline{t}+\tau)
                \right\}
            \right\}
            \geq 0.
        \end{gathered}
    \end{equation}
    Furthermore $\forall n \in \{1,\cdots,N\}, \forall  J_n \in \{1,\cdots,\bar{J}_n\}$ through \eqref{eq:operator:action}:
    \begin{equation}
        \begin{gathered}
            \mathbf{T}_{\alpha,\beta}^{\Theta}V_h(x,t;t_0 + \underline{t}+\tau) = 
            \\
            \begin{cases}
                V_h(x,t - \alpha-t_0 - \underline{t}-\tau), \\ 
                \qquad t  \in \left[t_0,\alpha+t_0 + \underline{t}+\tau\right]
                \\
                h(x) , \\ 
                \quad t\in \left(\alpha+t_0+\underline{t}+\tau,\beta+t_0+\underline{t}+\tau\right]
            \end{cases}
            \\
            \overset{\eqref{eq:thm:event:nesting:nested:params}}{=}
            \begin{cases}
                V_h(x,t - \alpha'-t_0), & t  \in \left[t_0,\alpha'+t_0 \right]
                \\
                h(x) , & t\in \left(\alpha'+t_0,\beta'+t_0\right]
            \end{cases}
            \\
            \overset{\eqref{eq:operator:action}}{=}
            \mathbf{T}_{\alpha',\beta'}^{\Theta'}V_h(x,t;t_0),
        \end{gathered}
    \end{equation}
    which concludes the proof. 
\end{proof}

\subsubsection{Proof of Prop. \ref{prop:prelim:thm:formula:satisfaction}}
\begin{proof}
 Consider a trajectory of system \eqref{eq:dynamics} $x\in\fundef{\mathbb{R}}{\mathcal{X}}$ under an admissible input, as well as the constituents of the operators in Eq. \eqref{eq:neted:operator:ex}. Let any two complete paths of $\mathcal{T}^T$ be denoted as $\mathbf{p}_{k_1}, \mathbf{p}_{k_2},k_1,k_2\in\{1,\cdots,K\}$, given by \eqref{eq:complete:path}. Owing to the tree structure, they share a common subset of operator nodes on $\mathcal{T}^T$, i.e.:
 \begin{equation}\notag 
     \begin{split}
         \mathbf{p}_{k_1} &= \mathbb{X}_0 \widetilde{\Theta}_1  \cdots  \mathbb{X}_{n-1}  \widetilde{\Theta}_{n} \mathbb{X}_{n}' \cdots \widetilde{\Theta}_{N_f^{k_1}-1}' \mathbb{X}_{N_f^{k_1}-1}' \widetilde{\Theta}_{N_f^{k_1}}' \mathbb{X}_{N_f^{k_1}}'
         \\
                          &= \mathbf{p}_{0:n} \mathbf{p}_{n:N_f^{k_1}}',
         \\
         \mathbf{p}_{k_2} &= \mathbb{X}_0 \widetilde{\Theta}_1  \cdots  \mathbb{X}_{n-1}  \widetilde{\Theta}_{n} \mathbb{X}_{n} \cdots \widetilde{\Theta}_{N_f^{k_2}-1} \mathbb{X}_{N_f^{k_1}-1} \widetilde{\Theta}_{N_f^{k_2}} \mathbb{X}_{N_f^{k_2}}
         \\
                          &= \mathbf{p}_{0:n} \mathbf{p}_{n:N_f^{k_2}},
     \end{split}
 \end{equation}
 where $\mathbf{p}_{i:j}$ denotes the path segment from $i$ to $j$, $\widetilde{\Theta}_{n} \in\{\wedge,\vee\}$, $n \in \left\{1,\cdots,\min\{ N_f^{k_1}, N_f^{k_2} \}\right\}$ and $\widetilde{\Theta}_{i} \neq \widetilde{\Theta}_{i}',   \mathbb{X}_{i}\neq \mathbb{X}_{i}', \forall i \in \left\{n,\cdots,\min\{ N_f^{k_1}, N_f^{k_2} \}\right\}$. $\widetilde{\Theta}_{n}$ corresponds exactly to the LCA of $l_{k_1},l_{k_2}$, denoted by $v$. We remind the reader that, following the arguments of Prop. \ref{prop:complete_paths}, the two differing sub-paths of $\mathbf{p}_{k_1}, \mathbf{p}_{k_2}$ correspond to the operator nestings for $i\in\{1,2\}$:
 \begin{equation}\notag 
     \begin{split}
         \mathbf{T}_{\alpha^{n+1},\beta^{n+1}}^{\Theta^{n+1}}V_{h_{k_i}} \triangleq \mathbf{T}_{\alpha_{k_i}^{n+1},\beta_{k_i}^{n+1}}^{\Theta_{k_i}^{n+1}}  \cdots \mathbf{T}_{\alpha_{k_i}^{N_f^{k_i}},\beta_{k_i}^{N_f^{k_i}}}^{\Theta_{k_i}^{N_f^{k_i}}}V_{h_{k_i}},
         % \\
         % \mathbf{T}_{\hat{\alpha}^{n+1},\hat{\beta}^{n+1}}^{\hat{\Theta}^{n+1}}V_{h_{k_2}} \triangleq \mathbf{T}_{\alpha_{k_2}^{n+1},\beta_{k_2}^{n+1}}^{\Theta_{k_2}^{n+1}}  \cdots \mathbf{T}_{\alpha_{k_2}^{N_f^{k_2}},\beta_{k_2}^{N_f^{k_2}}}^{\Theta_{k_1}^{N_f^{k_1}}}V_{h_{k_2}},
     \end{split}
 \end{equation}
 while the common segment of the paths corresponds to:
 \begin{equation}\notag 
     \mathbf{T}_{\alpha',\beta'}^{\Theta'} = 
     \mathbf{T}_{\alpha_1,\beta_1}^{\Theta_1}  \cdots \mathbf{T}_{\alpha_{n-1},\beta_{n-1}}^{\Theta_{n-1}},
 \end{equation}
and as in Prop. \ref{prop:complete_paths}, $\mathbf{T}_{\alpha^{n},\beta^{n}}^{\Theta^{n}}$ is the identity operator corresponding to $\widetilde{\Theta}_{n} \in\{\wedge,\vee\}$ (or $v$ in the notation of $\mathcal{G}_{\textrm{LOG}}$). 
For the two sub-paths $ i \in \{1,2\}:(x,0) \cong \mathbf{p}_{k_i}$ is equivalent ($\Leftrightarrow$) to:
\begin{equation}\notag 
    \begin{split}
        \left( 
            (x,0) \cong \mathbf{p}_{0:n}
        \right) 
        \wedge
        \left(
            \forall t \in \left[ \underline{t}_{n},  \overline{t}_{n}\right]:
            (x,t) \cong \mathbf{p}_{n:N_f^{k_i}}'
        \right),
    \end{split}
\end{equation}
where $\left[ \underline{t}_{n},  \overline{t}_{n}\right]$ is the time interval coding for the $n$-th point on the segments (see Def. 13 in \citebr{10388467}). The above yields:
\begin{equation}
    \begin{split}
        &\left( 
                (x,0) \cong \mathbf{p}_{k_1}
            \right) 
            \wedge
            \left( 
                (x,0) \cong \mathbf{p}_{k_2}
            \right)
            \Leftrightarrow
            \left( 
                (x,0) \cong \mathbf{p}_{0:n}
            \right)
            \\
            &\wedge
            \left[
                \left(
                    \forall t \in \left[ \underline{t}_{n},  \overline{t}_{n}\right]:
                    (x,t) \cong \mathbf{p}_{n:N_f^{k_1}}'
                \right)
                \right.
                \\
            &\qquad\qquad\qquad\left.
                \wedge
                \left(
                    \forall t \in \left[ \underline{t}_{n},  \overline{t}_{n}\right]:
                    (x,t) \cong \mathbf{p}_{n:N_f^{k_2}}'
                \right)            
            \right],
    \end{split}
    \notag 
\end{equation}
which following the arguments of Prop. \ref{prop:complete_paths} is equivalent to:
\begin{equation}\notag 
    \begin{split}
        \mathbf{T}_{\alpha',\beta'}^{\Theta'} \left(  \mathbf{T}_{\alpha^{n+1},\beta^{n+1}}^{\Theta^{n+1}}V_{h_{k_1}}(x,t) \wedge \mathbf{T}_{\hat{\alpha}^{n+1},\hat{\beta}^{n+1}}^{\hat{\Theta}^{n+1}}V_{h_{k_2}}(x,t) \right) \geq 0.
    \end{split}
\end{equation}
However, according to  Props. \ref{thm:nesting:always}, \ref{thm:nesting:eventually}, this is equal to:
\begin{equation}\notag 
    \min
    \left\{ 
        \mathbf{T}_{\alpha_{k_1},\beta_{k_1}}^{\Theta_{k_1}}V_{h_{k_1}}(x,t),
        \mathbf{T}_{\alpha_{k_2},\beta_{k_2}}^{\Theta_{k_2}}V_{h_{k_2}}(x,t)
    \right\}
    \geq 0,
\end{equation}
after implementing the nesting rules of Thms. \ref{thm:nesting:alw:event:alw}, \ref{thm:nested:nested}, \ref{thm:nesting:always}, \ref{thm:nesting:eventually}. Necessity and sufficiency comes from Prop. \ref{prop:complete_paths}. The same holds for $\vee,\max$.
\end{proof}
\end{document}